\documentclass{article}

\usepackage{microtype}
\usepackage{graphicx}
\usepackage{subcaption}
\usepackage{booktabs} 
\usepackage{multirow}
\usepackage{makecell}

\usepackage{thm-restate}

\usepackage{hyperref}

\usepackage[accepted]{icml2026}

\usepackage{amsmath}
\usepackage{amssymb}
\usepackage{mathtools}
\usepackage{amsthm}

\usepackage{xcolor}
\usepackage{amssymb}
\usepackage{pifont}

\newcommand{\bluecheck}{{\color{teal}\checkmark}}
\newcommand{\xmark}{\color{red}\ding{55}}
\newcommand{\restatedtitle}{}
\newtheorem*{restatedlemma}{\restatedtitle}

\newenvironment{restatelemma}[2]{
  \renewcommand{\restatedtitle}{Lemma~\ref{#1} (\textnormal{#2})}%
  \begin{restatedlemma}
}{\end{restatedlemma}}

\newcommand{\restatedproptitle}{}
\newtheorem*{restatedproposition}{\restatedproptitle}

\newenvironment{restateproposition}[2]{
  \renewcommand{\restatedproptitle}{Proposition~\ref{#1} (\textnormal{#2})}%
  \begin{restatedproposition}
}{\end{restatedproposition}}

\newcommand{\restatedtheoremtitle}{}
\newtheorem*{restatedtheorem}{\restatedtheoremtitle}

\newcommand{\maybeparen}[1]{
  \if\relax\detokenize{#1}\relax
  \else
    \ (\textnormal{#1})
  \fi
}

\newenvironment{restatetheorem}[2]{
  \renewcommand{\restatedtheoremtitle}{Theorem~\ref{#1}\maybeparen{#2}}%
  \begin{restatedtheorem}
}{\end{restatedtheorem}}

\usepackage[capitalize,noabbrev]{cleveref}

\theoremstyle{plain}
\newtheorem{theorem}{Theorem}[section]
\newtheorem{proposition}[theorem]{Proposition}
\newtheorem{lemma}[theorem]{Lemma}

\theoremstyle{definition}

\theoremstyle{remark}
\newtheorem{remark}[theorem]{Remark}

\usepackage[textsize=tiny]{todonotes}

\icmltitlerunning{Unbiased and Second-Order-Free Training for High-Dimensional PDEs}

\begin{document}

\twocolumn[
  \icmltitle{Unbiased and Second-Order-Free Training for High-Dimensional PDEs}



  \icmlsetsymbol{equal}{*}

  \begin{icmlauthorlist}
    \icmlauthor{Jaemin Seo}{cau}
    \icmlauthor{Surin Lee}{cau}
    \icmlauthor{Jae Yong Lee}{cau}
  \end{icmlauthorlist}

  \icmlaffiliation{cau}{Department of Artificial Intelligence, Chung-Ang University, Seoul, Republic of Korea}
  
  \icmlcorrespondingauthor{Jae Yong Lee}{jaeyong@cau.ac.kr}

  \icmlkeywords{Machine Learning, ICML}

  \vskip 0.3in
]



\printAffiliationsAndNotice{}  

\begin{abstract}
  Deep learning methods based on backward stochastic differential equations (BSDEs) have emerged as competitive alternatives to physics-informed neural networks (PINNs) for solving high-dimensional partial differential equations (PDEs). By leveraging probabilistic representations, BSDE approaches can avoid the curse of dimensionality and often admit second-order-free training objectives that do not require explicit Hessian evaluations. It has recently been established that the commonly used Euler–Maruyama (EM) time discretization induces an intrinsic bias in BSDE training losses. While high-order schemes such as Heun can fully eliminate this bias, such schemes re-introduce second-order spatial derivatives and incur substantial computational overhead. In this work, we provide a principled analysis of EM-induced loss bias and propose an unbiased, second-order-free training framework that preserves the computational advantages of BSDE methods. Our code is available at \url{https://github.com/seojaemin22/Un-EM-BSDE}.
\end{abstract}

\section{Introduction}

The Deep Backward Stochastic Differential Equation (BSDE) method \citep{raissi2024forward, HanJentzenE2018PNAS} serves as a critical complementary approach to Physics-Informed Neural Networks (PINNs) research \citep{raissi2019physics} by addressing the curse of dimensionality in high-dimensional PDEs. While standard PINN embeds the PDE residual in its loss function, the Deep BSDE method leverages the well-known mathematical connection between PDEs and stochastic differential equations (SDEs), reformulating the problem to enable efficient stochastic sampling along paths instead of relying on high-dimensional grid discretization, thereby offering a scalable and robust alternative for complex high-dimensional systems.

The BSDE methodology for solving high-dimensional PDEs approximates the solution by discretizing the paths of SDEs \citep{han2025brief}. This process inherently requires enforcing temporal consistency through a self-consistency loss, where the solution values at two consecutive time steps are trained to satisfy the discretized BSDE \citep{nusken2021interpolating}. Typically, the BSDE is discretized in time using the Euler–Maruyama (EM) integration scheme. This time discretization yields the so-called EM-BSDE method, which is then solved by imposing a self-consistency loss. However, this class of solvers suffers from a significant discretization-induced bias issue, which severely degrades optimization performance. As a solution, \citet{park2025integration} proposed a new Heun-BSDE method, combining the Stratonovich SDE formulation with the Stochastic Heun integration scheme. The Heun-BSDE approach eliminates the discretization-induced bias inherent in the conventional EM-BSDE method, leading to a substantial improvement in the accuracy of BSDE-based solvers.

\begin{table}[t]
\caption{Summary of bias properties, computational requirements, and training time.}
\label{training_time}
\begin{center}
\begin{small}
\resizebox{\columnwidth}{!}{
\begin{tabular}{lccc}
\toprule
\textbf{Method} & \textbf{Unbiased} & \textbf{2nd-order-free} & \textbf{Time} \\
\midrule
 EM-BSDE \citep{raissi2024forward} & \xmark & \bluecheck  & 1x \\
 Shotgun \citep{xu2025deep} & \xmark & \bluecheck & 0.75x  \\
 Multi-Shot EM-BSDE & \xmark & \bluecheck & 1.74x  \\
 Heun-BSDE \citep{park2025integration} & \bluecheck & \xmark & 42.91x  \\
 FS-PINNs \citep{park2025integration} & \bluecheck & \xmark & 32.07x  \\
 \midrule
 \textbf{Un-EM-BSDE (ours)} & \bluecheck & \bluecheck & \textbf{1.79x} \\
\bottomrule
\end{tabular}
}
\end{small}
\end{center}
\vskip -0.1in
\end{table}

However, the performance of the Heun-BSDE method is limited by two main issues. Firstly, the Heun-BSDE method fails to demonstrate a clear enhancement compared to the FS-PINNs (Forward SDE-based Physics-Informed Neural Networks) method, which minimizes the standard PINN loss using sampled data along the identical forward SDE trajectory. The primary reason for this limited improvement is that the Heun-BSDE approach fundamentally still involves minimizing an estimate of the squared PDE residual. Secondly, Heun-BSDE is significantly more expensive than the baseline EM-BSDE and also incurs a higher training time compared to FS-PINNs. As reported in Table~\ref{training_time}, Heun-BSDE requires $42.91\times$ the training time of EM-BSDE, compared to $32.07\times$ for FS-PINNs. This increased computational cost stems from the need for multiple second-order derivative calculations. Specifically, while FS-PINNs require computing the Hessian only once per PDE residual calculation, the Heun-BSDE method requires two Hessian computations for a single step within the stochastic Heun scheme. Consequently, given the lack of substantial performance gains and the increased computational burden relative to FS-PINNs, the Heun-BSDE method presents limited practical advantage.

In this paper, we present a fundamental solution to the discretization-induced bias problem, avoiding the limited empirical gains and the heavy second-order-derivative overhead, by introducing a randomized unbiased loss estimator formed as the product of two independent one-step errors:

\begin{itemize}
    \item We propose an \textbf{Un}biased \textbf{EM-BSDE} method (\textbf{Un-EM-BSDE}), which enables training free of bias. While motivated by the general philosophy of combining complementary objectives in \citet{hu2025bias}, our method introduces a novel loss formulation specifically designed to eliminate bias induced by discretization in EM-BSDE training.
    \item We theoretically demonstrate that the proposed no-bias estimator shows no significant difference in terms of variance compared to the conventional biased EM-BSDE method. Furthermore, we suggest the potential to more efficiently control variance and enhance solver stability by leveraging a Shotgun-inspired training strategy \citep{xu2025deep}.
    \item We experimentally confirm that the proposed method achieves the accuracy level of the Heun-BSDE and FS-PINNs, even though its training time remains similar to the EM-BSDE, as it does not require direct computation of the second-order derivative term. Furthermore, we will demonstrate that any biased method can be converted into an unbiased one via the proposed debiasing mechanism, thereby yielding improved performance.
\end{itemize}

\section{Related Work}

\noindent\textbf{Physics-Informed Neural Networks}. PINNs have recently gained significant attention as a powerful, mesh-free methodology for solving high-dimensional PDEs \citep{raissi2019physics, sirignano2018dgm, karniadakis2021physics, hu2024tackling}.
PINNs parameterize the solution of the PDE using a neural network and directly minimize the residual of the PDE along with the boundary and initial condition residuals as a loss function through gradient descent.
This approach has demonstrated impressive results across a wide range of problems \citep{raissi2018hidden, kissas2020machine}. However, PINNs often suffer from trainability and accuracy difficulties for high-frequency or multiscale solutions \citep{wang2022and, rathore2024challenges, wang2022is}.
Stable and accurate high-dimensional PINN solvers therefore remain an active research direction.

\noindent\textbf{Deep BSDE method}.
On the other hand, methodologies based on BSDEs are presented as a complementary approach for solving high-dimensional PDEs \citep{zhang2026deep}. 
These methods seek to overcome the curse of dimensionality by recasting the PDE as a forward-backward SDE and deriving a trajectory-based loss.
Several works, including \citet{han2017deep,HanJentzenE2018PNAS,HurePhamWarin2020MComp, BeckEtAl2021SISCDeepSplitting, GermainPhamWarin2022SISC}, and \citet{NeufeldSchmockerWu2025CNSNS}, utilize separate networks to predict the solution and gradient of each time step.
In contrast, \citet{zhang2022fbsde, raissi2024forward}, \citet{xu2025deep}, and \citet{doi:10.1137/24M169312X} employ a unified model, thereby achieving a better representation of the gradient continuity across steps and enabling the retrieval of a visual trajectory.
This BSDE-based approach offers different training dynamics compared to the optimization issues faced by PINNs. Therefore, research aimed at understanding the performance differences between the two methodologies is a crucial research direction in the development of high-dimensional PDE solvers.

\noindent\textbf{Randomized estimation for high-order derivatives}. 
To improve the computational efficiency of PINNs in high-dimensional PDE problems, stochastic estimation methodologies \citep{hutchinson1989stochastic, bekas2007estimator} that reduce the cost of high-order differentiation are being actively studied.
Among these, a representative approach is to incorporate stochastic estimation within Automatic Differentiation to obtain low-cost approximations of higher-order derivatives \citep{oktay2021randomized, shi2024stochastic}, or to modify the network architecture and optimization scheme so that stochastic estimation becomes an integral component of the PINN training process \citep{hu2024hutchinson, hu2024tackling, hu2025bias}. Our method employs a different form of stochastic estimation to eliminate the bias.

\section{Preliminary}

\subsection{Problem setup}

We study learning-based approximation of the solution $u : \mathcal{T} \times \Omega \to \mathbb{R}$, where $\mathcal{T} = [0, T_{\text{end}}]$ and $\Omega \subset \mathbb{R}^d$, for non-linear terminal-value PDEs. We write the PDE in operator form
\begin{equation}
    \label{PDE}
    \mathcal{L}[u](t,x) = \phi(t, x, u(t, x), \nabla u(t, x)),
\end{equation}
where
\begin{multline}
    \mathcal{L}[u](t,x) :=\partial_t u(t,x) + \mu(t,x) \cdot \nabla u(t,x) \\
    + \frac{1}{2} \mathrm{Tr}[\sigma^T (\nabla^2 u) \sigma](t,x),
\end{multline}
for $(t,x)\in \mathcal{T} \times \Omega$. We impose the terminal condition $u(T_{\text{end}}, x) = g(x)$ for $x\in \Omega$. We denote by $\nabla u$ and $\nabla^2 u$ the spatial gradient and Hessian, respectively. Here $\mu : \mathcal{T} \times \Omega \to \mathbb{R}^d$ and $\sigma : \mathcal{T} \times \Omega \to \mathbb{R}^{d\times d}$ represent the drift and diffusion coefficients, and $\phi : \mathcal{T} \times \Omega \times \mathbb{R} \times \mathbb{R}^d \to \mathbb{R}$ captures the nonlinear terms depending on $(u, \nabla u)$.

Our objective is to learn an accurate approximation of the PDE solution within a neural-network hypothesis class. Specifically, we parameterize $u_\theta : \mathcal{T} \times \Omega \to \mathbb{R}$ where $u_\theta$ is a neural network with parameters $\theta \in \Theta$, and seek $\theta^*$ such that $u_{\theta^*}$ satisfies the terminal condition and approximately solves the operator equation $\mathcal{L}[u] = \phi$ over $\mathcal{T} \times \Omega$. 
In many applications, however, the primary quantity of interest is the initial-time value $u(0, x_0)$ for a prescribed initial state $x_0$.
This setting is common in BSDE-based high-dimensional terminal-value solvers, which often target $u(0,x_0)$ rather than a uniform approximation of the full solution field.
Accordingly, we focus on training objectives and solver designs that are tailored to accurately recovering this initial-time quantity. 
This perspective also clarifies why we do not adopt standard collocation-based PINN formulations: enforcing the PDE uniformly across the entire spatio-temporal domain typically requires dense sampling in $(t, x)$, which can be computationally demanding and often difficult to optimize, especially as the dimension $d$ grows.

A natural alternative is to avoid global domain-wide enforcement and instead impose the governing equation along representative trajectories of an underlying stochastic process. Specifically, by Itô's formula, one can verify that the solution $u$ of $\eqref{PDE}$ is associated with the following forward-backward stochastic differential equation (FBSDE) system
\begin{equation}
    \label{FBSDE}
    \begin{cases}
        dX_t = \mu(t, X_t) dt + \sigma(t, X_t) dW_t, \\
        dY_t = \phi(t, X_t, Y_t, Z_t) dt + Z_t^T \sigma(t, X_t) dW_t,
    \end{cases}
\end{equation}
for $t \in \mathcal{T}$, where $X_0 = x_0$, $Y_{T_{\text{end}}} = g(X_{T_{\text{end}}})$, and $\{W_t : t \in \mathcal{T}\}$ is a $d$-dimensional standard Brownian motion. In particular, the correspondence is given by
\begin{equation}
    Y_t = u(t, X_t), ~~~~ Z_t = \nabla u(t, X_t).
\end{equation}

In the continuous-time formulation, the training objective is expressed through quantities defined on $\mathcal{T}$ and typically involves expectations and time-integrated terms along the stochastic dynamics. In practice, these continuous-time objects are evaluated via numerical approximation, which motivates introducing a finite set of time points to approximate the underlying integrals. Throughout this work, we adopt a standard time-stepping approximation: we select an integer $N$ and consider the grid $t_n := n\Delta t$ with $\Delta t := T_{\text{end}}/N$. This choice is made for clarity of exposition and computational convenience, and the same construction extends in a straightforward manner to non-uniform grids or alternative quadrature/time-sampling schemes.

Correspondingly, we work with Brownian increments $\Delta W_n := W_{t_{n+1}} - W_{t_n} \sim \mathcal{N}(0, \Delta t I_d)$ on the chosen grid and represent the dynamics through step-wise update maps. Specifically, we discretize the FBSDE using the Euler–Maruyama scheme and apply the corresponding one-step update
\begin{align}
    &F_n(x) := x + \mu(t_n, x) \Delta t + \sigma(t_n, x) \Delta W_n, \\
    &\begin{aligned}
        B_n(x; u) :=~& u(t_n, x) + \phi_u(t_n, x) \Delta t  \\
        & + \nabla u(t_n, x)^T \sigma(t_n, x) \Delta W_n,
    \end{aligned}
\end{align}
where $\phi_u(t, x) := \phi(t, x, u(t, x), \nabla u(t, x))$. Using $F_n$ and $B_n$, we introduce a one-step self-consistency error for the Euler-Maruyama discretization:
\begin{equation}
    \mathrm{err}_n^{\mathrm{EM}}(x; u) := \frac{u(t_{n+1}, F_n(x)) - B_n(x; u)}{\Delta t} .
\end{equation}
The corresponding one-step EM-BSDE loss is defined by:
\begin{equation}
    \ell_{\mathrm{EM}}(\theta; n, x) := \mathbb{E}\left[|\mathrm{err}_n^{\mathrm{EM}}(x; u_\theta)|^2 \right],
\end{equation}
where the expectation is taken with respect to $\Delta W_n$. Aggregating along the forward trajectory $\hat{X}_{n+1} = F_n(\hat{X}_n)$ with $\hat{X}_0 = x_0$, we obtain the EM-BSDE objective:
\begin{equation}
    L_{\mathrm{EM}}(\theta) := \frac{1}{N}\sum_{n=0}^{N-1}\mathbb{E}_{\hat{X}_n}\left[\ell_{\mathrm{EM}}(\theta; n, \hat{X}_n) \right].
\end{equation}
Based on this construction, \citet{raissi2024forward} approximate the solution of \eqref{FBSDE} by minimizing $L_{\mathrm{EM}}(\theta)$ over the network parameters $\theta$.

\subsection{Bias of the BSDE loss}

Recent results have shown that one-step self-consistency losses, when constructed from time-discretized BSDE updates, can exhibit a \emph{discretization-induced bias} at finite step size $\Delta t$, in the sense that they are biased estimators of the corresponding continuous-time self-consistency objective implied by the PDE/FBSDE correspondence. In particular, the EM-BSDE loss is generally biased.

\begin{lemma}[Bias of the EM-BSDE loss \citep{park2025integration}]
\label{bias_EM}
Suppose that $\mu, \sigma$ are bounded and $u_\theta$ is $C^{1,2}$. Then we have:
\begin{equation}
    \begin{aligned}
        \ell_{\mathrm{EM}}(\theta; n, x) =~& 
        \left( [\mathcal{L}[u_\theta] - \phi_{u_\theta}](t_n, x) \right)^2 \\
        &+ \frac{1}{2}\mathrm{Tr}[(\sigma^T(\nabla^2 u_\theta) \sigma)^2](t_n, x)  \\
        &+ O(\Delta t^{1/2}) .
    \end{aligned}
\end{equation}
\end{lemma}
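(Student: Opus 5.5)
The plan is to Taylor-expand the numerator of $\mathrm{err}_n^{\mathrm{EM}}(x;u_\theta)$ around $(t_n,x)$ and compute its second moment over $\Delta W_n$ term by term. To lighten notation, write $u=u_\theta$ and $\sigma=\sigma(t_n,x)$, and evaluate all coefficients at $(t_n,x)$. Set $\delta := \mu\Delta t + \sigma\Delta W_n$, so that $F_n(x) = x+\delta$. Expanding $u(t_{n+1},x+\delta)$ to second order in space and first order in time gives
\begin{equation*}
u(t_{n+1},x+\delta) = u + \partial_t u\,\Delta t + \nabla u^T\delta + \tfrac12 \delta^T(\nabla^2 u)\delta + R,
\end{equation*}
with a remainder $R$. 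Subtracting $B_n(x;u)$ cancels the zeroth-order term, and the linear noise term $\nabla u^T\sigma\Delta W_n$ cancels exactly against the martingale term in $B_n$. Since $\mu$ and $\sigma$ are bounded, $\delta=O(\Delta t^{1/2})$ in every $L^p$, so the remainder satisfies $R=o(\Delta t)$ in $L^2$.

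Dividing by $\Delta t$ and writing $\Delta W_n=\sqrt{\Delta t}\,\xi$ with $\xi\sim\mathcal N(0,I_d)$, the cross terms of order $\Delta t^{3/2}$ become $O(\Delta t^{1/2})$ after division. This yields
\begin{equation*}
\mathrm{err}_n^{\mathrm{EM}} = \big(\partial_t u + \mu\cdot\nabla u - \phi_u\big) + \tfrac12 \xi^T A\xi + O(\Delta t^{1/2}), \qquad A := \sigma^T(\nabla^2 u)\sigma .
\end{equation*}
Next I add and subtract $\tfrac12\mathrm{Tr}A$, which rewrites this as $a + \tfrac12(\xi^TA\xi-\mathrm{Tr}A) + O(\Delta t^{1/2})$, where $a := [\mathcal L[u]-\phi_u](t_n,x)$ is the PDE residual.

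Squaring and taking the expectation, the cross term vanishes because $\mathbb E[\xi^TA\xi]=\mathrm{Tr}A$. The fluctuation term is handled by the Gaussian identity $\mathrm{Var}(\xi^TA\xi)=2\mathrm{Tr}(A^2)$ for symmetric $A$. Together these give $\tfrac14\cdot 2\,\mathrm{Tr}(A^2)=\tfrac12\mathrm{Tr}[(\sigma^T\nabla^2u\,\sigma)^2]$. The remaining $O(\Delta t^{1/2})$ terms contribute only $O(\Delta t^{1/2})$ to the second moment, by Cauchy--Schwarz against the $O(1)$ leading terms.

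The main obstacle is controlling the remainder rigorously. With only $C^{1,2}$ regularity, I would use the integral (mean-value) form of Taylor's theorem with continuity moduli of $\partial_t u$ and $\nabla^2 u$. This yields $o(1)$ after division, rather than a clean $O(\Delta t^{1/2})$. To get the stated rate, I would assume, as is implicitly done in \citet{park2025integration}, that the relevant derivatives are bounded and locally Lipschitz (true for smooth networks with the usual activations). Under that assumption, the third-order spatial and mixed time–space terms give $\mathbb E|R|^2/\Delta t^2 = O(\Delta t)$, i.e.\ an $O(\Delta t^{1/2})$ contribution in $L^2$. Gaussian moment bounds on $\xi$ then make all the cross terms uniform.
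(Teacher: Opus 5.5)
Your proposal is correct and follows essentially the same route as the paper: Taylor-expand $u_\theta(t_{n+1},F_n(x))$, cancel the $\nabla u_\theta^T\sigma\Delta W_n$ term against $B_n$, rewrite the error as the residual plus $\tfrac12(\xi_n^T H\xi_n-\mathrm{Tr}H)$ plus $O(\Delta t^{1/2})$, and conclude with $\mathbb{E}[\xi^TH\xi]=\mathrm{Tr}H$ and $\mathbb{E}[(\xi^TH\xi-\mathrm{Tr}H)^2]=2\mathrm{Tr}[H^2]$. Your caveat about the remainder is also well taken. The paper's own proof tacitly uses more than $C^{1,2}$: it writes $\partial_{tt}u_\theta$ and ``third- and higher-order terms''. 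So the stated $O(\Delta t^{1/2})$ rate does need the extra regularity you describe, and under $C^{1,2}$ alone the remainder is only $o(1)$.
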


One way to address this issue is to modify the underlying stochastic calculus used in the formulation. Specifically, \citet{park2025integration} proposes rewriting the dynamics in Stratonovich form and adopting a stochastic Heun integrator, which yields an alternative one-step update and the corresponding self-consistency error. The resulting Heun-BSDE loss is bias-free. See Appendix~\ref{Appendix_DeepBSDE} for the explicit definition of $\ell_{\mathrm{Heun}}$.

\begin{lemma}[Unbiasedness of the Heun-BSDE loss \citep{park2025integration}]
\label{bias_Heun}
Suppose that $\mu, \sigma$, and $\phi_{u_\theta}$ are all in $C^{1,1}$, and $u_\theta$ is in $C^{1,3}$. Then we have:
\begin{equation}
    \begin{aligned}
        \ell_{\mathrm{Heun}}(\theta; n, x) =~& \left( [\mathcal{L}[u_\theta] - \phi_{u_\theta}](t_n, x) \right)^2 \\
        &+ O(\Delta t^{1/2}) .
    \end{aligned}
\end{equation}
\end{lemma}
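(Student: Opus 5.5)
The plan is to mirror the proof of Lemma~\ref{bias_EM}. That proof isolates the quadratic-in-$\Delta W_n$ fluctuation $\tfrac12\bigl(\Delta W_n^T\sigma^T(\nabla^2u_\theta)\sigma\,\Delta W_n-\Delta t\,\mathrm{Tr}[\sigma^T(\nabla^2 u_\theta)\sigma]\bigr)$. This term has mean zero and second moment $\tfrac12\mathrm{Tr}[(\sigma^T(\nabla^2u_\theta)\sigma)^2]\Delta t^2$, and after division by $\Delta t^2$ it produces exactly the bias term. The goal is to show that in the Heun-BSDE self-consistency error this fluctuation cancels \emph{pathwise}. What remains should be the PDE residual times $\Delta t$ plus terms of order $\Delta t^{3/2}$ in $L^2$.

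\textbf{Step 1: rewrite the system and write out the scheme.} First I rewrite the FBSDE in Stratonovich form. The forward drift becomes $\tilde\mu=\mu-\tfrac12 c(\sigma)$, where $c$ is the usual Itô--Stratonovich correction built from $\partial_x\sigma$. The backward equation becomes $dY=(\phi_u-\tfrac12\mathrm{Tr}[\sigma^T\nabla^2u\,\sigma]+\text{correction})\,dt+\nabla u^T\sigma\circ dW$. Then I write the stochastic Heun step, as in Appendix~\ref{Appendix_DeepBSDE}: a predictor $\bar x=x+\tilde\mu\Delta t+\sigma(t_n,x)\Delta W_n$, followed by the corrector that averages the drift and diffusion at $(t_n,x)$ and $(t_{n+1},\bar x)$. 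I do the same for the backward update $B^{\mathrm{Heun}}_n$, whose diffusion part is $\tfrac12\bigl(\nabla u(t_n,x)^T\sigma(t_n,x)+\nabla u(t_{n+1},\bar x)^T\sigma(t_{n+1},\bar x)\bigr)\Delta W_n$.

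\textbf{Step 2: expand both sides to second order.} Using $u_\theta\in C^{1,3}$, I Taylor expand $u_\theta(t_{n+1},X^{\mathrm{Heun}}_{n+1})$ about $(t_n,x)$ up to second order in the spatial increment, with a third-order remainder. The spatial increment is $\delta=\sigma\Delta W_n+O(\Delta t)$ plus $\tfrac12(\partial\sigma\,\sigma\Delta W_n)\Delta W_n$. Using the $C^{1,1}$ assumptions on $\mu,\sigma,\phi_{u_\theta}$, I similarly expand the averaged diffusion term of $B^{\mathrm{Heun}}_n$. Its evaluation at $\bar x$ contributes $\tfrac12\Delta W_n^T\sigma^T\nabla^2u_\theta\,\sigma\Delta W_n$ plus $\tfrac12\nabla u_\theta^T(\partial\sigma\,\sigma\Delta W_n)\Delta W_n$.

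\textbf{Step 3: show the pathwise cancellation.} Subtracting, the linear terms $\nabla u_\theta^T\sigma\Delta W_n$ cancel exactly. The quadratic form $\tfrac12\Delta W_n^T\sigma^T\nabla^2u_\theta\,\sigma\Delta W_n$ appears identically on both sides, so it cancels pathwise rather than only in expectation. The $\partial\sigma$ cross terms cancel against the Taylor expansion of $u_\theta$ along the corrected forward step. The Stratonovich drift corrections then recombine to give $\Delta t\,[\mathcal L[u_\theta]-\phi_{u_\theta}](t_n,x)$. Hence $\mathrm{err}^{\mathrm{Heun}}_n=[\mathcal L[u_\theta]-\phi_{u_\theta}](t_n,x)+R_n$, where $R_n$ collects cubic terms in $\Delta W_n$, mixed $\Delta t\,\Delta W_n$ terms, and Lipschitz remainders, all divided by $\Delta t$.

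\textbf{Step 4: bound the remainder.} I square the error and take expectations. Gaussian moment bounds give $\mathbb E[R_n^2]=O(\Delta t)$, since every term of $R_n$ is of size $\Delta t^{3/2}$ before division by $\Delta t$. By Cauchy--Schwarz, the cross term $2[\mathcal L u_\theta-\phi_{u_\theta}]\,\mathbb E[R_n]$ is $O(\Delta t^{1/2})$, which gives the stated expansion.

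The main obstacle is Step 3. The bookkeeping of the $\partial_x\sigma$ terms from the predictor stage, and of the Stratonovich drift correction in both the forward and backward equations, must match exactly. Otherwise a residual quadratic-in-$\Delta W_n$ term survives and reintroduces an $O(1)$ bias. I would first verify the cancellation in the additive-noise case, where $\sigma$ is independent of $x$ and only the Hessian term matters. I would then add the $\partial\sigma$ terms, using only the $C^{1,1}$ (Lipschitz-derivative) regularity to control their remainders.
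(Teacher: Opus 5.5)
Your proposal is correct and follows essentially the same route as the paper. You expand the predictor/corrector maps to obtain the forward correction $\tfrac12\sum_i\partial_{x_i}\sigma(\sigma\xi_n)_i\xi_n\,\Delta t$ and the matching $\tfrac12\xi_n^T H_{u_\theta}\xi_n$ and $\partial_{x_i}\sigma$ terms in $B_n^{\mathrm{Heun}}$. These cancel pathwise against the Taylor expansion of $u_\theta(t_{n+1},F_n^{\mathrm{Heun}}(x))$, and $\mu^{\mathrm{Heun}}$ and $\phi^{\mathrm{Heun}}_{u_\theta}$ recombine into $\mathcal{L}[u_\theta]-\phi_{u_\theta}$.

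Your Step 4 bounds $\mathbb{E}[R_n^2]=O(\Delta t)$ and handles the cross term by Cauchy--Schwarz. This is a slightly more careful version of the paper's last step, which just squares $\mathrm{err}_n^{\mathrm{Heun}}=[\mathcal{L}[u_\theta]-\phi_{u_\theta}](t_n,x)+O(\Delta t^{1/2})$.
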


However, the Stratonovich-based Heun construction comes with a practical limitation. The Itô-to-Stratonovich conversion introduces correction terms that involve second-order spatial derivatives, so implementing the Heun-BSDE loss may require evaluating $\nabla^2 u_\theta$. This substantially increases the per-iteration cost and makes it comparable to PDE-residual-based methods.

Another way to address this issue is to keep the same one-step time-stepping scheme, but to reduce the discretization-induced bias by leveraging multiple independent Monte Carlo samples at each time step. In the Shotgun construction, we evaluate the one-step self-consistency error using $M$ i.i.d.\ Brownian increments and aggregate them into a single averaged error \citep{xu2025deep}. To formalize this, for any random quantity $\xi$, we define the Shotgun averaging operator as the empirical average of $M$ i.i.d.\ replicates:
\begin{equation}
    \mathrm{Shot}_M[\xi] := \frac{1}{M}\sum_{m=1}^{M} \xi_m ,
\end{equation}
where $\xi_1,\ldots,\xi_M \stackrel{\mathrm{i.i.d.}}{\sim} \xi$. Using this operator, we now define the one-step Shotgun loss by
\begin{equation}
    \ell^M_{\mathrm{SG}}(\theta; n, x) := \mathbb{E}\left[\left| \mathrm{Shot}_M\left[\mathrm{err}_n^{\mathrm{SG}}(x; u_\theta)\right] \right|^2 \right] ,
\end{equation}
where $\mathrm{err}_n^{\mathrm{SG}}(x; u_\theta)$ denotes the Brownian-increment-free one-step self-consistency error (see Appendix~\ref{Appendix_DeepBSDE} for the precise definition), and the expectation is taken with respect to the $M$ i.i.d.\ Brownian increments used in $\operatorname{Shot}_M[\cdot]$.

This multi-shot averaging leads to a reduced discretization-induced bias. We analyze the Shotgun construction of \citet{xu2025deep} and prove the following bias expansion: compared to the standard EM-BSDE loss, the bias of $\ell_{\mathrm{SG}}^M$ scales down by a factor of $1/M$.

\begin{proposition}[Bias of the Shotgun loss]
\label{bias_shotgun}
Suppose that $\mu, \sigma$ are bounded and $u_\theta$ is $C^{1,2}$. Let $\tau>0$ denote the time increment used in the Shotgun construction. Then we have:
\begin{equation}
    \begin{aligned}
        \ell^M_{\mathrm{SG}}(\theta; n, x) =~& \left( [\mathcal{L}[u_\theta] - \phi_{u_\theta}](t_n, x) \right)^2 \\
        &+ \frac{1}{2M}\mathrm{Tr}[(\sigma^T(\nabla^2 u_\theta) \sigma)^2](t_n, x)  \\
        &+ O(\tau) .
    \end{aligned}
\end{equation}
\end{proposition}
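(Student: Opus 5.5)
The plan is to reduce the Shotgun loss to a second-moment computation for an average of i.i.d.\ copies of a single one-step error, after a Taylor expansion in the time increment $\tau$. I will take $\mathrm{err}_n^{\mathrm{SG}}(x;u)$ in the form I expect from Appendix~\ref{Appendix_DeepBSDE}. I write the Brownian increment as $\sqrt{\tau}\,\xi$ with $\xi\sim\mathcal N(0,I_d)$, and take $\mathrm{err}_n^{\mathrm{SG}}(x;u)$ to be $\tau^{-1}$ times the EM-type mismatch
\[
u(t_n+\tau, x+\mu\tau+\sigma\sqrt{\tau}\xi)-u(t_n,x)-\phi_u(t_n,x)\tau,
\]
with the first-order stochastic term removed. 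This removal may be explicit, by subtracting $\nabla u^T\sigma\sqrt{\tau}\xi$, or implicit, through the symmetric structure of the Shotgun error. What matters is that no $O(\tau^{-1/2})$ term linear in $\xi$ survives. Indeed, such a term would contribute $|\sigma^T\nabla u|^2/(M\tau)$ to the loss, and the expansion could not hold, so the first step is to confirm this cancellation from the definition.

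Next I Taylor-expand $u_\theta$ around $(t_n,x)$ to second order in space and first order in time, using the $C^{1,2}$ regularity and the boundedness of $\mu,\sigma$. Let $S:=\sigma^T(\nabla^2 u_\theta)\sigma$ and $A:=[\mathcal L[u_\theta]-\phi_{u_\theta}](t_n,x)$. The expansion gives
\[
\mathrm{err}_n^{\mathrm{SG}}=A+\tfrac12\bigl(\xi^T S\xi-\mathrm{Tr}\,S\bigr)+R_\tau(\xi).
\]
Here $\mathrm{Tr}\,S$ is added and subtracted so that the generator $\mathcal L$ appears. The remainder $R_\tau$ collects the cross terms $\mu^T\nabla^2u\,\sigma\xi\,\sqrt\tau$, the time-derivative increment and the higher-order spatial terms. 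It satisfies $\mathbb E|R_\tau|^2=O(\tau)$, and its leading part of order $\sqrt\tau$ is odd in $\xi$.

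Now I average $M$ i.i.d.\ copies. Write $Q_m:=\tfrac12(\xi_m^TS\xi_m-\mathrm{Tr}\,S)$, so each $Q_m$ has mean zero. By the Gaussian identity $\mathrm{Var}(\xi^TS\xi)=2\,\mathrm{Tr}(S^2)$ for symmetric $S$, we get $\mathrm{Var}(Q_m)=\tfrac12\mathrm{Tr}(S^2)$. Expanding the square,
\[
\mathbb E\Bigl|A+\tfrac1M\sum_m Q_m+\tfrac1M\sum_m R_{\tau,m}\Bigr|^2 = A^2+\tfrac{1}{2M}\mathrm{Tr}(S^2)+\text{cross terms}+\mathbb E\Bigl|\tfrac1M\sum_m R_{\tau,m}\Bigr|^2 ,
\]
because $\mathbb E\bigl[\bigl(\sum_m Q_m\bigr)^2\bigr]=M\,\mathrm{Var}(Q_1)$ by independence. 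The last term is $O(\tau)$.

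The main obstacle is the cross terms: I need $O(\tau)$, not just $O(\sqrt\tau)$. I will split $R_\tau=\sqrt\tau\,R^{(1)}(\xi)+O(\tau)$, where $R^{(1)}$ is odd in $\xi$ (it is linear or cubic in $\xi$). Then $\mathbb E[AR^{(1)}]=0$ and $\mathbb E[Q_mR^{(1)}_m]=0$, because each is the expectation of an odd function of a Gaussian vector. For $m\neq m'$, independence and the zero mean of $Q_m$ make $\mathbb E[Q_mR^{(1)}_{m'}]$ vanish. The remaining contributions are $O(\tau)$ by Cauchy--Schwarz. If only $C^{1,2}$ is available, the third-order spatial term is not a genuine Taylor term, and I would instead bound its contribution via uniform continuity of $\nabla^2u_\theta$ on the relevant compact set. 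Since this weaker regularity gives only $o(1)$ in general, I would state the $O(\tau)$ rate under the implicit assumption of one extra derivative, as is done in Lemma~\ref{bias_EM}-type expansions.
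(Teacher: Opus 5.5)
Your argument is correct. It shares the paper's skeleton: expand the one-step error as $A+\tfrac12(\xi^TS\xi-\mathrm{Tr}\,S)+$ remainder, then expand the square of the $M$-shot average using independence and $\mathrm{Var}(\xi^TS\xi)=2\,\mathrm{Tr}(S^2)$. Where you differ is in how the $O(\sqrt{\tau})$ terms are removed.

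The paper uses the concrete antithetic definition $\mathrm{err}_n^{\mathrm{SG}}=\bigl(u(t_n+\tau,F_n^+(x))+u(t_n+\tau,F_n^-(x))-2u(t_n,x)\bigr)/(2\tau)-\phi_u(t_n,x)$. In the sum of the two branches, every term odd in $\Delta w_n$ cancels realization by realization. This covers the linear term $\pm\nabla u^T\sigma\Delta w_n$ and the $\tau^{3/2}$-order cross and cubic terms in $\epsilon^\pm$. The result is $\mathrm{err}_n^{\mathrm{SG}}=A+\tfrac12(\xi^TH\xi-\mathrm{Tr}\,H)+O(\tau)$ pathwise, so squaring the average needs no cross-term analysis. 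In your notation this is the case $R^{(1)}\equiv 0$, because the symmetrized error is even in $\xi$. The step you flag as the main obstacle is therefore vacuous for Shotgun, and the cancellation you asked to confirm from the definition does hold.

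Your route instead allows a surviving odd remainder $\sqrt{\tau}R^{(1)}$ and removes its contribution only in expectation, through vanishing odd Gaussian moments and independence across shots. That buys generality. The same argument applies verbatim to $\mathrm{err}_n^{\mathrm{EM}}$, where the linear term is subtracted explicitly. It shows that $\ell_{\mathrm{EM}}$ and $\ell^M_{\mathrm{SEM}}$ also have $O(\Delta t)$ remainders at the level of exact expectations, which is the point of the remark after Lemma~\ref{bias_EM}. The paper's pathwise cancellation is what it relies on to claim $O(\tau)$ control per realization, without exact expectations, and hence to distinguish Shotgun from EM.

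Your regularity caveat is fair and applies equally to the paper's proof, which invokes higher-order Taylor terms under only $C^{1,2}$. However, one extra derivative is not enough for an $O(\tau)$ rate in either route. With $u_\theta\in C^{1,3}$, the time difference quotient and the cubic Taylor remainder are only $o(1)$ and $o(\sqrt{\tau})$. You need Lipschitz control one level higher, e.g.\ on $\partial_t u_\theta$ in $t$ and on the third spatial derivatives. You also need global growth bounds, since the Gaussian increments are unbounded and uniform continuity on a compact set does not suffice by itself.
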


Similarly, we can apply the same multi-shot averaging operator to the standard EM one-step error. This yields the Multi-Shot EM one-step loss.

\begin{equation}
    \ell^M_{\mathrm{SEM}}(\theta; n, x) := \mathbb{E}\left[\left| \mathrm{Shot}_M\left[\mathrm{err}_n^{\mathrm{EM}}(x; u_\theta)\right] \right|^2 \right] .
\end{equation}

This Multi-Shot EM loss exhibits the same bias reduction as the Shotgun loss.
We show that this bias is reduced by a factor of $1/M$ in the following proposition.

\begin{proposition}[Bias of the Multi-Shot EM loss]
\label{bias_multishot_EM}
Suppose that $\mu, \sigma$ are bounded and $u_\theta$ is $C^{1,2}$. Then we have:
\begin{equation}
    \begin{aligned}
        \ell^M_{\mathrm{SEM}}(\theta; n, x) =~& \left( [\mathcal{L}[u_\theta] - \phi_{u_\theta}](t_n, x) \right)^2 \\
        &+ \frac{1}{2M}\mathrm{Tr}[(\sigma^T(\nabla^2 u_\theta) \sigma)^2](t_n, x)  \\
        &+ O(\Delta t^{1/2}) .
    \end{aligned}
\end{equation}
\end{proposition}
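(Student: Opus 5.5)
Our plan is to write the one-step EM error as a deterministic drift term plus a zero-mean fluctuation, and then use the fact that averaging $M$ i.i.d.\ copies leaves the mean unchanged while dividing the variance by $M$. For a single increment $\Delta W_n=\sqrt{\Delta t}\,\xi$ with $\xi\sim\mathcal N(0,I_d)$, we Taylor expand $u_\theta(t_{n+1},F_n(x))$ around $(t_n,x)$ (Itô--Taylor, to second order in space and first order in time). Set $a:=\sigma^T(\nabla^2u_\theta)\sigma(t_n,x)$ and $r:=[\mathcal L[u_\theta]-\phi_{u_\theta}](t_n,x)$. The first-order stochastic term $\nabla u^T\sigma\Delta W_n$ cancels exactly against the corresponding term in $B_n$. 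We then expect
\begin{equation*}
\mathrm{err}_n^{\mathrm{EM}}(x;u_\theta)= r+\tfrac12\bigl(\xi^T a\,\xi-\mathrm{Tr}\,a\bigr)+R_n,
\end{equation*}
where the second term is $\tfrac{1}{2\Delta t}(\Delta W_n^T a\,\Delta W_n-\Delta t\,\mathrm{Tr}\,a)$ and the remainder satisfies $\mathbb E|R_n|^2=O(\Delta t)$, so $\|R_n\|_{L^2}=O(\Delta t^{1/2})$. This is exactly the decomposition behind Lemma~\ref{bias_EM}, and we reuse its remainder control under the same hypotheses ($\mu,\sigma$ bounded, $u_\theta\in C^{1,2}$).

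Next we apply $\mathrm{Shot}_M$. With i.i.d.\ copies $\xi_1,\dots,\xi_M$ we get
\begin{equation*}
\mathrm{Shot}_M[\mathrm{err}_n^{\mathrm{EM}}]= r+\frac1M\sum_{m=1}^M\eta_m+\bar R,
\end{equation*}
where $\eta_m:=\tfrac12(\xi_m^Ta\,\xi_m-\mathrm{Tr}\,a)$ and $\bar R$ is the average of the remainders. The $\eta_m$ are i.i.d.\ with mean zero, and the Gaussian quadratic-form identity gives $\mathrm{Var}(\eta_m)=\tfrac14\cdot2\,\mathrm{Tr}(a^2)=\tfrac12\mathrm{Tr}(a^2)$. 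Hence
\begin{equation*}
\mathbb E\Bigl[\bigl(r+\tfrac1M\textstyle\sum_m\eta_m\bigr)^2\Bigr]=r^2+\tfrac{1}{2M}\mathrm{Tr}\bigl[(\sigma^T(\nabla^2u_\theta)\sigma)^2\bigr](t_n,x).
\end{equation*}
The cross term vanishes by independence and zero mean. This gives the two leading terms of the claim.

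It remains to absorb $\bar R$. By Minkowski's inequality, $\|\bar R\|_{L^2}\le\max_m\|R_{n,m}\|_{L^2}=O(\Delta t^{1/2})$. Then Cauchy--Schwarz bounds the cross term $2\,\mathbb E[(r+\bar\eta)\bar R]$ by $2\|r+\bar\eta\|_{L^2}\|\bar R\|_{L^2}=O(\Delta t^{1/2})$, since $\|r+\bar\eta\|_{L^2}$ is bounded independently of $\Delta t$. The term $\mathbb E\bar R^2=O(\Delta t)$ is of higher order. Together these give the stated $O(\Delta t^{1/2})$ error.

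The main obstacle is the first step: establishing the uniform $L^2$ remainder bound $O(\Delta t^{1/2})$ when $u_\theta$ is only $C^{1,2}$. Here the Taylor remainder involves the modulus of continuity of $\partial_t u_\theta$ and $\nabla^2u_\theta$ evaluated at random points $F_n(x)$. Our first approach is to take this bound directly from the proof of Lemma~\ref{bias_EM}, noting that it is a statement about a single increment, and so transfers unchanged to each replicate.
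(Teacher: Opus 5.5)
Your proposal is correct and follows essentially the same route as the paper. Like the paper, you reuse the single-increment EM error expansion \eqref{EM-error} from the proof of Lemma~\ref{bias_EM}, average the $M$ i.i.d.\ zero-mean fluctuations $\tfrac12(\xi_i^T H\xi_i-\mathrm{Tr}[H])$, and use $\mathbb{E}\bigl[(\xi^T H\xi-\mathrm{Tr}[H])^2\bigr]=2\,\mathrm{Tr}[H^2]$ to obtain the $1/(2M)$ factor; your Minkowski and Cauchy--Schwarz treatment of the averaged remainder is slightly more careful than the paper, which simply moves the $O(\Delta t^{1/2})$ term outside the expectation.
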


The proof of Proposition~\ref{bias_shotgun} and Proposition~\ref{bias_multishot_EM} are presented in Appendix~\ref{proof_existing_BSDE_bias}

Overall, existing approaches exhibit a clear trade-off between bias and efficiency. Heun-BSDE removes the discretization-induced bias at the one-step level, but the resulting objective is significantly more expensive to evaluate. The Shotgun and Multi-Shot constructions offer a cheaper mitigation by averaging over $M$ independent samples, reducing the bias roughly at a $1/M$ rate, but they do not eliminate it. These limitations motivate an alternative debiasing strategy that preserves the efficiency of EM-type updates while eliminating the discretization-induced bias.

\section{Proposed Method}

In this section, we propose an Un-EM-BSDE loss that removes the discretization-induced bias while keeping the computational cost comparable to the standard EM-BSDE objective. We also show that the resulting estimator has a variance comparable to that of the original EM-BSDE loss.

\begin{figure*}[t]
  \centering
  \includegraphics[width=1.8\columnwidth]{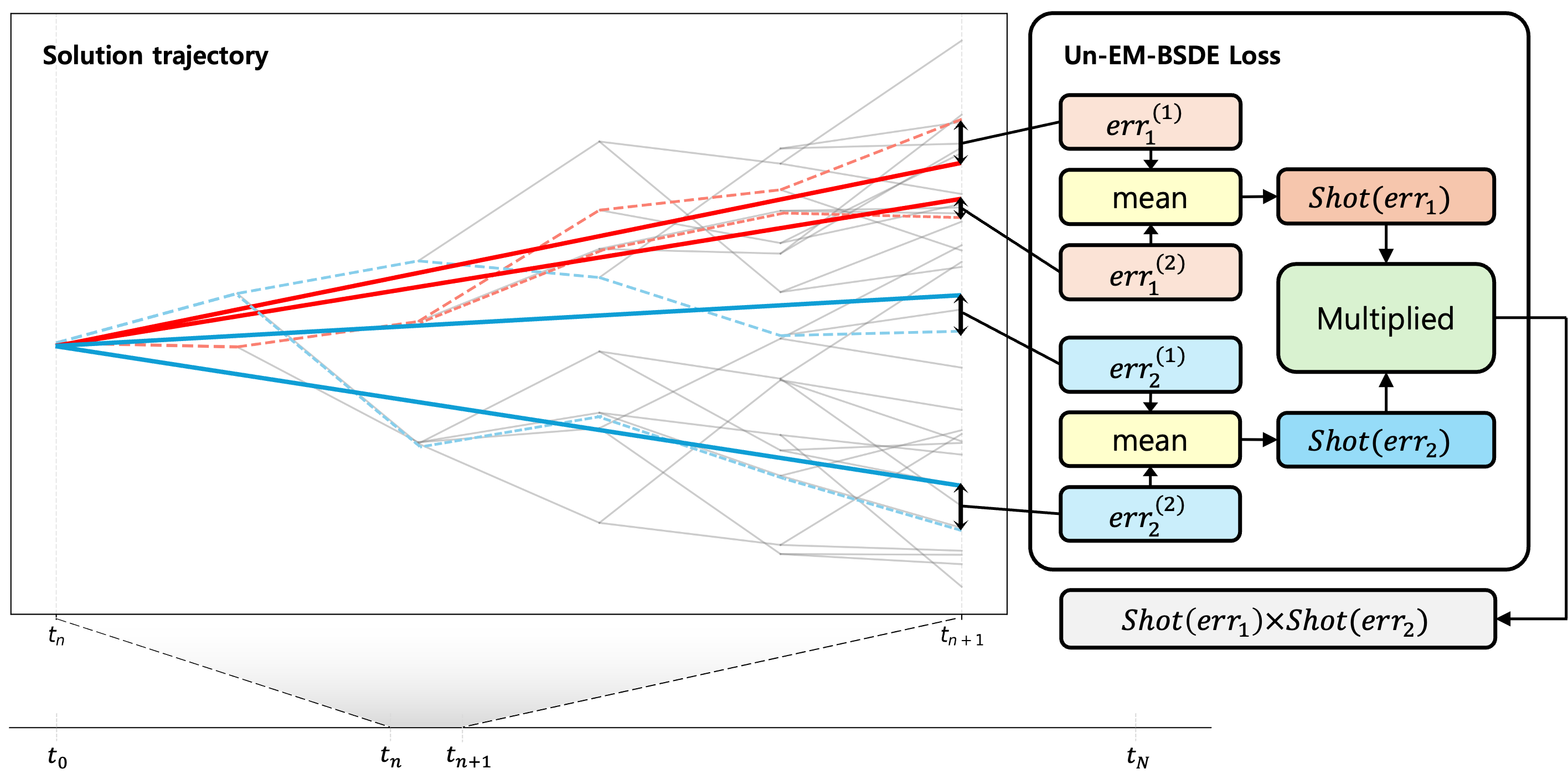}
  \caption{Illustration of the one-step loss computation for Un-EM-BSDE with $M_1 = M_2 = 2$. The many gray curves visualize the randomness of solution trajectories (possible SDE sample paths). The dashed curves denote a ground-truth trajectory, whereas the solid curves indicate the one-step predicted trajectory. The errors from the two independently formed groups are multiplied to produce an unbiased estimator.}
\end{figure*}

\subsection{Un-EM-BSDE}

Motivated by a bias-variance decomposition perspective and inspired by related constructions in \citet{hu2025bias}, we propose the following Un-EM-BSDE objective:

\begin{equation}
    \begin{aligned}
        \ell^{M_1, M_2}_{\mathrm{UEM}}(\theta, n, x) = \mathbb{E}[& ( \mathrm{Shot}_{M_1}[\mathrm{err}_n^{\mathrm{EM}}(x; u_\theta)]) \\
        &\times (\mathrm{Shot}_{M_2}[\mathrm{err}_n^{\mathrm{EM}}(x; u_\theta)])] .
    \end{aligned}
\end{equation}

The key idea is to form the objective using independent draws of the one-step error, so that the expectation is factorized. 
This is closely related to the classical sample-splitting principle in statistics, where independent samples are assigned to different factors so that expectations behave cleanly. In our setting, this principle is incorporated into the EM-BSDE objective in a problem-specific way to eliminate discretization-induced bias while preserving the second-order-free computational structure.

\begin{lemma}[Unbiasedness of the Un-EM-BSDE loss]
\label{bias_unbiased_EM}
Suppose that $\mu, \sigma$ are bounded and $u_\theta$ is $C^{1,2}$. Then we have
\begin{equation}
    \begin{aligned}
        \ell^{M_1, M_2}_{\mathrm{UEM}}(\theta; n, x) =~& \left(\mathcal{L}[u_\theta](t_n, x) - \phi_{u_\theta}(t_n, x)\right)^2 \\
        &+ O(\Delta t^{1/2}) .
    \end{aligned}
\end{equation}
\end{lemma}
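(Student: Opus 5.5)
The plan is to use the independence of the two groups of Brownian increments to factor the expectation, and then to show that the mean of a single EM one-step error equals the PDE residual up to $O(\Delta t^{1/2})$. By construction, the $M_1$ increments inside $\mathrm{Shot}_{M_1}[\cdot]$ and the $M_2$ increments inside $\mathrm{Shot}_{M_2}[\cdot]$ are mutually independent, and $x$ is fixed. Hence
\begin{equation*}
\ell^{M_1,M_2}_{\mathrm{UEM}}(\theta;n,x) = \mathbb{E}\big[\mathrm{Shot}_{M_1}[\mathrm{err}_n^{\mathrm{EM}}]\big]\,\mathbb{E}\big[\mathrm{Shot}_{M_2}[\mathrm{err}_n^{\mathrm{EM}}]\big] = \big(\mathbb{E}[\mathrm{err}_n^{\mathrm{EM}}(x;u_\theta)]\big)^2 ,
\end{equation*}
where the second equality uses linearity of expectation and the fact that each replicate has the same law as $\mathrm{err}_n^{\mathrm{EM}}$. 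This step is exactly where the squared-variance term of Lemma~\ref{bias_EM} disappears. A squared single average would retain a variance contribution of order $1/M$, as in Proposition~\ref{bias_multishot_EM}; a product of independent factors retains none.

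The main step is the first-moment expansion
\begin{equation*}
\mathbb{E}[\mathrm{err}_n^{\mathrm{EM}}(x;u_\theta)] = r(t_n,x) + O(\Delta t^{1/2}), \qquad r:=\mathcal{L}[u_\theta]-\phi_{u_\theta}.
\end{equation*}
To obtain it, write $\delta := \mu\Delta t + \sigma\Delta W_n$, with coefficients evaluated at $(t_n,x)$, and Taylor expand $u_\theta(t_{n+1}, x+\delta)$ around $(t_n,x)$ to first order in $t$ and second order in $x$, with a $C^{1,2}$ remainder. This gives
\begin{equation*}
u_\theta(t_{n+1},F_n(x)) = u_\theta + \partial_t u_\theta\,\Delta t + \nabla u_\theta^T\delta + \tfrac12 \delta^T\nabla^2 u_\theta\,\delta + R.
\end{equation*}
Subtracting $B_n$ removes $u_\theta$ and the martingale term $\nabla u_\theta^T\sigma\Delta W_n$. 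We then divide by $\Delta t$ and take expectations. The remaining terms behave as follows:
\begin{itemize}
\item $\mathbb{E}[\Delta W_n]=0$, so the first-order noise term vanishes.
\item $\mathbb{E}[\Delta W_n\Delta W_n^T]=\Delta t\, I_d$, so $\tfrac{1}{2\Delta t}\mathbb{E}[\delta^T\nabla^2u_\theta\delta] = \tfrac12\mathrm{Tr}[\sigma^T\nabla^2 u_\theta\sigma] + O(\Delta t)$.
\item The mixed terms are $O(\Delta t)$ in expectation, using boundedness of $\mu,\sigma$.
\end{itemize}
Collecting terms gives $\partial_t u_\theta + \mu\cdot\nabla u_\theta + \tfrac12\mathrm{Tr}[\sigma^T\nabla^2u_\theta\sigma] - \phi_{u_\theta} = r$, plus the contribution of $\mathbb{E}[R]/\Delta t$.

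I expect the main obstacle to be the remainder $\mathbb{E}[R]/\Delta t$. With only $C^{1,2}$ regularity, $R$ is $o(\Delta t + |\delta|^2)$ through the modulus of continuity of $\partial_t u_\theta$ and $\nabla^2 u_\theta$. To reach the stated $O(\Delta t^{1/2})$ rate I would argue as in the proof of Lemma~\ref{bias_EM}: assume local Lipschitz or boundedness of these derivatives (as implicitly done there), so that $|R| \lesssim \Delta t\cdot(\Delta t + |\delta|) + |\delta|^3$. Gaussian moments give $\mathbb{E}|\delta|^3 = O(\Delta t^{3/2})$, hence $\mathbb{E}[R]/\Delta t = O(\Delta t^{1/2})$. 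Alternatively, one can simply reuse the first-moment computation already implicit in Lemma~\ref{bias_EM}. Finally, squaring gives $(r + O(\Delta t^{1/2}))^2 = r^2 + O(\Delta t^{1/2})$, because $r(t_n,x)$ is bounded at the fixed point. This yields the claim, with no dependence on $M_1$ or $M_2$.
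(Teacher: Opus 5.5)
Your proposal is correct and matches the paper's proof. The paper likewise factors the expectation using independence of the two shot groups, and uses the first-moment expansion of $\mathrm{err}_n^{\mathrm{EM}}$ from the proof of Lemma~\ref{bias_EM}, where the zero-mean fluctuation $\tfrac12[\xi_n^T H_{u_\theta}\xi_n - \mathrm{Tr}[H_{u_\theta}]]$ drops out. The only cosmetic difference is that the paper takes expectations of each shot average before factoring. Your remark that the $O(\Delta t^{1/2})$ remainder actually requires Lipschitz control of $\partial_t u_\theta$ and $\nabla^2 u_\theta$, beyond $C^{1,2}$, is a fair point; the paper glosses over it by appealing to ``third- and higher-order terms.''
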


Having established the unbiasedness of the proposed construction, we further note that the resulting discrete-time objective is also consistent with its continuous-time counterpart in the limit $\Delta t \to 0$, analogously to the standard EM-BSDE and Heun-BSDE objectives.

\begin{theorem}[Consistency of the Un-EM-BSDE loss]
\label{consistency_unbiased_EM}
Suppose that $f, g, \phi_{u_\theta} \in C^{1,0}$, $u_\theta \in C^{1,2}$, and $\Delta t \le 1$. We have
\begin{equation}
    \begin{aligned}
        &\frac{1}{N}\sum_{n=0}^{N-1} \mathbb{E}_{\hat{X}_n}[\ell_{\mathrm{UEM}}(\theta; n, \hat{X}_n)] \\
        =~& \frac{1}{|\mathcal{T}|} \int_\mathcal{T} \mathbb{E}\left[ \left( [\mathcal{L}[u_\theta] - \phi_{u_\theta}](t, X_t) \right)^2 \right] dt \\
        &+ O(\Delta t^{1/2}) .
    \end{aligned}
\end{equation}
\end{theorem}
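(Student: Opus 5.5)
The plan is to split the discrepancy into a one-step part and a time-discretization part. Writing $r_\theta(t,x) := [\mathcal{L}[u_\theta] - \phi_{u_\theta}](t,x)$, Lemma~\ref{bias_unbiased_EM} gives, pointwise in $x$,
\begin{equation*}
\ell_{\mathrm{UEM}}(\theta; n, x) = r_\theta(t_n,x)^2 + R_n(x), \qquad |R_n(x)| \le C\,\Delta t^{1/2}.
\end{equation*}
We then average over $\hat{X}_n$ and over $n$. The left-hand side of Theorem~\ref{consistency_unbiased_EM} becomes
\begin{equation*}
\frac{1}{N}\sum_{n=0}^{N-1}\mathbb{E}\big[r_\theta(t_n,\hat{X}_n)^2\big] + O(\Delta t^{1/2}),
\end{equation*}
provided the constant $C$ in the remainder is uniform in $n$ and in the law of $\hat{X}_n$.

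The first step is to secure this uniformity. Re-reading the proof of Lemma~\ref{bias_unbiased_EM} (a Taylor/It\^o expansion of $u_\theta(t_{n+1},F_n(x))$), we will check that the remainder constant depends only on sup-norms of $\mu,\sigma$ and of the derivatives of $u_\theta$ up to the orders assumed, together with Gaussian moments of $\Delta W_n$. Under the theorem's regularity hypotheses, read as including bounded derivatives, the constant is then uniform in $x$ and the expectation over $\hat{X}_n$ passes through. If only local bounds are available, we will instead use moment bounds $\sup_n \mathbb{E}|\hat{X}_n|^p<\infty$ for the Euler scheme, valid since $\Delta t\le 1$ and the coefficients are bounded, to integrate a polynomially growing remainder.

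The second step replaces the Euler chain by the true diffusion. By the weak convergence of Euler--Maruyama,
\begin{equation*}
\big|\mathbb{E}[h(t_n,\hat{X}_n)] - \mathbb{E}[h(t_n,X_{t_n})]\big| = O(\Delta t^{1/2})
\end{equation*}
for $h = r_\theta^2$. One can get this rate from the strong error bound $\mathbb{E}|\hat{X}_n - X_{t_n}|^2 = O(\Delta t)$ when $h$ is Lipschitz in $x$, or from a weak-error argument. The third step converts the Riemann sum to the integral, writing $\frac{1}{N}\sum_n \mathbb{E}[h(t_n,X_{t_n})] = \frac{1}{T_{\text{end}}}\sum_n \int_{t_n}^{t_{n+1}} \mathbb{E}[h(t_n,X_{t_n})]\,dt$ and bounding $|\mathbb{E}h(t_n,X_{t_n}) - \mathbb{E}h(t,X_t)|$ for $t\in[t_n,t_{n+1}]$. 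Time-regularity of $h$ contributes $O(\Delta t)$, and the spatial Lipschitz property with $\mathbb{E}|X_t - X_{t_n}| = O(\Delta t^{1/2})$ contributes $O(\Delta t^{1/2})$.

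The main obstacle is the regularity of $h = r_\theta^2$. The residual $r_\theta$ contains $\nabla^2 u_\theta$, so the Euler-to-SDE comparison and the Riemann-sum step need $r_\theta$ to be (locally) Lipschitz in $x$ and continuous in $t$, which is more than $u_\theta\in C^{1,2}$ literally provides. I would handle this in the same spirit as the analogous consistency results for the EM-BSDE and Heun-BSDE losses in \citet{park2025integration}. Concretely, I would impose (or read into the $C^{1,2}$ hypothesis) that $r_\theta$ is bounded and Lipschitz, which holds for smooth network activations, and then invoke their argument essentially verbatim. The only new ingredient here is that the pointwise one-step expansion now has no Hessian-trace bias term, so the limit is exactly the squared residual integral.
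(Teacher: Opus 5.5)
Your proposal is correct and follows essentially the same route as the paper. Both apply Lemma~\ref{bias_unbiased_EM} pointwise to reduce the left-hand side to a Riemann sum of $\mathbb{E}[r_\theta(t_n,\hat{X}_n)^2]$, then carry over the Euler-to-SDE and Riemann-sum argument of \citet[Theorem~4.2]{park2025integration} using Lipschitz continuity of $r_\theta^2$ in $(t,x)$. Your caveat that this Lipschitz property, and the uniformity of the one-step remainder, need more than $u_\theta\in C^{1,2}$ is fair; the paper simply asserts the Lipschitz property from its stated hypotheses.
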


The proofs of Lemma \ref{bias_unbiased_EM} and Theorem \ref{consistency_unbiased_EM} are provided in the appendix~\ref{proof_Un-EM-BSDE}.

\begin{algorithm*}[tb]
    \caption{Un-EM-BSDE loss algorithm}
    \label{Unbiased_EM-BSDE_algorithm}
{\small
\begin{algorithmic}
    \STATE {\bfseries Input:} batch size $B$, terminal time $T_{\text{end}}$, number of steps $N$, neural network $u_\theta(t, x)$, number of shots $M_1, M_2$
    \STATE Construct time grid: Define $\Delta t = T_{\text{end}}/N$, and define $T \in \mathbb{R}^{B \times (N+1)}$ by $T[b, n] = n\Delta t$ for each $b, n$
    \STATE Initialize candidate states: Allocate $X \in \mathbb{R}^{B \times (N+1) \times (M_1+M_2) \times d}$, and set $X[b, 0, 0] = x_0$ for each $b$
    \STATE Initialize main forward states: $(t, x) = (T[:, 0], X[:, 0, 0])$
    \FOR{$n=0$ {\bfseries to} $N-1$}
        \STATE Sample i.i.d. Brownian increments: $\{\Delta W_{n,i}^{(b)}\}_{b\in[B],\,i\in[M_1+M_2]} \overset{\text{i.i.d.}}{\sim}\mathcal{N}(0,\Delta t\, I_d)$.
        \STATE Construct candidate next states: $X[:, n+1, i] = x + \mu(t, x)\Delta t + \sigma(t, x)\Delta W_{n,i}^{(b)}$ for each $n,i$
        \STATE Update main forward states: $(t, x) = (T[:, n+1], X[:, n+1, 0])$
    \ENDFOR
    \STATE Construct main forward trajectory: Define $X_{\text{main}} = X[:, :, 0]$
    \STATE Evaluate network outputs: Define $Y \in \mathbb{R}^{B\times (N+1) \times (M_1 + M_2)}$ by $Y[b, n, i] = u_\theta(T[b, n], X[b, n, i])$ for each $b, n, i$
    \STATE Evaluate network gradients: Define $Z_{\text{main}} \in \mathbb{R}^{B \times (N+1) \times d}$ by $Z_{\text{main}}[b, n] = \nabla_x u_\theta(T, X_{\text{main}})[b, n]$ for each $b, n$
    \STATE Compute one-step BSDE prediction: Define $Y_{\text{main}} = Y[:, :, 0]$, and define $\hat{Y} \in \mathbb{R}^{B \times (N+1) \times (M_1 + M_2)}$ by:
    $$
    \hat{Y}[b, n+1, i] = Y_{\text{main}}[b, n] + \phi(T, X_{\text{main}}, Y_{\text{main}}, Z_{\text{main}})[b, n] \Delta t + [Z_{\text{main}}^T \sigma(T, X_{\text{main}})][b, n]\Delta W_{n,i}^{(b)} ~~~~ \text{for each }b, n, i
    $$
    \STATE Compute the per-sample step error and the group-wise averages: $\mathrm{err}_{n, i}^{(b)} = Y[b, n+1, i] - \hat{Y}[b, n+1, i]$ for each $b,n,i$, and 
    \begin{equation*}
        \mathrm{Shot\_err}^{(b)}_{n, 1} = \frac{1}{M_1}\sum_{i=0}^{M_1 - 1}\mathrm{err}_{n,i}^{(b)}, ~~~~ \mathrm{Shot\_err}^{(b)}_{n,2} = \frac{1}{M_2}\sum_{i=0}^{M_2 - 1} \mathrm{err}^{(b)}_{n, M_1+i} ~~~~ \text{for each}~b
    \end{equation*} 
    Compute BSDE self-consistency loss: $L_{\mathrm{bsde}} = \frac{1}{B}\sum_{b=0}^{B-1}\sum_{n=0}^{N-1}\left( \mathrm{Shot\_err}_{n,1}^{(b)} \times \mathrm{Shot\_err}_{n,2}^{(b)} \right)$
    \IF{we use a soft constraint}   
        \STATE Define $L_{T} = \frac{1}{B}\sum_{b=0}^{B-1}\left[(Y^{(b)}_N - g(T^{(b)}_N, X^{(b)}_N))^2 + \|Z^{(b)}_N - \nabla_x g(T^{(b)}_N, X^{(b)}_N)\|_2^2\right]$ where $g$ is terminal function
        \STATE {\bfseries return }$L_{\mathrm{bsde}}, L_{T}$
    \ELSE
        \STATE {\bfseries return }$L_{\mathrm{bsde}}$
    \ENDIF
\end{algorithmic}
}
\end{algorithm*}

\subsection{Variance of the Un-EM-BSDE}
While the independence-based construction removes the discretization-induced bias, debiasing via independent resampling can introduce a variance trade-off. In particular, replacing a biased second-moment objective by a cross-moment computed from independent draws typically involves products of noisy Monte Carlo quantities, which can amplify estimation variance unless the sampling budget is increased. Related bias-variance trade-offs for such cross-sample constructions have been discussed in \citet{hu2025bias}.

Motivated by this variance consideration, we now analyze the variance of the BSDE loss estimators. Let $\widehat{\ell}_{\mathrm{EM}}$, $\widehat{\ell}_{\mathrm{Heun}}$, $\widehat{\ell}_{\mathrm{SG}}^{M}$, $\widehat{\ell}_{\mathrm{SEM}}^{M}$, and $\widehat{\ell}_{\mathrm{UEM}}^{M_1,M_2}$ denote Monte Carlo estimators of $\ell_{\mathrm{EM}}$, $\ell_{\mathrm{Heun}}$, $\ell_{\mathrm{SG}}^{M}$, $\ell_{\mathrm{SEM}}^{M}$, and $\ell_{\mathrm{UEM}}^{M_1,M_2}$, respectively. We summarize the resulting variance comparisons in the following theorem.

\begin{theorem}
\label{variance_comparison}
    Define $\alpha = 2/M - 1/(2M_1) - 1/(2M_2)$ and $\beta = 1/(2M^2) - 1/(4M_1M_2)$. Assume $\beta > 0$ and $\alpha \geq 4/(3M + \beta M^4)$. Then, as $\tau \to 0, \Delta t \to 0$, the estimator variances satisfy
    \begin{equation}
        \mathbb{V}\left[ \hat{\ell}^{M_1, M_2}_{\mathrm{UEM}} \right] \le \mathbb{V}\left[ \hat{\ell}^M_{\mathrm{SG}} \right] = \mathbb{V}\left[ \hat{\ell}^M_{\mathrm{SEM}} \right] \le \mathbb{V}\left[ \hat{\ell}_{\mathrm{EM}} \right] .
    \end{equation}
\end{theorem}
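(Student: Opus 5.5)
We reduce every estimator, in the limit $\tau,\Delta t\to 0$, to a polynomial in a standard Gaussian vector and compare the resulting fourth moments. Fix $(n,x)$ and expand $u_\theta(t_{n+1},F_n(x))$ by Itô–Taylor to second order, as in the proofs of Lemma~\ref{bias_EM} and Propositions~\ref{bias_shotgun}--\ref{bias_multishot_EM}. Writing $\Delta W_n=\sqrt{\Delta t}\,\xi$ with $\xi\sim\mathcal N(0,I_d)$, each one-step error takes the form
\begin{equation*}
\mathrm{err}_n^{\mathrm{EM}} = r + \tfrac12\bigl(\xi^T A\xi - \mathrm{Tr}A\bigr) + O(\Delta t^{1/2}),
\end{equation*}
where $r=[\mathcal L[u_\theta]-\phi_{u_\theta}](t_n,x)$ and $A=\sigma^T(\nabla^2u_\theta)\sigma$ evaluated at $(t_n,x)$. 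The first-order term $\nabla u^T\sigma\Delta W_n$ cancels exactly with $B_n$. The Shotgun error $\mathrm{err}^{\mathrm{SG}}$ has the same leading form with $O(\tau)$ remainder, by construction of the Brownian-increment-free error. Set $q=\tfrac12(\xi^TA\xi-\mathrm{Tr}A)$, so that $\mathbb E q=0$ and $\mathbb E q^2=\tfrac12\mathrm{Tr}A^2=:s$. We also need $\mathbb E q^3$ and $\mathbb E q^4$, which are given by the standard Gaussian quadratic-form cumulants $\kappa_3=\mathrm{Tr}A^3$ and $\kappa_4=\tfrac{3}{2}\mathrm{Tr}A^4$. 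We take $\hat\ell$ to be a single-sample Monte Carlo estimator. The variance of a batch average only scales all terms by the same factor, so this choice does not affect the ordering.

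\textbf{Step 1 (EM versus SEM/SG).} Let $\bar q_M$ be the mean of $M$ i.i.d.\ copies of $q$. Its cumulants are $\kappa_j/M^{j-1}$. The single-sample SEM estimator is $(r+\bar q_M)^2$, and the SG estimator has the same leading term, so the two variances agree in the limit, which gives the middle equality. Expanding,
\begin{equation*}
\mathbb V[(r+\bar q_M)^2] = 4r^2\frac{s}{M} + 4r\,\mathbb E\bar q_M^3 + \mathbb E\bar q_M^4 - \frac{s^2}{M^2}.
\end{equation*}
Here $\mathbb E\bar q_M^3=\kappa_3/M^2$ and $\mathbb E\bar q_M^4=\kappa_4/M^3+3s^2/M^2$. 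Every term is nonincreasing in $M$ after we group the terms into a nonnegative quadratic form in $(r,\ldots)$. In particular, the cross term is controlled by Cauchy–Schwarz, $|\mathbb E\bar q^3|\le(\mathbb E\bar q^2\,\mathbb E\bar q^4)^{1/2}$. Comparing $M=1$ with general $M$ then yields $\mathbb V[\hat\ell^M_{\mathrm{SEM}}]\le\mathbb V[\hat\ell_{\mathrm{EM}}]$.

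\textbf{Step 2 (UEM).} The single-sample UEM estimator is $(r+\bar q_{M_1})(r+\bar q'_{M_2})$ with the two factors independent. Its second moment factorizes:
\begin{equation*}
\mathbb E[\hat\ell_{\mathrm{UEM}}^2]=\Bigl(r^2+\frac{s}{M_1}\Bigr)\Bigl(r^2+\frac{s}{M_2}\Bigr),
\qquad
\mathbb V[\hat\ell_{\mathrm{UEM}}]=r^2 s\Bigl(\frac1{M_1}+\frac1{M_2}\Bigr)+\frac{s^2}{M_1M_2}.
\end{equation*}
This involves no third or fourth cumulants. Subtracting the UEM variance from the SEM variance in Step 1 gives
\begin{equation*}
2\alpha\,r^2 s + \frac{4r\kappa_3}{M^2} + \frac{\kappa_4}{M^3} + 4\beta s^2 .
\end{equation*}
The coefficients $2\alpha$ and $4\beta$ are exactly where the paper's constants $\alpha$ and $\beta$ enter. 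The goal is to show that this quantity is nonnegative.

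\textbf{Main obstacle.} The difficulty is the sign-indefinite cross term $4r\kappa_3/M^2$. We plan to treat the difference as a quadratic in $r$ and require its discriminant to be nonpositive, i.e.
\begin{equation*}
\frac{16\kappa_3^2}{M^4}\le 8\alpha s\Bigl(\frac{\kappa_4}{M^3}+4\beta s^2\Bigr).
\end{equation*}
Since $\kappa_4\ge 0$, we may drop that term and reduce to $\kappa_3^2\le 2\alpha\beta M^4 s^3$, although it may be necessary to keep part of $\kappa_4$ to reach the stated threshold. The inequality should follow from the spectral bounds $\mathrm{Tr}(A^3)^2\le\mathrm{Tr}(A^2)\,\mathrm{Tr}(A^4)$ and $\mathrm{Tr}A^4\le(\mathrm{Tr}A^2)^2$, that is, $\kappa_3^2\le \tfrac{4}{3}s\kappa_4$ together with $\kappa_4\le 6s^2$. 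Balancing the $\kappa_4/M^3$ and $\beta s^2$ contributions is what should produce the condition $\alpha\ge 4/(3M+\beta M^4)$. We expect to have to tune this step carefully, splitting $\kappa_3^2$ between the two available terms with a weighted AM–GM so that the constants match the stated hypothesis.
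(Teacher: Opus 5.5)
Your reduction follows the paper's route. You use the leading-order error $r+q$, the Gaussian quadratic-form moments, and the factorized UEM second moment, and you treat the difference $2\alpha r^2 s+4r\kappa_3/M^2+\kappa_4/M^3+4\beta s^2$ as a quadratic in $r$ whose discriminant must be nonpositive. The gap is that you stop exactly at the decisive inequality, and as you have set it up it cannot be closed.

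First, $\kappa_4$ is miscomputed. From $\kappa_j(\xi^TA\xi)=2^{j-1}(j-1)!\,\mathrm{Tr}A^j$ one gets $\kappa_j(q)=\tfrac{(j-1)!}{2}\mathrm{Tr}A^j$, so $\kappa_4=3\,\mathrm{Tr}A^4$, not $\tfrac32\mathrm{Tr}A^4$; your $s$ and $\kappa_3$ are correct. This is not cosmetic. For rank-one $A$ both trace inequalities you cite hold with equality, so the discriminant test is sharp there. With your $\kappa_4$ it demands $\alpha\ge 4/(\tfrac32 M+\beta M^4)$, which the hypothesis does not provide, so no weighted AM--GM split can reach the stated threshold.

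Second, the route you propose first, dropping $\kappa_4$, only yields $\alpha\ge 4/(\beta M^4)$, again strictly stronger than the hypothesis. The missing step needs no balancing. Bound the left side by $\kappa_3^2=(\mathrm{Tr}A^3)^2\le\mathrm{Tr}A^2\,\mathrm{Tr}A^4=\tfrac23 s\kappa_4$. Then use $\mathrm{Tr}A^4\le(\mathrm{Tr}A^2)^2$ not as an upper bound on $\kappa_4$ but as a \emph{lower} bound $4\beta s^2=\beta(\mathrm{Tr}A^2)^2\ge\beta\,\mathrm{Tr}A^4$ on the right side; this is where $\beta>0$ enters. Both sides then become multiples of $s\,\mathrm{Tr}A^4$, and the condition reduces to
\[
\frac{4}{M^4}\le \alpha\Bigl(\frac{3}{M^3}+\beta\Bigr),\qquad\text{i.e.}\qquad \alpha\ge\frac{4}{3M+\beta M^4},
\]
which is exactly how the paper closes $Q_2\ge 0$.

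Step 1 has a related hole. The terms of $\mathbb V[\hat\ell^M_{\mathrm{SEM}}]$ are not all nonincreasing in $M$: when $r\kappa_3<0$ the cross term $4r\kappa_3/M^2$ increases with $M$. The moment bound $|\mathbb E\bar q_M^3|\le(\mathbb E\bar q_M^2\,\mathbb E\bar q_M^4)^{1/2}$ gives at best $\kappa_3^2\le s(\kappa_4+3s^2)$. That is too weak to make the discriminant in $r$ of $\mathbb V[\hat\ell_{\mathrm{EM}}]-\mathbb V[\hat\ell^M_{\mathrm{SEM}}]$ nonpositive once $M$ is large, since the $3s^3$ exceeds the available $2s^3$ as $1/M\to 0$. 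Use the same spectral bound instead. With $\kappa_3^2\le\tfrac23 s\kappa_4$, divide the discriminant condition by $(1-1/M)^2$ and even discard the $s^2$ term. It then reduces to $\tfrac23(1+1/M)^2\le 1+1/M+1/M^2$, which always holds; this is the paper's $Q_1$ argument.
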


A particularly relevant setting is obtained by setting $M_1=1$ and $M_2=2$. In this case, the condition in Theorem~\ref{variance_comparison} is satisfied with $M=1$, implying that the resulting estimator variance is smaller than that of the standard EM-BSDE loss. Consequently, we can obtain an Un-EM-BSDE objective with improved estimator stability while incurring only a marginal additional sampling cost relative to the original EM-BSDE construction. The proof of Theorem~\ref{variance_comparison} is provided in the Appendix~\ref{proof_variance_comparison}.

\begin{table*}[t]
\centering
    \captionsetup{justification=centering}
    \caption{Mean RL2 errors $(\times10^{-2})$ with standard deviations $(\times10^{-2})$ over three random seeds. The best and second-best results are highlighted in bold with underline and bold, respectively. For AC, the error is computed only at $t=0$.}
\label{RL2_RLT0}
\centering
\small
\begin{tabular}{lcccccc}
\toprule
\multirow{2}{*}{Cases} 
& \multicolumn{3}{c}{Biased methods} &
\multicolumn{3}{c}{Unbiased methods} \\
\cmidrule(lr){2-4}\cmidrule(lr){5-7}
&
EM-BSDE &
Shotgun &
\makecell{Multi-Shot\\EM-BSDE} &

Heun-BSDE &
FS-PINNs &
\makecell{\textbf{Un-EM-BSDE}\\\textbf{(Ours)}} \\
\midrule

\multicolumn{7}{c}{\textbf{Soft constraint}} \\
\midrule
\multirow[c]{1}{*}{HJB}
  & $0.4055$\tiny{$\pm0.0107$} & $1.1409$\tiny{$\pm0.1588$} & $0.1617$\tiny{$\pm0.0144$} & $0.1424$\tiny{$\pm0.0102$} & $\underline{\mathbf{0.0867}}$\tiny{$\pm0.0018$} & $\mathbf{0.1348}$\tiny{$\pm0.0153$} \\
\addlinespace

\multirow[c]{1}{*}{BSB}
  & $0.3483$\tiny{$\pm0.0008$} & $39.9934$\tiny{$\pm23.6762$} & $0.1046$\tiny{$\pm0.0024$} & $0.1030$\tiny{$\pm0.0036$} & $\underline{\mathbf{0.0478}}$\tiny{$\pm0.0015$} & $\mathbf{0.0814}$\tiny{$\pm0.0026$} \\
\addlinespace

\multirow[c]{1}{*}{AC}
  & $0.0462$\tiny{$\pm0.0154$} & $0.0951$\tiny{$\pm0.1270$} & $\mathbf{0.0206}$\tiny{$\pm0.0010$} & $0.0774$\tiny{$\pm0.0027$} & $0.0325$\tiny{$\pm0.0018$} & $\underline{\mathbf{0.0147}}$\tiny{$\pm0.0044$} \\
\addlinespace

\multirow[c]{1}{*}{BZ}
  & $5.3633$\tiny{$\pm0.0338$} & $86.5295$\tiny{$\pm115.6677$} & $\mathbf{5.3379}$\tiny{$\pm0.0304$} & $6.1697$\tiny{$\pm0.0845$} & $174.4345$\tiny{$\pm120.3304$} & $\underline{\mathbf{5.1778}}$\tiny{$\pm0.0533$} \\
\addlinespace

\multirow[c]{1}{*}{PIDE}
  & $1.1094$\tiny{$\pm0.0753$} & $1.2072$\tiny{$\pm0.0344$} & $1.0771$\tiny{$\pm0.0314$} & $\mathbf{0.9199}$\tiny{$\pm0.0448$} & $\underline{\mathbf{0.4973}}$\tiny{$\pm0.0097$} & $1.0878$\tiny{$\pm0.0481$} \\

\midrule
\multicolumn{7}{c}{\textbf{Hard constraint}} \\
\midrule
\multirow[c]{1}{*}{HJB}
  & $0.3433$\tiny{$\pm0.0051$} & $0.2036$\tiny{$\pm0.0110$} & $\underline{\mathbf{0.1172}}$\tiny{$\pm0.0086$} &  $0.2655$\tiny{$\pm0.0876$} & $0.1876$\tiny{$\pm0.0487$} & $\mathbf{0.1444}$\tiny{$\pm0.0275$} \\
\addlinespace

\multirow[c]{1}{*}{BSB}
  & $0.3456$\tiny{$\pm0.0002$} & $0.1629$\tiny{$\pm0.0016$} & $0.0739$\tiny{$\pm0.0002$} & $0.0201$\tiny{$\pm0.0016$} & $\underline{\mathbf{0.0048}}$\tiny{$\pm0.0009$} & $\mathbf{0.0120}$\tiny{$\pm0.0004$} \\
\addlinespace

\multirow[c]{1}{*}{AC}
  & $0.0400$\tiny{$\pm0.0142$} & $0.0136$\tiny{$\pm0.0016$} & $\mathbf{0.0113}$\tiny{$\pm0.0029$} & $0.0912$\tiny{$\pm0.0004$} & $0.0301$\tiny{$\pm0.0004$} & $\underline{\mathbf{0.0034}}$\tiny{$\pm0.0024$} \\
\addlinespace

\multirow[c]{1}{*}{BZ}
  & $1.1912$\tiny{$\pm0.0083$} & $0.2383$\tiny{$\pm0.0403$} & $1.1952$\tiny{$\pm0.0070$} & $1.1885$\tiny{$\pm0.0027$} & $\underline{\mathbf{0.2129}}$\tiny{$\pm0.0063$} & $\mathbf{0.2279}$\tiny{$\pm0.0010$} \\
\addlinespace

\multirow[c]{1}{*}{PIDE}
  & $0.0374$\tiny{$\pm0.0155$} & $0.4057$\tiny{$\pm0.0205$} & $0.0245$\tiny{$\pm0.0087$} & $0.1874$\tiny{$\pm0.0162$} & $\underline{\mathbf{0.0137}}$\tiny{$\pm0.0022$} & $\mathbf{0.0226}$\tiny{$\pm0.0028$} \\

\bottomrule
\end{tabular}

\end{table*}

\section{Experiment} 

In this section, we provide a comparative analysis of the proposed Un-EM-BSDE method against several approaches, including EM-BSDE, FS-PINNs, Shotgun, and the Heun-BSDE. Furthermore, we demonstrate that our debiasing mechanism is easily applicable to other biased methods, such as the Shotgun method. This highlights the applicability of our approach, showing its potential to be extended to a broad range of existing biased numerical frameworks.

We implement our proposed Un-EM-BSDE using Algorithm~\ref{Unbiased_EM-BSDE_algorithm}. Regarding the experimental setup, the hyperparameters are set to $M_1 = 5$ and $M_2 = 5$ based on empirically optimal performance results.
Comprehensive details regarding the setup are provided in Appendix~\ref{ablation}.
For baselines, EM-BSDE follows Algorithm~\ref{EM-BSDE_algorithm}, Heun-BSDE follows Algorithm~\ref{Heun-BSDE_algorithm}, Shotgun follows Algorithm~\ref{Shotgun_algorithm}, and Multi-Shot EM-BSDE follows Algorithm~\ref{Multi-Shot_EM-BSDE_algorithm}. We use $M = 50$ for Shotgun, and $M = 10$ for Multi-Shot EM-BSDE to match the inner-sampling budget of Un-EM-BSDE, i.e., $M_1 + M_2$. All derivatives in the experiments reported in this paper are computed using reverse-mode automatic differentiation.

\subsection{Results on benchmark PDEs}

We evaluate our method on a spectrum of problems—ranging from benchmark PDEs that fit the form of Eq. \eqref{PDE}, including Hamilton–Jacobi–Bellman (HJB), Black-Scholes-Barenblatt (BSB), and Allen-Cahn (AC). By default, we compute the relative $L_{2}$ error (RL2) over the time grid along the trajectory. For AC, since a closed-form trajectory is not available, we instead report the relative error at the prescribed initial state. Full problem statements for HJB, BSB, and AC are provided in Appendix~\ref{Benchmark_Equations}.

\begin{figure}[t]
  \centering
  \includegraphics[width=\columnwidth]{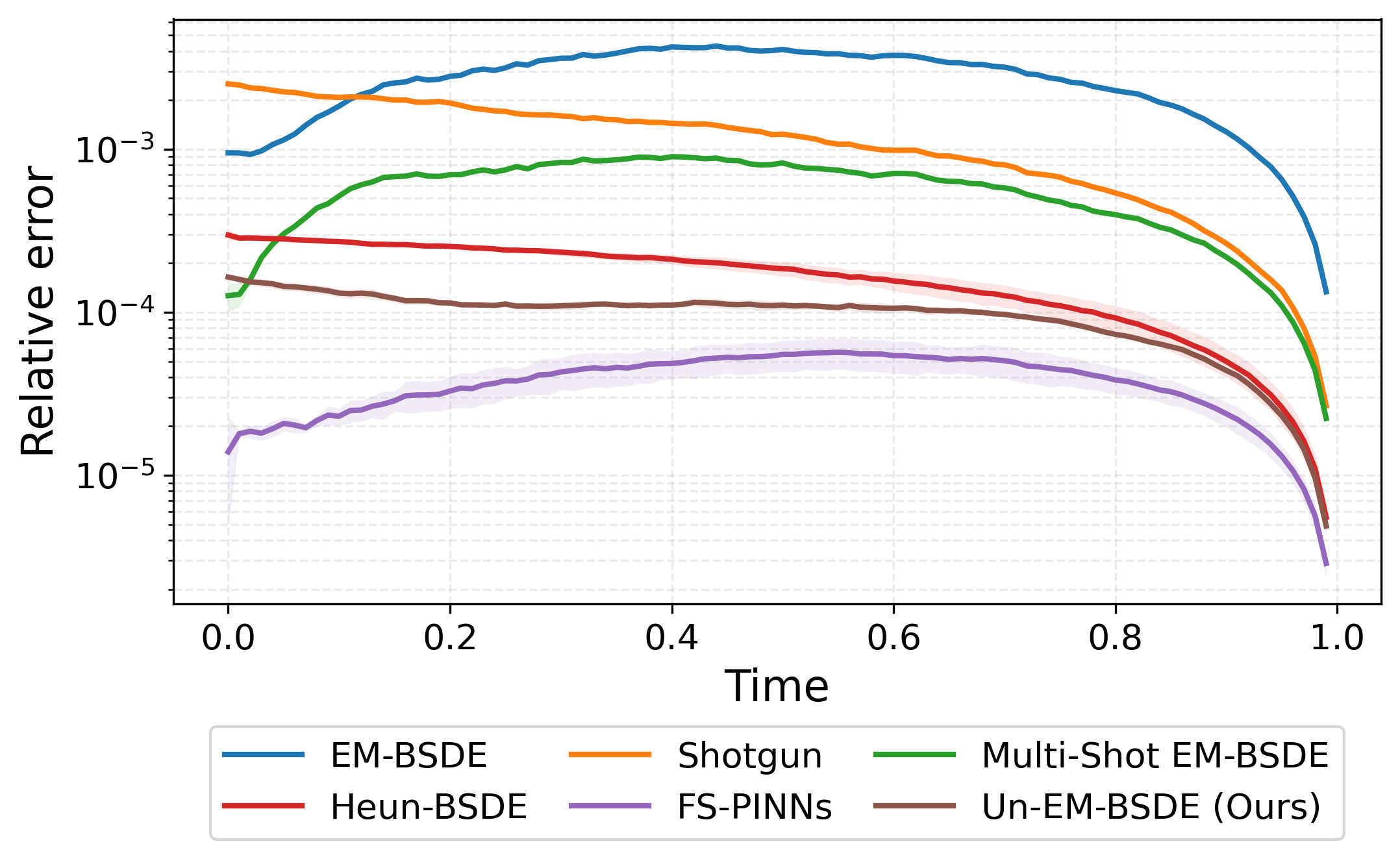}
  \caption{Time-resolved relative error on the BSB benchmark: relative error evaluated at each PDE time $t\in\mathcal{T}$ using the same models as in Table~\ref{RL2_RLT0}. The y-axis is plotted on a logarithmic scale.}
  \label{RL2_over_time}
\end{figure}

\begin{figure}[t]
  \centering
  \includegraphics[width=\columnwidth]{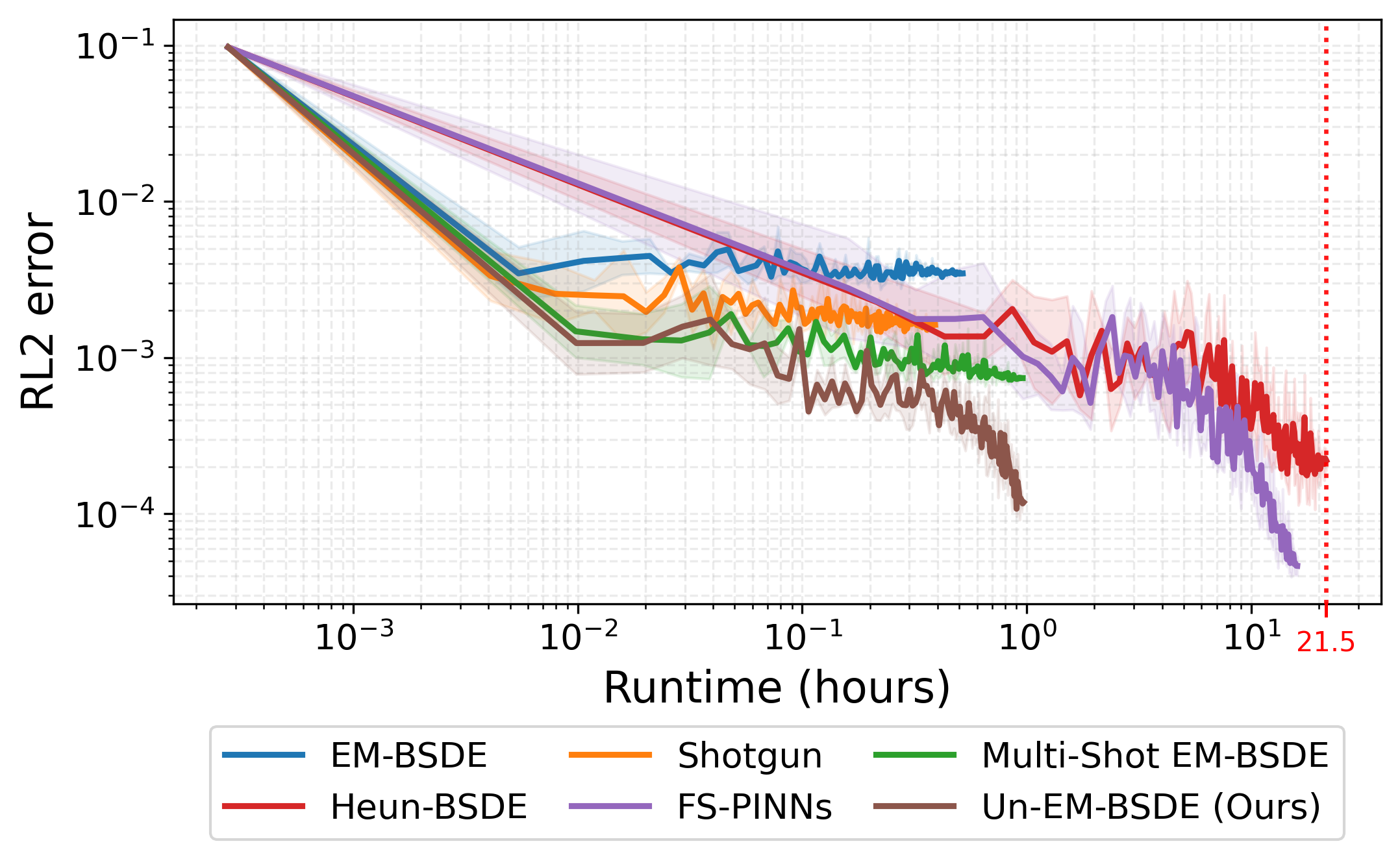}
  \caption{Runtime--accuracy on the BSB benchmark: RL2 recorded over runtime. Both axes are plotted on a log scale.}
  \label{RL2_over_runtime}
\end{figure}

The overall results are summarized in Table~\ref{RL2_RLT0}. Across most benchmark settings, our method achieves consistently strong accuracy and is typically among the top two methods. Figure~\ref{RL2_over_time} further shows a stable reduction in RL2 over time, indicating reliable convergence behavior. We consider both soft-constraint and hard-constraint formulations for enforcing terminal conditions, as both are widely used in practice and offer complementary trade-offs. While hard constraints enforce the conditions by construction, they can hinder optimization when an overly restrictive trial function unintentionally limits the admissible solution set \citep{liu2022a}. Soft constraints, in contrast, preserve flexibility but introduce the classical loss-balancing issue: the relative weighting between residual or self-consistency terms and constraint terms can strongly affect gradient magnitudes and training dynamics unless carefully tuned \citep{wang2022and, bischof2025multi}. Furthermore, Figure~\ref{RL2_over_time} demonstrates that under the hard-constraint setting, our method remains stable and aligns well across the entire time domain.

Across our benchmarks, FS-PINNs train reliably under both formulations and often achieve the best accuracy, reflecting their strong empirical performance when optimization proceeds smoothly. For BSDE-based objectives, however, the choice of constraint enforcement has a more pronounced impact on training dynamics. This is particularly evident under soft constraints, where the balance between self-consistency and constraint penalties directly affects stability and convergence. In this respect, our method is comparatively robust: it performs consistently under both soft and hard constraints and reliably improves upon biased BSDE variants across problems. Under hard constraints, the removal of explicit constraint-loss terms simplifies the objective and reduces tuning overhead, making the benefit of randomized unbiased step-loss estimation more evident.

Additionally, our method demonstrates superior efficiency by avoiding explicit second-order derivative computations.  In high dimensions, the presence of Hessian-related terms can dominate the computational burden and may render other unbiased methods impractical to run at all. As shown in Figure~\ref{RL2_over_runtime}, current unbiased methods, such as FS-PINNs and Heun-BSDE, require over an order of magnitude more runtime. By removing this bottleneck, our approach scales more favorably with dimensionality, providing a clear practical advantage when targeting high-dimensional PDEs. Moreover, as shown in Table~\ref{training_time}, our method maintains a training cost comparable to standard EM-BSDE, while being substantially faster than other unbiased baselines. The resulting accuracy–runtime trade-off for the BSB hard-constraint setting illustrates that our method attains accuracy comparable to competing unbiased approaches at a noticeably lower computational cost.

\subsection{Extensions to complex dynamics}
We move beyond the benchmark PDEs and consider more practically motivated extensions where the standard theoretical assumptions and stability guarantees are less straightforward. Specifically, we study a problem from \citet{bender2008time} that, after the PDE-to-FBSDE transformation, yields a fully-coupled FBSDE because the diffusion coefficient $\sigma$ depends on the unknown solution $u$; and a Partial Integro-Differential Equations (PIDE) example from \citet{ye2024fbsjnn} whose jump-driven, discontinuous dynamics provide a stringent stress test for the proposed training objective. Formal definitions and full problem statements for the fully-coupled and PIDE extensions are provided in Appendix~\ref{BZ_definition} and Appendix~\ref{PIDE_definition}.

From an algorithmic standpoint, these extensions require only lightweight modifications. For the fully-coupled problem, we simply evaluate the network within the rollout since the coefficients depend on the current solution estimate. For the PIDE case, at each time step, we sample the number of jumps and their magnitudes and incorporate the resulting jump contribution in a scheme-agnostic manner. The last two rows for both the soft- and hard-constraint settings in Table~\ref{RL2_RLT0} report the quantitative results for the BZ and PIDE cases. Overall, the soft-constraint formulation exhibits slightly less stable training dynamics in these more challenging settings, whereas the hard-constraint formulation remains stable and delivers competitive accuracy.

\subsection{A generic debiasing wrapper for one-step losses}
\label{debiased_one_step_losses}

Although the debiasing mechanism was introduced in the context of the EM-BSDE loss, it applies more generally to any biased one-step training objective, provided that the one-step loss estimator can be evaluated under the same update rule with different time-step sizes.
As an illustration, we apply this construction to the Shotgun objective, which averages $M$ i.i.d. one-step errors but still carries discretization-induced bias at any fixed $\tau$.

Concretely, applying the same randomized construction to Shotgun yields the following debiased one-step loss
\begin{multline}
    \ell^{M_1, M_2}_{\mathrm{USG}}(\theta, n, x) = \mathbb{E} [ ( \mathrm{Shot}_{M_1}[\mathrm{err}_n^{\mathrm{SG}}(x; u_\theta)]) \\
    \times (\mathrm{Shot}_{M_2}[\mathrm{err}_n^{\mathrm{SG}}(x; u_\theta)]) ] .
\end{multline}

We then evaluate this plug-in variant on the BSB hard-constraint benchmark. Algorithmically, we retain the original Shotgun training pipeline and only replace its biased one-step objective with the debiased estimator above. Following the unbiased EM-BSDE implementation, we further adopt the same group-wise averaging and product aggregation used to form the final one-step loss, and set $M_1 = M_2 = 50$ throughout.

\begin{figure}[t]
  \centering
  \includegraphics[width=\columnwidth]{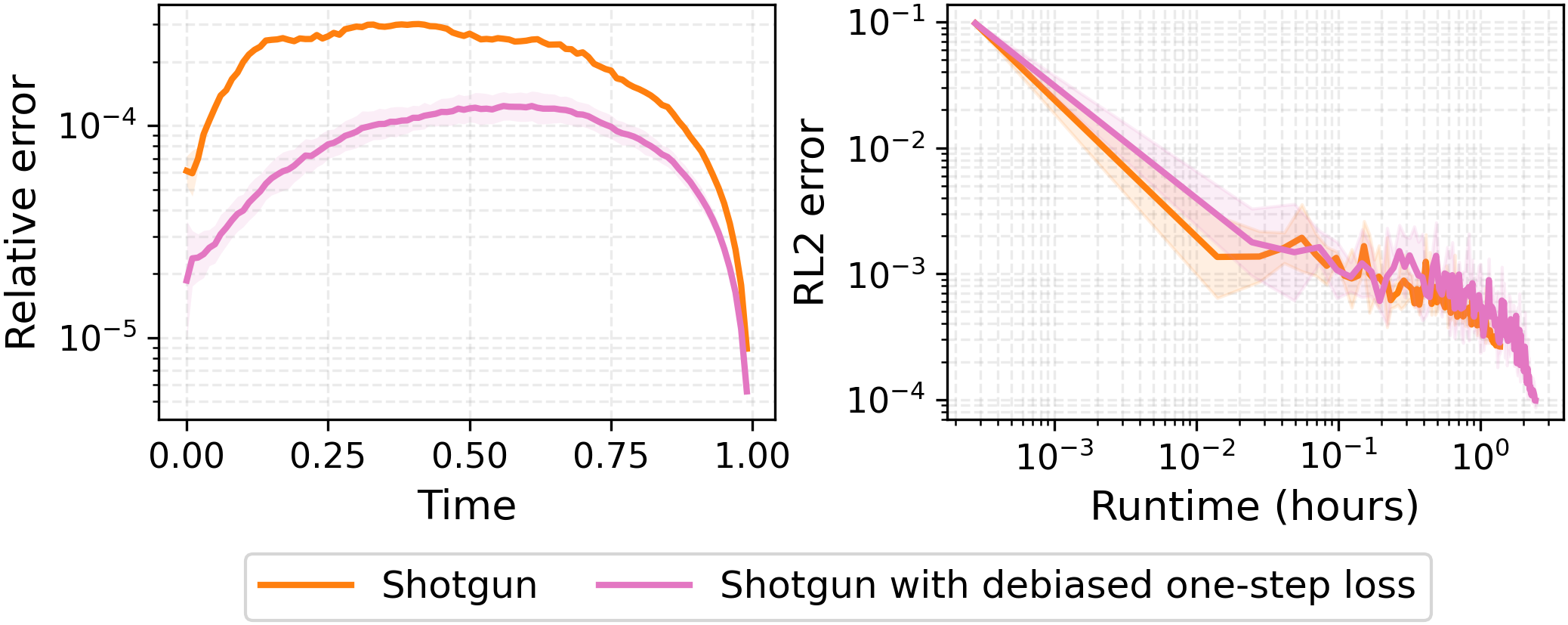}
  \caption{Relative error over PDE time (left) and RL2 over training runtime (right) for Shotgun and its debiased variant.}
  \label{SG_vs_USG}
\end{figure}

The results are visualized in Figure~\ref{SG_vs_USG}. Relative to the corresponding Shotgun run, the wrapper reduces the RL2 error by approximately $2.67\times$, while increasing the training time by $1.78\times$. This suggests that the wrapper can improve Shotgun's accuracy with a moderate computational overhead.

These results suggest that the proposed construction is not tied to EM-BSDE: it can be plugged into other biased one-step losses and yields a clear accuracy gain at a moderate additional cost.

\section{Conclusion}

We presented an unbiased training objective for EM-BSDE–type solvers that removes discretization-induced bias without requiring explicit second-order derivative computations. Across standard benchmarks and more practically motivated extensions, the proposed method achieves consistently strong accuracy while keeping training time comparable to EM-BSDE and substantially lower than Hessian-based unbiased alternatives. Moreover, the same randomized construction can be used as a plug-in correction for other biased one-step objectives, as illustrated by debiasing the Shotgun loss. Future work includes integrating the proposed method with adaptive time-stepping to better allocate computation across time and improve efficiency on stiff or multi-scale dynamics. At the same time, our current theory is established under regularity and boundedness assumptions, and therefore does not yet fully cover more challenging settings such as coupled FBSDEs and PIDEs. Extending the analysis to these broader BSDE-type problems remains an important direction for future work.

\section*{Impact Statement}
This paper presents work whose goal is to advance the field of machine learning for scientific computing, particularly in the numerical solution of high-dimensional partial differential equations. While such methods may contribute to improved modeling and simulation in areas such as finance and the physical sciences, the techniques developed here are general-purpose and do not raise specific ethical concerns beyond those commonly associated with advances in machine learning. The authors do not anticipate any immediate or unique negative societal impacts arising from this work.

\section*{Acknowledgments}

This work was supported by the Institute of Information \& Communications Technology Planning \& Evaluation (IITP) grant funded by the Korea government (MSIT) [RS-2021-II211341, Artificial Intelligence Graduate School Program (Chung-Ang University)]. This work was supported by the National Research Foundation of Korea(NRF) grant funded by the Korea government(MSIT) (RS-2025-02303239 and RS-2026-25497362).


\bibliography{nobias_training}
\bibliographystyle{icml2026}

\newpage
\appendix
\onecolumn

\section{Background}

This section provides additional background that is excluded from the main paper. Specifically, we give additional methodological details and provide algorithmic descriptions of how each method constructs its training loss, along with proofs for the bias of the resulting discretized objectives.

Throughout the paper, we present these algorithms in a batched form, which can be the most efficient option in principle for GPU training.
For runtime reporting, however, we use a full-rollout implementation as the reference protocol, since it is the most commonly used choice in practice and batched execution can lead to substantial GPU-dependent differences in measured wall-clock time.
Importantly, batching and full rollout evaluate the same discretized objective; they differ only in batching and parallelization of the same computations.

\subsection{Deep BSDE methods}
\label{Appendix_DeepBSDE}
We can use Itô's formula to check why the solution of the FBSDE \eqref{FBSDE} comes from the PDE \eqref{PDE} as follows:
\begin{align}
    du(t, X_t)
    &= (\partial _t u(t, X_t)) dt + \nabla u(t, X_t) \cdot dX_t + \frac{1}{2} (dX_t)^T (\nabla^2 u(t, X_t)) (dX_t) \\
    &= (\partial _t u(t, X_t)) dt + [\nabla u \cdot (\mu dt + \sigma dW_t)](t, X_t) + \frac{1}{2} dW_t^T [\sigma^T (\nabla^2 u) \sigma](t, X_t) dW_t \\
    &= \left[ \partial_t u + \mu \cdot \nabla u + \frac{1}{2} \mathrm{Tr}[\sigma^T (\nabla^2 u) \sigma] \right](t, X_t) dt + (\nabla u(t, X_t))^T \sigma(t, X_t) dW_t \\
    &= \mathcal{L}[u](t, X_t) dt + (\nabla u(t, X_t))^T \sigma(t, X_t) dW_t \\
    &= \phi(t, X_t, Y_t, Z_t) dt + Z_t^T \sigma(t, X_t) dW_t .
\end{align}

\begin{algorithm}[tb]
    \caption{EM-BSDE loss algorithm}
    \label{EM-BSDE_algorithm}
{\small
\begin{algorithmic}
    \STATE {\bfseries Input:} batch size $B$, terminal time $T_{\text{end}}$, number of steps $N$, neural network $u_\theta(t, x)$.
    \STATE Construct time grid: Define $\Delta t = T_{\text{end}}/N$, and define $T \in \mathbb{R}^{B \times (N+1)}$ by $T[b, n] = n\Delta t$ for each $b, n$
    \STATE Initialize forward states: Allocate $X \in \mathbb{R}^{B \times (N+1) \times d}$, and set $X[b, 0] = x_0$ for each $b,n$
    \FOR{$n=0$ {\bfseries to} $N-1$}
        \STATE Sample i.i.d. Brownian increments: $\{\Delta W_{n}^{(b)}\}_{b\in[B]} \overset{\text{i.i.d.}}{\sim}\mathcal{N}(0,\Delta t\, I_d)$.
        \STATE Propagate forward states: $X[:, n+1] = X[:, n] + \mu(T[:, n], X[:, n])\Delta t + \sigma(T[:, n], X[:, n])\Delta W_{n}^{(b)}$
    \ENDFOR

    \STATE Evaluate network outputs: Define $Y \in \mathbb{R}^{B\times (N+1)}$ by $Y[b, n] = u_\theta(T[b, n], X[b, n])$ for each $b, n$
    \STATE Evaluate network gradients: Allocate $Z \in \mathbb{R}^{B \times (N+1) \times d}$, and set $Z[b, n] = \nabla_x u_\theta(T[b, n], X[b, n])$ for each $b, n$
    \STATE Compute one-step BSDE prediction: Define $\hat{Y} \in \mathbb{R}^{B \times (N+1)}$ by:
    $$
    \hat{Y}[b, n+1] = Y[b, n] + \phi(T, X, Y, Z)[b, n] \Delta t + [Z^T \sigma(T, X)][b, n]\Delta W_{n}^{(b)} ~~~~ \text{for each }b, n
    $$
    \STATE Compute BSDE self-consistency loss: $L_{\mathrm{bsde}} = \frac{1}{B}\sum_{b=0}^{B-1}\sum_{n=0}^{N-1}\left( Y[b, n+1] - \hat{Y}[b, n+1] \right)^2$
    \STATE Return the BSDE loss (with optional soft-constraint terminal condition loss) as in Algorithm \ref{Unbiased_EM-BSDE_algorithm}
\end{algorithmic}
}
\end{algorithm}

\textbf{EM-BSDE} \citep{raissi2024forward}. We discretize the FBSDE using the Euler–Maruyama scheme and apply the corresponding one-step update.
\begin{equation}
    \begin{cases}
        F_n(x) := x + \mu(t_n, x) \Delta t + \sigma(t_n, x) \Delta W_n , \\
        B_n(x; u) := u(t_n, x) + \phi_u(t_n, x) \Delta t + \nabla u(t_n, x)^T \sigma(t_n, x) \Delta W_n .
    \end{cases}
\end{equation}

Based on the resulting discrete FBSDE, we construct a one-step self-consistency error and the corresponding one-step loss.

\begin{equation}
    \ell_{\mathrm{EM}}(\theta; n, x) := \mathbb{E}\left[|\mathrm{err}_n^{\mathrm{EM}}(x; u_\theta)|^2 \right], ~~~~ \text{where} ~~ \mathrm{err}_n^{\mathrm{EM}}(x; u) := \frac{u(t_{n+1}, F_n(x)) - B_n(x; u)}{\Delta t} .
\end{equation}

The overall training objective is formed by aggregating the per-step losses over the time grid (together with the terminal condition), leading to the implementation summarized in Algorithm~\ref{EM-BSDE_algorithm}. 

\begin{algorithm*}[tb]
    \caption{Heun-BSDE loss algorithm}
    \label{Heun-BSDE_algorithm}
{\small
\begin{algorithmic}
    \STATE {\bfseries Input:} batch size $B$, terminal time $T_{\text{end}}$, number of steps $N$, neural network $u_\theta(t, x)$.
    \STATE Construct time grid and Initialize forward states as in Algorithm \ref{EM-BSDE_algorithm}
    \STATE Allocate intermediate states $\bar{X} \in \mathbb{R}^{B \times (N+1) \times d}$ 
    \FOR{$n=0$ {\bfseries to} $N-1$}
        \STATE Sample i.i.d. Brownian increments: $\{\Delta W_{n}^{(b)}\}_{b\in[B]} \overset{\text{i.i.d.}}{\sim}\mathcal{N}(0,\Delta t\, I_d)$.
        \STATE Compute intermediate states: $\bar{X}[:, n+1] = X[:, n] + \Delta x$ where $\Delta x = \mu^{\text{Heun}}(T, X)[:, n]\Delta t + \sigma(T, X)[:, n]\Delta W_n^{(b)}$
        \STATE Propagate forward states: $X[:, n+1] = X[:, n] + \frac{1}{2}(\Delta x + \bar{\Delta} x)$ where $\bar{\Delta} x = \mu^{\text{Heun}}(T, \bar{X})[:, n+1]\Delta t + \sigma(T, \bar{X})[:, n+1]\Delta W_n^{(b)}$
    \ENDFOR

    Allocate $Y, \bar{Y} \in \mathbb{R}^{B \times (N+1)}$, $Z, \bar{Z} \in \mathbb{R}^{B \times (N+1) \times d}$, and $c, \bar{c} \in \mathbb{R}^{B \times (N+1)}$

    \STATE Evaluate network outputs : $Y[b, n] = u_\theta(T, X)[b, n]$ and $\bar{Y}[b, n] = u_\theta(T, \bar{X})[b, n]$ for each $b, n$
    \STATE Evaluate network gradients : $Z[b, n] = \nabla_x u_\theta(T, X)[b, n]$ and $\bar{Z}[b, n] = \nabla_x u_\theta(T, \bar{X})[b, n]$ for each $b, n$
    \STATE Evaluate weighted network Laplacians : $c[b, n] = \mathrm{Tr}[(\sigma^T (\nabla_x^2 u_\theta) \sigma)(T, X)[b, n]]$ and $\bar{c}[b, n] = \mathrm{Tr}[(\sigma^T (\nabla_x^2 u_\theta) \sigma)(T, \bar{X})[b, n]]$
    \STATE Compute one-step BSDE prediction : Define $\hat{Y} \in \mathbb{R}^{B \times (N+1)}$ by:
    \begin{align*}
        \bar{\hat{Y}}[b, n+1] &= Y[b, n] + \Delta y & \text{where}~\Delta y = \phi_{u_\theta}^{\text{Heun}}(T, X)[b, n] \Delta t + [Z^T \sigma(T, X)][b, n]\Delta W_{n}^{(b)} \\
        \hat{Y}[b, n+1] &= Y[b, n] + \frac{1}{2}(\Delta y + \bar{\Delta} y) & \text{where}~ \bar{\Delta} y = \phi_{u_\theta}^{\text{Heun}}(T, \bar{X})[b, n+1] \Delta t + [\bar{Z}^T \sigma(T, \bar{X})][b, n+1]\Delta W_{n}^{(b)}
    \end{align*}
    \STATE Compute BSDE self-consistency loss: $L_{\mathrm{bsde}} = \frac{1}{B}\sum_{b=0}^{B-1}\sum_{n=0}^{N-1}\left(Y[b, n+1] - \hat{Y}[b, n+1]\right)^2$
    \STATE Return the BSDE loss (with optional soft-constraint terminal condition loss) as in Algorithm \ref{Unbiased_EM-BSDE_algorithm}
\end{algorithmic}
}
\end{algorithm*}

\textbf{Heun-BSDE} \citep{park2025integration}.
While EM-BSDE relies on the Euler-Maruyama discretization of the forward SDE and the corresponding one-step BSDE update, the Heun-BSDE variant is based on a higher-order time integration that is consistent with a Stratonovich formulation. Specifically, we first rewrite the forward dynamics in Stratonovich form 
\begin{equation}
    \begin{cases}
        dX_t = \mu(t, X_t) dt + \sigma(t, X_t) \circ dW_t, \\
        du(t, X_t) = \left[\phi_{u} - \frac{1}{2}\mathrm{Tr}[\sigma^T (\nabla^2 u) \sigma]\right](t, X_t) dt + (\nabla u(t, X_t))^T \sigma(t, X_t) \circ dW_t.
    \end{cases}
\end{equation}
We then discretize it using the stochastic Heun scheme and apply the corresponding one-step BSDE update
\begin{equation}
    \begin{cases}
        \bar{F}_n(x) := x + \mu^{\text{Heun}}(t_n, x) \Delta t + \sigma(t_n, x) \Delta W_n , \\
        \bar{B}_n(x; u) := u(t_n, x) + \phi_u^{\text{Heun}}(t_n, x) \Delta t + \nabla u(t_n, x)^T \sigma(t_n, x) \Delta W_n .
    \end{cases}
\end{equation}
\begin{equation}
    \begin{cases}
        F_n^{\text{Heun}}(x) := x + \frac{\mu^{\text{Heun}}(t_n, x) + \mu^{\text{Heun}}(t_{n+1}, \bar{F}_n(x))}{2}\Delta t + \frac{\sigma(t_n, x) + \sigma(t_{n+1}, \bar{F}_n(x))}{2} \Delta W_n, \\
        B_n^{\text{Heun}}(x; u) := u(t_n, x) + \frac{\phi_u^{\text{Heun}}(t_n, x) + \phi_u^{\text{Heun}}(t_{n+1}, \bar{F}_n(x))}{2} \Delta t + \frac{[(\nabla u)^T \sigma](t_n, x) + [(\nabla u)^T \sigma](t_{n+1}, \bar{F}_n(x))}{2} \Delta W_n .
    \end{cases}
\end{equation}

Here $\mu^{\text{Heun}}$ and $\phi_u^{\text{Heun}}$ incorporate the Stratonovich-Itô corrections associated with moving from Stratonovich dynamics to an Itô representation. These correction terms can be derived rigorously from the Stratonovich–Itô conversion formula via quadratic covariation. From a complementary, discrete-time viewpoint, the same structure is recovered in the Taylor-expansion analysis of the Heun-BSDE loss: the midpoint evaluation produces local contributions proportional to $\Delta W_n \Delta W_n^T$ (See \eqref{forward_correction} and \eqref{backward_correction}), whose expectation equals $I \Delta t$. For completeness, the explicit correction formulas are

\begin{equation}
    \mu^{\text{Heun}} = \mu - \frac{1}{2}\sum_{i=1}^{d}(\partial_{x_i} \sigma) \sigma, ~~~~ \phi_u^{\text{Heun}} = \phi_u - \frac{1}{2} \mathrm{Tr}[\sigma^T (\nabla^2 u) \sigma] - \frac{1}{2} \sum_{i=1}^{d} (\nabla u)^T(\partial_{x_i} \sigma) \sigma .
\end{equation}

Based on the resulting discrete FBSDE, we construct the one-step self-consistency error in the same manner as for EM-BSDE, and define the corresponding one-step loss:

\begin{equation}
    \ell_{\mathrm{Heun}}(\theta; n, x) := \mathbb{E}\left[|\mathrm{err}_n^{\mathrm{Heun}}(x; u_\theta)|^2 \right] , ~~~~ \text{where} ~~ \mathrm{err}_n^{\mathrm{Heun}}(x; u) := \frac{u(t_{n+1}, F_n^{\text{Heun}}(x)) - B_n^{\text{Heun}}(x; u)}{\Delta t} .
\end{equation}

Finally, we aggregate the per-step losses over the time grid and incorporate the terminal condition to form the overall training objective. The resulting Heun-BSDE objective is optimized using the procedure presented in Algorithm~\ref{Heun-BSDE_algorithm}.

\textbf{Shotgun} \citep{xu2025deep}. Compared with the EM-/Heun-BSDE constructions above, the Shotgun method introduces three key modifications. 
First, instead of using a fixed uniform grid shared across the batch, it randomizes the initial time step so that each sample in the batch is trained on a slightly shifted time grid. 
Second, on each such grid, it introduces a finer inner step size $\tau$ and constructs a Brownian-increment-free (i.e., $\Delta W$-free) self-consistency error via antithetic pairing $(\Delta W,-\Delta W)$, so that the resulting error no longer depends explicitly on $\Delta W$.
Third, the per-step error is computed using multiple independent shots (i.e., multiple i.i.d.\ Brownian draws), and the resulting shot-wise errors are averaged to form the final loss, which reduces discretization-induced bias at a fixed time-stepping scheme.

Formally, let $\Delta t := T_{\mathrm{end}}/(N-1)$ and sample the first step as $t_0 = 0$, $t_1 \sim \mathcal{U}(0, \Delta t)$, $t_N = T_{\text{end}}$, and $t_n = t_1 + (n-1) \Delta t$ for $n = 2, ..., N-1$. Denote $\Delta t_n := t_{n+1}-t_n$. We then sample independent Brownian increments $\Delta W_n \sim \mathcal{N}(0, \Delta t_n)$ for each $n$, and generate trajectories using the EM discretization:
\begin{equation}
    \begin{cases}
        \hat{X}_{n+1} = \hat{X}_n + \mu(t_n, \hat{X}_n) \Delta t_n + \sigma(t_n, \hat{X}_n)\Delta W_n, \\
        u(t_{n+1}, \hat{X}_{n+1}) = u(t_n, \hat{X}_n) + \phi_{u}(t_n, \hat{X}_n, u(t_n, \hat{X}_n), \nabla u(t_n, \hat{X}_n)) \Delta t_n + (\nabla u(t_n, \hat{X}_n))^T\sigma(t_n, \hat{X}_n)\Delta W_n.
    \end{cases}
\end{equation}

On each grid, we further introduce a finer time step $\tau$ and sample Brownian increment $\Delta w_n \sim \mathcal{N}(0, \tau I_d)$. Using the antithetic pair $(\Delta w_n, -\Delta w_n)$, we define $F_n^{\pm}(x) := x + \mu(t_n, x) \tau \pm \sigma(t_n, x)\Delta w_n$. We then construct a $\Delta W$-free one-step self-consistency error and the corresponding one-step loss by averaging $M$ independent shots:
\begin{equation}
    \mathrm{err}_n^{\mathrm{SG}}(x; u) = \frac{u(t_n + \tau, F_n^{+}(x)) + u(t_n + \tau, F_n^{-}(x)) - 2u(t_n, x)}{2 \tau} - \phi_{u}(t_n, x),
\end{equation}
\begin{equation}
    \ell^M_{\mathrm{SG}}(\theta; n, x) := \mathbb{E}\left[\left| \mathrm{Shot}_M\left[\mathrm{err}_n^{\mathrm{SG}}(x; u_\theta)\right] \right|^2 \right].
\end{equation}

The overall objective is obtained by aggregating the per-step losses along the time grid, together with the terminal condition; see Algorithm~\ref{Shotgun_algorithm}.

\begin{algorithm*}[tb]
    \caption{Shotgun loss algorithm}
    \label{Shotgun_algorithm}
{\small
\begin{algorithmic}
    \STATE {\bfseries Input:} batch size $B$, terminal time $T_{\text{end}}$, number of steps $N \geq 2$, neural network $u_\theta(t, x)$, finer time step $\tau$, number of shots $M$.
    \STATE Construct time step : Define $\Delta t = T_{\text{end}}/(N-1)$, and sample first time step $\{\Delta t_0^{(b)}\}_{b\in[B]} \overset{\text{i.i.d.}}{\sim}\mathcal{U}(0,\Delta t)$
    \STATE Construct time grid : Define $T \in \mathbb{R}^{B \times (N+1)}$ by $T[b, 0] = 0$, $T[b, 1] = \Delta t_0$, $T[b, N] = T_{\text{end}}$, $T[b, n] = \Delta t_0 + (n-1)\Delta t$ for each $b, n$, and define $\Delta t_n = T[:, n+1] - T[:, n]$ for each $n$ 
    \STATE Initialize shotgun states : Allocate $X^+ \in \mathbb{R}^{B \times N \times M \times d}$ and $X^- \in \mathbb{R}^{B \times N \times M \times d}$
    \STATE Initialize main forward states : Allocate $X_{\text{main}} \in \mathbb{R}^{B \times (N+1) \times d}$, and set $X_{\text{main}}[b, 0] = x_0$ for each $b,n$
    \FOR{$n=0$ {\bfseries to} $N-1$}
        \STATE Sample i.i.d. Brownian increments: $\{\Delta w^{(b)}_{n,i}\}_{b\in[B],\,i\in[M]} \overset{\text{i.i.d.}}{\sim}\mathcal{N}(0, \tau I_d)$ and $\{\Delta W_{n}^{(b)}\}_{b\in[B]} \overset{\text{i.i.d.}}{\sim}\mathcal{N}(0,\Delta t_n I_d)$.
        \STATE Compute shotgun states: $X^{\pm}[:, n, i] = X_{\text{main}}[:, n] + \mu(T, X_{\text{main}})[:, n]\tau \pm \sigma(T, X_{\text{main}})[:, n]\Delta w_{n,i}^{(b)}$ for each $n,i$
        \STATE  Propagate main forward state: $X_{\text{main}}[:, n+1] = X_{\text{main}}[:, n] + \mu(T, X_{\text{main}})[:, n]\Delta t_n + \sigma(T, X_{\text{main}})[:, n]\Delta W_n^{(b)}$
    \ENDFOR
    
    Allocate $Y^+ \in \mathbb{R}^{B \times N \times M}$, $Y^- \in \mathbb{R}^{B \times N \times M}$ $Y_{\text{main}} \in \mathbb{R}^{B \times (N+1)}$, and $Z_{\text{main}} \in \mathbb{R}^{B \times (N+1) \times d}$
     \STATE Evaluate network outputs : $Y^{\pm}[b, n, i] = u_\theta(T[b, n]+\tau, X^{\pm}[b, n, i])$ and $Y_{\text{main}}[b, n] = u_\theta(T, X_{\text{main}})[b, n]$ for each $b, n, i$
    \STATE Evaluate network gradients : $Z_{\text{main}}[b, n] = \nabla_x u_\theta(T, X_{\text{main}})[b, n]$ for each $b, n$
    \STATE Compute drift term : Define $\Phi \in \mathbb{R}^{B \times N \times M}$ by $\Phi[b, n] = \phi(T, X_{\text{main}}, Y_{\text{main}}, Z_{\text{main}})$ for each $b, n$
    \STATE Compute BSDE self-consistency loss
    $$
    L_{\mathrm{bsde}} = \frac{1}{B}\sum_{b=0}^{B-1}\sum_{n=0}^{N-1}\left(\frac{1}{M}\sum_{i=1}^{M}\left( \frac{[Y^+ + Y^- - 2 Y_{\text{main}}][b, n, i]}{2\tau} - \Phi[b, n] \right) \right)^2.
    $$
    \STATE Return the BSDE loss (with optional soft-constraint terminal condition loss) as in Algorithm \ref{Unbiased_EM-BSDE_algorithm}
    
\end{algorithmic}
}
\end{algorithm*}

\textbf{Multi-Shot EM-BSDE.}
The Multi-Shot EM-BSDE method can be viewed as the standard EM-BSDE baseline augmented only with the multiple independent shots mechanism from Shotgun. Equivalently, it matches the proposed Un-EM-BSDE construction with the debiasing step removed, retaining only the shot-averaging operator. Accordingly, the time discretization and the underlying one-step update are identical to those of EM-BSDE, and we continue to use the same EM one-step self-consistency error $\mathrm{err}_n^{\mathrm{EM}}$. Following the same logic as in Shotgun, we evaluate this error using $M$ independent shots and define the corresponding one-step loss via shot averaging:

\begin{equation}
    \ell^M_{\mathrm{SEM}}(\theta; n, x) := \mathbb{E}\left[\left| \mathrm{Shot}_M\left[\mathrm{err}_n^{\mathrm{EM}}(x; u_\theta)\right] \right|^2 \right] .
\end{equation}

We define the overall objective by integrating the per-step losses over the time grid, together with the terminal condition; see Algorithm~\ref{Multi-Shot_EM-BSDE_algorithm}.

\begin{algorithm*}[tb]
    \caption{Multi-Shot EM-BSDE loss algorithm}
    \label{Multi-Shot_EM-BSDE_algorithm}
{\small
\begin{algorithmic}
    \STATE {\bfseries Input:} batch size $B$, terminal time $T_{\text{end}}$, number of steps $N \geq 2$, neural network $u_\theta(t, x)$, finer time step $\tau$, number of shots $M$.
    \STATE Construct time grid: Define $\Delta t = T_{\text{end}}/N$, and define $T \in \mathbb{R}^{B \times (N+1)}$ by $T[b, n] = n\Delta t$ for each $b, n$
    \STATE Initialize candidate states : Allocate $X \in \mathbb{R}^{B \times (N+1) \times M \times d}$, and set $X[b, 0, 0] = x_0$ for each $b$
    \STATE Initialize main forward states : $(t, x) = (T[:, 0], X[:, 0, 0])$
    \FOR{$n=0$ {\bfseries to} $N-1$}
        \STATE Sample i.i.d. Brownian increments: $\{\Delta W_{n,i}^{(b)}\}_{b\in[B],\,i\in[M]} \overset{\text{i.i.d.}}{\sim}\mathcal{N}(0,\Delta t\, I_d)$.
        \STATE Construct candidate next states: $X[:, n+1, i] = x + \mu(t, x)\Delta t + \sigma(t, x)\Delta W_{n,i}^{(b)}$ for each $n,i$
        \STATE Update main forward states : $(t, x) = (T[:, n+1], X[:, n+1, 0])$
    \ENDFOR
    \STATE Construct main forward trajectory: Define $X_{\text{main}} = X[:, :, 0]$
    \STATE Evaluate network outputs : Define $Y \in \mathbb{R}^{B\times (N+1) \times M}$ by $Y[b, n, i] = u_\theta(T[b, n], X[b, n, i])$ for each $b, n, i$
    \STATE Evaluate network gradients : Define $Z_{\text{main}} \in \mathbb{R}^{B \times (N+1) \times d}$ by $Z_{\text{main}}[b, n] = \nabla_x u_\theta(T, X_{\text{main}})[b, n]$ for each $b, n$
    \STATE Compute one-step BSDE prediction : Define $Y_{\text{main}} = Y[:, :, 0]$, and define $\hat{Y} \in \mathbb{R}^{B \times (N+1) \times M}$ by:
    $$
    \hat{Y}[b, n+1, i] = Y_{\text{main}}[b, n] + \phi(T, X_{\text{main}}, Y_{\text{main}}, Z_{\text{main}})[b, n] \Delta t + [Z_{\text{main}}^T \sigma(T, X_{\text{main}})][b, n]\Delta W_{n,i}^{(b)} ~~~~ \text{for each }b, n, i
    $$
    \STATE Compute BSDE self-consistency loss: $L_{\mathrm{bsde}} = \frac{1}{B}\sum_{b=0}^{B-1}\sum_{n=0}^{N-1}\left( \frac{1}{M}\sum_{i=0}^{M-1}(Y[b, n+1, i] - \hat{Y}[b, n+1, i]) \right)^{2}$
    \STATE Return the BSDE loss (with optional soft-constraint terminal condition loss) as in Algorithm \ref{Unbiased_EM-BSDE_algorithm}
\end{algorithmic}
}
\end{algorithm*}

\subsection{Proof of bias for the existing BSDE loss}

\label{proof_existing_BSDE_bias}

In this section, we restate and prove Lemma~\ref{bias_EM}, Lemma~\ref{bias_Heun}, Proposition~\ref{bias_shotgun}, and Proposition~\ref{bias_multishot_EM}.

\begin{restatelemma}{bias_EM}{Bias of the EM-BSDE loss \citep{park2025integration}}
Suppose that $\mu, \sigma$ are bounded and $u_\theta$ is $C^{1,2}$. Then we have
\begin{equation}
    \ell_{\mathrm{EM}}(\theta; n, x) = \left([\mathcal{L}[u_\theta] - \phi_{u_\theta}](t_n, x)\right)^2 + \frac{1}{2}\mathrm{Tr}[(\sigma^T(\nabla^2 u_\theta) \sigma)^2](t_n, x) + O(\Delta t^{1/2}) .
\end{equation}
\end{restatelemma}

\begin{proof}
    By Taylor expansion, for any $n \in [N]$ and $x \in \mathbb{R}^d$, we obtain:
    \begin{align}
        &u_\theta(t_n + \Delta t, F_n(x)) - u_\theta(t_n, x) \\
        \approx~& (\partial_t u_\theta(t_n, x)) \Delta t + \nabla u_\theta(t_n, x) \cdot (F_n(x) - x) + \frac{1}{2}(F_n(x) - x)^T(\nabla^2 u_\theta(t_n, x))(F_n(x) - x) \label{EM_approx_1} \\
        \approx~& (\partial_t u_\theta(t_n, x))\Delta t + [\nabla u_\theta \cdot (\mu \Delta t + \sigma \Delta W_n)](t_n, x) + \frac{1}{2}\Delta W_n^T [\sigma^T (\nabla^2 u_\theta) \sigma](t_n, x) \Delta W_n \label{EM_approx_2} \\
        \approx~& (\partial_t u_\theta(t_n, x))\Delta t + [\nabla u_\theta \cdot (\mu \Delta t + \sigma \Delta W_n)](t_n, x) + \frac{1}{2}\mathrm{Tr}[\sigma^T (\nabla^2 u_\theta) \sigma](t_n, x)\Delta t \label{EM_approx_3} \\
        =~& \left[ \partial_t u_\theta + \mu \cdot \nabla u_\theta + \frac{1}{2}\mathrm{Tr}[\sigma^T (\nabla^2 u_\theta) \sigma] \right](t_n, x) \Delta t + (\nabla u_\theta(t_n, x))^T \sigma(t_n, x) \Delta W_n \\
        =~& \mathcal{L}[u_\theta](t_n, x)\Delta t + (\nabla u_\theta(t_n, x))^T\sigma(t_n, x)\Delta W_n.
    \end{align}
    Thus, we can write
    \begin{equation}
        \label{taylor-EM}
        u_\theta(t_n + \Delta t, F_n(x)) - u_\theta(t_n, x) = \mathcal{L}[u_\theta](t_n, x) \Delta t + (\nabla u_\theta (t_n, x))^T \sigma(t_n, x) \Delta W_n + \epsilon(n, x; u_\theta) .
    \end{equation}
    where
    \begin{align}
        \epsilon(n, x; u_\theta)
        =~& \frac{1}{2}\Delta W_n^T [\sigma^T(\nabla^2 u_\theta) \sigma](t_n, x) \Delta W_n - \frac{1}{2}\mathrm{Tr}[\sigma^T(\nabla^2 u_\theta)\sigma](t_n, x)\Delta t \label{EM_approx_err_1} & [~\because \eqref{EM_approx_3}~] \\
        &+ \frac{1}{2} [\mu^T (\nabla^2 u_\theta) \mu](t_n, x)\Delta t^2 - \Delta W_n^T [\sigma^T (\nabla^2 u_\theta) \mu](t_n, x) \Delta t \label{EM_approx_err_2} & [~\because \eqref{EM_approx_2}~] \\
        &+ \frac{1}{2}(\partial_{tt} u_\theta(t_n, x)) \Delta t^2 + (\text{third- and higher-order terms}) \label{EM_approx_err_3} & [~\because \eqref{EM_approx_1}~] .
    \end{align}

Denote $H_{u_\theta} = \sigma^T(\nabla^2 u_\theta)\sigma$, and let $\xi_n = (\Delta t)^{-1/2}\Delta W_n$. We have
\begin{align}
    \mathrm{err}_n^{\mathrm{EM}}(x; u_\theta) &= \frac{u_\theta(t_n + \Delta t, F_n(x)) - B_n(x; u_\theta)}{\Delta t} \\
    &= \frac{(\mathcal{L}[u_\theta](t_n, x) - \phi(t_n, x; u_\theta))\Delta t + \epsilon(n, x; u_\theta)}{\Delta t} \\
    &= [\mathcal{L}[u_\theta] - \phi_{u_\theta}](t_n, x) + \frac{1}{2}[\xi_n^T H_{u_\theta} \xi_n - \mathrm{Tr}[H_{u_\theta}]](t_n, x) + O(\Delta t^{1/2}).
    \label{EM-error}
\end{align}

Therefore we have
\begin{align}
    \ell_{\mathrm{EM}}(\theta; n, x) &= \mathbb{E}\left[|\mathrm{err}_n^{\mathrm{EM}}(x; u_\theta)|^2 \right] \\
    &= \mathbb{E}\left[ \left|[\mathcal{L}[u_\theta] - \phi_{u_\theta}](t_n, x) + \frac{1}{2}[\xi_n^T H_{u_\theta} \xi_n - \mathrm{Tr}[H_{u_\theta}]](t_n, x)\right|^2 \right] + O(\Delta t^{1/2}) \\
    &= \left([\mathcal{L}[u_\theta] - \phi_{u_\theta}](t_n, x)\right)^2 + \frac{1}{4}\mathbb{E}\left[ \left| [\xi_n^T H_{u_\theta} \xi_n - \mathrm{Tr}[H_{u_\theta}]](t_n, x)\right|^2 \right] + O(\Delta t^{1/2}) \\
    &= \left([\mathcal{L}[u_\theta] - \phi_{u_\theta}](t_n, x)\right)^2 + \frac{1}{2} \mathrm{Tr}[H_{u_\theta}^2](t_n, x)+ O(\Delta t^{1/2}).
\end{align}
\end{proof}

\begin{remark}
The bound in Lemma~\ref{bias_EM} is not tight in the strict sense. Indeed, since the Brownian increment satisfies $\mathbb{E}[\Delta W_n] = 0$, the $\Delta t^{3/2}$-order contributions appearing in the term in \eqref{EM_approx_err_2} and in the third-order remainder in \eqref{EM_approx_err_3} vanish after taking expectation once multiplied with the corresponding deterministic factors. Consequently, under an exact expectation, the remainder can be sharpened from $O(\Delta t^{1/2})$ to $O(\Delta t)$. \\
However, in practice, we do not have access to the exact conditional expectation when estimating the one-step loss, and the above cancellation cannot be exploited. As a result, $\Delta W_n$ should be viewed as a random variable of scale $\Delta t^{1/2}$ (i.e., $\Delta W_n = \Delta t^{1/2} \xi$ with $\xi \sim \mathcal{N}(0, I)$), and the corresponding $O(\Delta t^{1/2})$ control represents the effective bound relevant for stochastic estimation.
\end{remark}

\begin{restatelemma}{bias_Heun}{Unbiasedness of the Heun-BSDE loss \citep{park2025integration}}
Suppose that $\mu, \sigma$, and $\phi_{u_\theta}$ are all in $C^{1,1}$, and $u_\theta$ is in $C^{1,3}$. Then we have:
\begin{equation}
    \ell_{\mathrm{Heun}}(\theta; n, x) = \left([\mathcal{L}[u_\theta] - \phi_{u_\theta}](t_n, x)\right)^2 + O(\Delta t^{1/2}).
\end{equation}
\end{restatelemma}

\begin{proof}
By Taylor expansion, for any $n \in [N]$ and $x \in \mathbb{R}^d$, we obtain
\begin{align}
    \mu^{\text{Heun}}(t_n + \Delta t, \bar{F}_n(x)) &= \mu^{\text{Heun}}(t_n, x) + O(\Delta t^{1/2}), \\
    \sigma(t_n + \Delta t, \bar{F}_n(x)) &= \sigma(t_n, x) + \sum_{i=1}^{d} \partial_{x_i} \sigma(t_n, x)(\sigma(t_n, x)\Delta W_n)_i + O(\Delta t), \\
    \phi_{u_\theta}^{\text{Heun}}(t_n + \Delta t, \bar{F}_n(x)) &= \phi_{u_\theta}^{\text{Heun}}(t_n, x) + O(\Delta t^{1/2}), \\
    [(\nabla u_\theta)^T \sigma](t_n+\Delta t, \bar{F}_n(x)) &= [(\nabla u_\theta)^T \sigma](t_n, x) + \sum_{i=1}^{d} \partial_{x_i} [(\nabla u_\theta)^T \sigma](t_n, x) (\sigma(t_n, x) \Delta W_n)_i + O(\Delta t).
\end{align}

Denote $H_{u_\theta} = \sigma^T(\nabla^2 u_\theta)\sigma$ and let $\xi_n = (\Delta t)^{-1/2}\Delta W_n$. Then we have
\begin{align}
    &F^{\text{Heun}}_n(x) - x \\
    =~& \frac{\mu^{\text{Heun}}(t_n, x) + \mu^{\text{Heun}}(t_n + \Delta t, \bar{F}_n(x))}{2}\Delta t + \frac{\sigma(t_n, x) + \sigma(t_n + \Delta t, \bar{F}_n(x))}{2}\Delta W_n \\
    =~& \left( \mu^{\text{Heun}}(t_n, x) + \frac{1}{2} \sum_{i=1}^{d} \partial_{x_i}\sigma(t_n, x)(\sigma(t_n, x)\xi_n)_i \xi_n \right) \Delta t + \sigma(t_n, x) \Delta W_n + O(\Delta t^{3/2}) \label{forward_correction}.
\end{align}
\begin{align}
    &B^{\text{Heun}}_n(x; u_\theta) - u_\theta(t_n, x) \\
    =~& \frac{\phi_{u_\theta}^{\text{Heun}}(t_n, x) + \phi_{u_\theta}^{\text{Heun}}(t_n + \Delta t, \bar{F}_n(x))}{2}\Delta t + \frac{[(\nabla u_\theta)^T \sigma](t_n, x) + [(\nabla u_\theta)^T \sigma](t_n + \Delta t, \bar{F}_n(x))}{2}\Delta W_n \\
    =~& \left( \phi_{u_\theta}^{\text{Heun}}(t_n, x) + \frac{1}{2} \sum_{i=1}^{d} \partial_{x_i}[(\nabla u_\theta)^T \sigma](t_n, x)(\sigma(t_n, x) \xi_n)_i \xi_n \right)\Delta t \\
    &+ [(\nabla u_\theta)^T \sigma](t_n, x) \Delta W_n + O(\Delta t^{3/2}) \\
    =~& \left( \phi_{u_\theta}^{\text{Heun}}(t_n, x) + \frac{1}{2} \sum_{i=1}^{d}[\partial_{x_i}(\nabla u_\theta(t_n, x))^T \sigma(t_n, x) + (\nabla u_\theta(t_n, x))^T\partial_{x_i}\sigma(t_n, x)](\sigma(t_n, x) \xi_n)_i \xi_n \right) \Delta t \\
    &+ [(\nabla u_\theta)^T \sigma](t_n, x) \Delta W_n + O(\Delta t^{3/2}) \\ 
    =~& \left[ \phi_{u_\theta}^{\text{Heun}} + \frac{1}{2} \xi_n^T H_{u_\theta} \xi_n + \frac{1}{2} \sum_{i=1}^{d}(\nabla u_\theta)^T \partial_{x_i} \sigma (\sigma \xi_n)_i \xi_n \right](t_n, x) \Delta t \\ \label{backward_correction}
    &+ [(\nabla u_\theta)^T \sigma](t_n, x) \Delta W_n + O(\Delta t^{3/2}) .
\end{align}

And by Taylor expansion, we have
\begin{align}
    &u_\theta(t_n + \Delta t, F_n^{\text{Heun}}(x)) - u_\theta(t_n, x) \\
    =~& (\partial_t u_\theta(t_n, x))\Delta t + \nabla u_\theta \cdot (F_n^{\text{Heun}}(x) - x) + \frac{1}{2}(F_n^{\text{Heun}}(x) - x)^T\nabla^2 u_\theta(t_n, x) (F_n^{\text{Heun}}(x) - x) + O(\Delta t^{3/2}) \\
    =~& \left[ \partial_t u_\theta + \mu^{\text{Heun}} \cdot \nabla u_\theta + \frac{1}{2} \xi_n^T H_{u_\theta} \xi_n + \frac{1}{2} \sum_{i=1}^{d} (\nabla u_\theta)^T \partial_{x_i}\sigma(\sigma \xi_n)_i \xi_n \right](t_n, x) \Delta t \\
    &+ [(\nabla u_\theta)^T \sigma](t_n, x) \Delta W_n + O(\Delta t^{3/2}).
\end{align}

Therefore, we have
\begin{align}
    \mathrm{err}_n^{\mathrm{Heun}}(x; u_\theta)
    &= \frac{u_\theta(t_n + \Delta t, F_n^{\text{Heun}}(x)) - B_n^{\text{Heun}}(x; u_\theta)}{\Delta t} \\
    &= \frac{\left(u_\theta(t_n + \Delta t, F_n^{\text{Heun}}(x)) - u_\theta(t_n, x)\right) - \left(B_n^{\text{Heun}}(x; u_\theta) - u_\theta(t_n, x) \right)}{\Delta t} \\
    &= \left[ \partial_t u_\theta + \mu^{\text{Heun}} \cdot \nabla u_\theta - \phi_{u_\theta}^{\text{Heun}} \right] (t_n, x) + O(\Delta t^{1/2}) \\
    &= \left[ \partial_t u_\theta + \mu \cdot \nabla u_\theta + \frac{1}{2} \mathrm{Tr}[H_{u_\theta}] - \phi_{u_\theta}  \right] (t_n, x) + O(\Delta t^{1/2}) \\
    &= [\mathcal{L}[u_\theta] - \phi_{u_\theta}](t_n, x) + O(\Delta t^{1/2}),
\end{align}
\begin{equation}
    \therefore~~ \ell_{\text{Heun}}(\theta; n, x) = \mathbb{E}\left[\left| \mathrm{err}_n^{\text{Heun}}(x; u_\theta)\right|^2\right] = \left([\mathcal{L}[u_\theta] - \phi_{u_\theta}](t_n, x) \right)^2 + O(\Delta t^{1/2}) .
\end{equation}
\end{proof}

\begin{restateproposition}{bias_shotgun}{Bias of the Shotgun loss \citep{xu2025deep}}
Suppose that $\mu, \sigma$ are bounded and $u_\theta$ is $C^{1,2}$. Let $\tau>0$ denote the time increment used in the Shotgun construction. Then we have:
\begin{equation}
    \ell^M_{\mathrm{SG}}(\theta; n, x) = \left([\mathcal{L}[u_\theta] - \phi_{u_\theta}](t_n, x)\right)^2 + \frac{1}{2M}\mathrm{Tr}[(\sigma^T(\nabla^2 u_\theta) \sigma)^2](t_n, x) + O(\tau).
\end{equation}
\end{restateproposition}

\begin{proof}
Define $x^{\pm} := x + \mu(t_n, x) \tau \pm \sigma (t_n, x) \Delta w_n$. Then, using the same derivation as in \eqref{taylor-EM}, we obtain
\begin{equation}
    u_\theta(t_n + \tau, x^{\pm}) - u_\theta(t_n, x) = \mathcal{L}[u_\theta](t_n, x)\tau \pm (\nabla u_\theta(t_n, x))^T \sigma(t_n, x) \Delta w_n + \epsilon^{\pm}(n, x; u_\theta),
\end{equation}
where
\begin{align}
    \epsilon^{\pm}(n, x; u_\theta)
    =~& \frac{1}{2}\Delta w_n^T [\sigma^T(\nabla^2 u_\theta) \sigma](t_n, x) \Delta w_n - \frac{1}{2}\mathrm{Tr}[\sigma^T(\nabla^2 u_\theta)\sigma](t_n, x)\tau  \\
    &+ \frac{1}{2} [\mu^T (\nabla^2 u_\theta) \mu](t_n, x)\tau^2 \mp \Delta w_n^T [\sigma^T (\nabla^2 u_\theta) \mu](t_n, x) \tau \\
    &+ \frac{1}{2}(\partial_{tt} u_\theta(t_n, x)) \tau^2 + (\text{third- and higher-order terms}) .
\end{align}

Denote $H_{u_\theta} = \sigma^T (\nabla^2 u_\theta) \sigma$, and let $\xi_n = \tau^{-1/2}\Delta w_n$. We have
\begin{align}
    \mathrm{err}_n^{\mathrm{SG}}(x; u_\theta) &= \frac{u_\theta(t_n + \tau, x^+) + u_\theta(t_n + \tau, x^-) - 2u_\theta(t_n, x)}{2 \tau} - \phi_{u_\theta}(t_n, x) \\
    &= [\mathcal{L}[u_\theta] - \phi_{u_\theta}](t_n, x) + \frac{\epsilon^+(n, x; u_\theta) + \epsilon^-(n, x; u_\theta)}{2\tau} \label{shotgun_approx} \\
    &= [\mathcal{L}[u_\theta] - \phi_{u_\theta}](t_n, x) + \frac{1}{2}[\xi_n^T H_{u_\theta} \xi_n - \mathrm{Tr}[H_{u_\theta}]](t_n, x) + O(\tau) .
\end{align}

Given $M$ independent samples $\{\Delta w_{n,i}\}_{i=1}^M \overset{\mathrm{i.i.d.}}{\sim} \mathcal{N}(0,\tau I_d)$, let $\xi_{n,i} = \tau^{-1/2}\Delta w_{n,i}$. Then the one-step loss is formed as follows:

\begin{align}
    \ell^M_{\mathrm{SG}}(\theta; n, x) &= \mathbb{E}\left[\left| \mathrm{Shot}_M\left[\mathrm{err}_n^{\mathrm{SG}}(x; u_\theta)\right] \right|^2 \right] \\
    &= \mathbb{E}\left[ \left|\frac{1}{M}\sum_{i=1}^{M}\left([\mathcal{L}[u_\theta] - \phi_{u_\theta}](t_n, x) + \frac{1}{2}[\xi_{n,i}^T H_{u_\theta} \xi_{n,i} - \mathrm{Tr}[H_{u_\theta}]](t_n, x)\right) \right|^2 \right] + O(\tau) \\
    &= \mathbb{E}\left[ \left|[\mathcal{L}[u_\theta] - \phi_{u_\theta}](t_n, x) + \frac{1}{2M}\sum_{i=1}^{M}[\xi_{n,i}^T H_{u_\theta} \xi_{n,i} - \mathrm{Tr}[H_{u_\theta}]](t_n, x) \right|^2 \right] + O(\tau) \\
    &= ([\mathcal{L}[u_\theta] - \phi_{u_\theta}](t_n, x))^2 + \frac{1}{4M^2}\sum_{i=1}^{M}\mathbb{E}\left[ \left| [\xi_{n, i}^T H_{u_\theta} \xi_{n, i} - \mathrm{Tr}[H_{u_\theta}]](t_n, x)\right|^2 \right] + O(\tau) \\
    &= ([\mathcal{L}[u_\theta] - \phi_{u_\theta}](t_n, x))^2 + \frac{1}{2M}\mathrm{Tr}[H_{u_\theta}^2](t_n, x) + O(\tau).
\end{align}

\end{proof}

\begin{remark}
The $O(\tau)$ remainder in Proposition~\ref{bias_shotgun} reflects a stronger cancellation mechanism than the one discussed around \eqref{EM_approx_err_2}--\eqref{EM_approx_err_3}. 
There, the $\Delta t^{3/2}$-order contributions are proportional to the Brownian increment and disappear only after taking expectation, so the cancellation is not directly available at the level of a single Monte Carlo realization.
In contrast, the Shotgun approximation in \eqref{shotgun_approx} combines the two branch errors through $\varepsilon^{+}+\varepsilon^{-}$, where $\varepsilon^{\pm}$ correspond to the antithetic pair $(\Delta w_n,-\Delta w_n)$ at step size $\tau$. 
This pairing enforces a pathwise sign cancellation: all terms that are odd in $\Delta w_n$ cancel in $\varepsilon^{+}+\varepsilon^{-}$, which removes the leading $\tau^{3/2}$-order contributions directly at the discrete level. 
Consequently, the effective remainder improves to $O(\tau)$ without relying on access to the exact expectation.
\end{remark}

\begin{restateproposition}{bias_multishot_EM}{Bias of the Multi-Shot EM loss}
Suppose that $\mu, \sigma$ are bounded and $u_\theta$ is $C^{1,2}$. Then we have:
\begin{equation}
    \ell^M_{\mathrm{SEM}}(\theta; n, x) = \left([\mathcal{L}[u_\theta] - \phi_{u_\theta}](t_n, x)\right)^2 + \frac{1}{2M}\mathrm{Tr}[(\sigma^T(\nabla^2 u_\theta) \sigma)^2](t_n, x) + O(\Delta t^{1/2}).
\end{equation}
\end{restateproposition}

\begin{proof}
Since the Multi-Shot EM construction is obtained by applying $\mathrm{Shot}_M[\cdot]$ to the standard EM one-step error, the bias expansion is a direct consequence of the EM error approximation in \eqref{EM-error}, combined with the shot-averaging argument used for Shotgun. Given $M$ i.i.d.\ samples $\{\Delta w_{n,i}\}_{i=1}^M \sim \mathcal{N}(0,\Delta t I_d)$, denote $H_{u_\theta} = \sigma^T (\nabla^2 u_\theta) \sigma$, and let $\xi_{n,i} = \Delta t^{-1/2}\Delta w_{n,i}$. Then we have:

\begin{align}
    \ell^M_{\mathrm{SEM}}(\theta; n, x) &= \mathbb{E}\left[\left| \mathrm{Shot}_M\left[\mathrm{err}_n^{\mathrm{EM}}(x; u_\theta)\right] \right|^2 \right] \\
    &= \mathbb{E}\left[ \left|\frac{1}{M}\sum_{i=1}^{M}\left([\mathcal{L}[u_\theta] - \phi_{u_\theta}](t_n, x) + \frac{1}{2}[\xi_{n,i}^T H_{u_\theta} \xi_{n,i} - \mathrm{Tr}[H_{u_\theta}]](t_n, x)\right) \right|^2 \right] + O(\Delta t^{1/2}) \\
    &= ([\mathcal{L}[u_\theta] - \phi_{u_\theta}](t_n, x))^2 + \frac{1}{2M}\mathrm{Tr}[H_{u_\theta}^2](t_n, x) + O(\Delta t^{1/2}).
\end{align}

\end{proof}

\section{Proofs of Lemma~\ref{bias_unbiased_EM} and Theorem~\ref{consistency_unbiased_EM}}
\label{proof_Un-EM-BSDE}

In this section, we restate and prove Lemma~\ref{bias_unbiased_EM} and Theorem~\ref{consistency_unbiased_EM}.

\begin{restatelemma}{bias_unbiased_EM}{Unbiasedness of the Un-EM-BSDE loss}
Suppose that $\mu, \sigma$ are bounded and $u_\theta$ is $C^{1,2}$. Then we have
\begin{equation}
    \ell^{M_1, M_2}_{\mathrm{UEM}}(\theta; n, x) = \left([\mathcal{L}[u_\theta] - \phi_{u_\theta}](t_n, x)\right)^2 + O(\Delta t^{1/2}) .
\end{equation}
\end{restatelemma}

\begin{proof}
Recall the EM one-step error $\mathrm{err}^{\mathrm{EM}}_n(x;u_\theta)$ characterized in \eqref{EM-error}.  Given $M_1 + M_2$ independent samples $\{\Delta w_{n,i}\}_{i=1}^{M_1 + M_2} \overset{\mathrm{i.i.d.}}{\sim} \mathcal{N}(0,\Delta t\, I_d)$, let $\xi_{n, i} = \Delta t^{-1/2} \Delta w_{n,i}$. We apply the shot-averaging operator to $\mathrm{err}^{\mathrm{EM}}_n(x;u_\theta)$ and examine its expectation:

\begin{align}
    &\mathbb{E}\left[ \mathrm{Shot}_{M_1}\left[\mathrm{err}_n^{\mathrm{EM}}(x; u_\theta)\right] \right] \\
    =~& \frac{1}{M_1}\sum_{i=1}^{M_1} \mathbb{E}\left[ [\mathcal{L}[u_\theta] - \phi_{u_\theta}](t_n, x) + \frac{1}{2}[\xi_{n,i}^T H_{u_\theta} \xi_{n,i} - \mathrm{Tr}[H_{u_\theta}]](t_n, x) \right] + O(\Delta t^{1/2}) \\
    =~& [\mathcal{L}[u_\theta] - \phi_{u_\theta}](t_n, x) + O(\Delta t^{1/2}),
\end{align}

\begin{align}
    &\mathbb{E}\left[ \mathrm{Shot}_{M_2}\left[\mathrm{err}_n^{\mathrm{EM}}(x; u_\theta)\right] \right] \\
    =~& \frac{1}{M_2}\sum_{i=M_1 + 1}^{M_1 + M_2} \mathbb{E}\left[ [\mathcal{L}[u_\theta] - \phi_{u_\theta}](t_n, x) + \frac{1}{2}[\xi_{n,i}^T H_{u_\theta} \xi_{n,i} - \mathrm{Tr}[H_{u_\theta}]](t_n, x) \right] + O(\Delta t^{1/2}) \\
    =~& [\mathcal{L}[u_\theta] - \phi_{u_\theta}](t_n, x) + O(\Delta t^{1/2}).
\end{align}

Using these two independent shot-averaged estimates, we can write the Un-EM-BSDE one-step loss as:

\begin{align}
    \ell^{M_1, M_2}_{\mathrm{UEM}}(\theta, n, x) &= \mathbb{E}\left[ ( \mathrm{Shot}_{M_1}[\mathrm{err}_n^{\mathrm{EM}}(x; u_\theta)]) (\mathrm{Shot}_{M_2}[\mathrm{err}_n^{\mathrm{EM}}(x; u_\theta)])\right] \\
    &= \mathbb{E}\left[ \mathrm{Shot}_{M_1}[\mathrm{err}_n^{\text{EM}}(x; u_\theta)] \right] \mathbb{E}\left[ \mathrm{Shot}_{M_2}[\mathrm{err}_n^{\text{EM}}(x; u_\theta)] \right] \\
    &= \left( [\mathcal{L}[u_\theta] - \phi_{u_\theta}](t_n, x)\right)^2 + O(\Delta t^{1/2}).
\end{align}

\end{proof}

\begin{restatetheorem}{consistency_unbiased_EM}{Consistency of the Un-EM-BSDE loss}
Suppose that $\mu, \sigma, \phi_{u_\theta} \in C^{1,0}$, $u_\theta \in C^{1,2}$, and $\Delta t \le 1$. We have
\begin{equation}
    \frac{1}{N}\sum_{n=0}^{N-1} \mathbb{E}_{\hat{X}_n}[\ell_{\mathrm{UEM}}(\theta; n, \hat{X}_n)]
    = \frac{1}{|\mathcal{T}|} \int_\mathcal{T} \mathbb{E}\left[ \left( [\mathcal{L}[u_\theta] - \phi_{u_\theta}](t, X_t) \right)^2 \right] dt \\
    + O(\Delta t^{1/2}).
\end{equation}
\end{restatetheorem}

\begin{proof}
The argument is identical to that of \citet[Theorem~4.2]{park2025integration}, except that Lemma~\ref{bias_EM} is substituted by Lemma~\ref{bias_unbiased_EM}. Concretely, by Lemma~\ref{bias_unbiased_EM}, we obtain
\begin{equation}
    \frac{1}{N} \sum_{n=0}^{N-1}\mathbb{E}_{\hat{X}_n}[\ell_{\mathrm{UEM}}(\theta; n, \hat{X}_n)] = \frac{1}{N}\sum_{n=0}^{N-1} \mathbb{E}_{\hat{X}_n}\left[ ([\mathcal{L}[u_\theta] - \phi_{u_\theta}](t_n, \hat{X}_n))^2 \right] + O(\Delta t^{1/2}).
\end{equation}
Under the regularity assumptions $f,g,\phi_{u_\theta}\in C^{0,1}$ and $u_\theta\in C^{2,1}$,
the map $(t,x)\mapsto ([\mathcal{L} - \phi_{u_\theta}](t,x))^2$ is Lipschitz continuous on $\mathcal{T}\times\mathbb{R}^d$. Moreover, the forward process $(X_t)_{t\in\mathcal{T}}$ is driven by the same Itô SDE as in \citet[Theorem~4.2]{park2025integration}. Therefore, the remainder of the proof in \citet[Theorem~4.2]{park2025integration} carries over unchanged, and the claim follows.
\end{proof}

\section{Proof of Theorem~\ref{variance_comparison}}
\label{proof_variance_comparison}

\begin{restatetheorem}{variance_comparison}{}
Define $\alpha = 2/M - 1/(2M_1) - 1/(2M_2)$ and $\beta = 1/(2M^2) - 1/(4M_1M_2)$. Assume $\beta > 0$ and $\alpha \geq 4/(3M + \beta M^4)$. Then, as $\Delta t \to 0$ and $\tau \to 0$, the estimator variances satisfy
\begin{equation}
    \mathbb{V}\left[ \hat{\ell}^{M_1, M_2}_{\mathrm{UEM}} \right] \le \mathbb{V}\left[ \hat{\ell}^M_{\mathrm{SG}} \right] = \mathbb{V}\left[ \hat{\ell}^M_{\mathrm{SEM}} \right] \le \mathbb{V}\left[ \hat{\ell}_{\mathrm{EM}} \right].
\end{equation}
\end{restatetheorem}

\begin{proof}
To establish the variance comparison, we begin by deriving explicit variance expansions for the four estimators under consideration. We use the same notation and conventions as in the preceding bias analyses and introduce no additional notation.

\begin{align}
    &\mathbb{V}\left[\hat{\ell}_{\mathrm{EM}}(\theta, n, x)\right] \\
    =~& \mathbb{E}\left[ \left( [\mathcal{L}[u_\theta] - \phi_{u_\theta}](t_n, x) + \frac{1}{2} [\xi_n^T H_{u_\theta} \xi_n - \mathrm{Tr}[H_{u_\theta}]](t_n, x) \right)^4 \right]  \\
    &- \left( ([\mathcal{L} [u_\theta] - \phi_{u_\theta}](t_n, x))^2 + \frac{1}{2} \mathrm{Tr}[H_{u_\theta}^2](t_n, x) \right)^2 + O(\Delta t^{1/2}) \\
    =~&([\mathcal{L}[u_\theta] - \phi_{u_\theta}](t_n, x))^4 + \frac{3}{2} ([\mathcal{L}[u_\theta] - \phi_{u_\theta}](t_n, x))^2 \mathbb{E}\left[([\xi_n^T H_{u_\theta} \xi_n - \mathrm{Tr}[H_{u_\theta}]](t_n, x))^2\right] \\
    &+\frac{1}{2}[\mathcal{L}[u_\theta] - \phi_{u_\theta}](t_n, x) \mathbb{E}\left[([\xi_n^T H_{u_\theta} \xi_n - \mathrm{Tr}[H_{u_\theta}]](t_n, x))^3\right] + \frac{1}{16}\mathbb{E}\left[ ([\xi_n^T H_{u_\theta} \xi_n - \mathrm{Tr}[H_{u_\theta}]](t_n, x))^4 \right] \\
    &- ([\mathcal{L}[u_\theta] - \phi_{u_\theta}](t_n, x))^4 - ([\mathcal{L}[u_\theta] - \phi_{u_\theta}](t_n, x))^2 \mathrm{Tr}[H_{u_\theta}^2](t_n, x) - \frac{1}{4}(\mathrm{Tr}[H_{u_\theta}^2](t_n, x))^2 + O(\Delta t^{1/2}) \\
    =~& 2([\mathcal{L}[u_\theta] - \phi_{u_\theta}](t_n, x))^2 \mathrm{Tr}[H_{u_\theta}^2](t_n, x) + 4[\mathcal{L}[u_\theta] - \phi_{u_\theta}](t_n, x) \mathrm{Tr}[H_{u_\theta}^3](t_n, x) + 3\mathrm{Tr}[H_{u_\theta}^4](t_n, x) \\
    &+ \frac{1}{2}(\mathrm{Tr}[H_{u_\theta}^2](t_n, x))^2 + O(\Delta t^{1/2}),
\end{align}

\begingroup
\allowdisplaybreaks[1]

\begin{align}
    &\mathbb{V}\left[ \hat{\ell}_{\mathrm{SG}}^{M}(\theta, n, x) \right] \\
    =~& \mathbb{E}\left[ \left(  [\mathcal{L}[u_\theta] - \phi_{u_\theta}](t_n, x) + \frac{1}{2M}\sum_{i=1}^{M}[\xi_{n,i}^T H_{u_\theta} \xi_{n,i} - \mathrm{Tr}[H_{u_\theta}]](t_n, x)\right)^4  \right] \\
    &- \left( [\mathcal{L}[u_\theta] - \phi_{u_\theta}](t_n, x))^2 + \frac{1}{2M}\mathrm{Tr}[H_{u_\theta}^2](t_n, x) \right)^2 + O(\tau) \displaybreak[2] \\
    =~&([\mathcal{L}[u_\theta] - \phi_{u_\theta}](t_n, x))^4 + \frac{3}{2M^2} ([\mathcal{L}[u_\theta] - \phi_{u_\theta}](t_n, x))^2 \sum_{i=1}^{M} \mathbb{E}\left[([\xi_{n,i}^T H_{u_\theta} \xi_{n,i} - \mathrm{Tr}[H_{u_\theta}]](t_n, x))^2\right] \\
    &+\frac{1}{2M^3}[\mathcal{L}[u_\theta] - \phi_{u_\theta}](t_n, x) \sum_{i=1}^{M} \mathbb{E}\left[([\xi_{n,i}^T H_{u_\theta} \xi_{n,i} - \mathrm{Tr}[H_{u_\theta}]](t_n, x))^3\right] \\
    &+ \frac{1}{16M^4} \sum_{i=1}^{M} \mathbb{E}\left[ ([\xi_{n,i}^T H_{u_\theta} \xi_{n,i} - \mathrm{Tr}[H_{u_\theta}]](t_n, x))^4 \right] \\
    &+ \frac{3}{8M^4}\sum_{1\le i < j\le M} \mathbb{E}\left[[(\xi_{n,i}^T H_{u_\theta} \xi_{n,i} - \mathrm{Tr}[H_{u_\theta}])^2(\xi_{n,j}^T H_{u_\theta} \xi_{n,j} - \mathrm{Tr}[H_{u_\theta}])^2](t_n, x) \right] \\
    &- ([\mathcal{L}[u_\theta] - \phi_{u_\theta}](t_n, x))^4 - \frac{1}{M}([\mathcal{L}[u_\theta] - \phi_{u_\theta}](t_n, x))^2 \mathrm{Tr}[H_{u_\theta}^2](t_n, x) - \frac{1}{4M^2}(\mathrm{Tr}[H_{u_\theta}^2](t_n, x))^2 + O(\tau) \\ 
    =~& \frac{2}{M}([\mathcal{L}[u_\theta] - \phi_{u_\theta}](t_n, x))^2 \mathrm{Tr}[H_{u_\theta}^2](t_n, x) + \frac{4}{M^2}[\mathcal{L}[u_\theta] - \phi_{u_\theta}](t_n, x) \mathrm{Tr}[H_{u_\theta}^3](t_n, x) + \frac{3}{M^3}\mathrm{Tr}[H_{u_\theta}^4](t_n, x) \\
    &+ \frac{1}{2M^2}(\mathrm{Tr}[H_{u_\theta}^2](t_n, x))^2 + O(\tau), \\
\end{align}
\endgroup
\begin{align}
    &\mathbb{V}\left[ \hat{\ell}_{\mathrm{SEM}}^{M}(\theta, n, x) \right] \\
    =~& \mathbb{E}\left[ \left(  [\mathcal{L}[u_\theta] - \phi_{u_\theta}](t_n, x) + \frac{1}{2M}\sum_{i=1}^{M}[\xi_{n,i}^T H_{u_\theta} \xi_{n,i} - \mathrm{Tr}[H_{u_\theta}]](t_n, x)\right)^4  \right] \\
    &- \left( ([\mathcal{L}[u_\theta] - \phi_{u_\theta}](t_n, x))^2 + \frac{1}{2M}\mathrm{Tr}[H_{u_\theta}^2](t_n, x) \right)^2 + O(\Delta t^{1/2}) \\
    =~& \frac{2}{M}([\mathcal{L}[u_\theta] - \phi_{u_\theta}](t_n, x))^2 \mathrm{Tr}[H_{u_\theta}^2](t_n, x) + \frac{4}{M^2}[\mathcal{L}[u_\theta] - \phi_{u_\theta}](t_n, x) \mathrm{Tr}[H_{u_\theta}^3](t_n, x) + \frac{3}{M^3}\mathrm{Tr}[H_{u_\theta}^4](t_n, x) \\
    &+ \frac{1}{2M^2}(\mathrm{Tr}[H_{u_\theta}^2](t_n, x))^2 + O(\Delta t^{1/2}), \\
\end{align}
\begin{align}
    &\mathbb{V}\left[ \hat{\ell}^{M_1, M_2}_{\mathrm{UEM}}(\theta, n, x) \right] \\
    =~& \mathbb{E}\left[ \left(  [\mathcal{L}[u_\theta] - \phi_{u_\theta}](t_n, x) + \frac{1}{2M_1}\sum_{i=1}^{M_1}[\xi_{n,i}^T H_{u_\theta} \xi_{n,i} - \mathrm{Tr}[H_{u_\theta}]](t_n, x)\right)^2  \right. \\
    &\left.\left(  [\mathcal{L}[u_\theta] - \phi_{u_\theta}](t_n, x) + \frac{1}{2M_2}\sum_{i=M_1+1}^{M_1+M_2}[\xi_{n,i}^T H_{u_\theta} \xi_{n,i} - \mathrm{Tr}[H_{u_\theta}]](t_n, x)\right)^2\right]  \\
    & -([\mathcal{L}[u_\theta] - \phi_{u_\theta}](t_n, x))^4 + O(\Delta t^{1/2}) \\
    =~& \left( ([\mathcal{L}[u_\theta] - \phi_{u_\theta}](t_n, x))^2 + \frac{1}{2M_1} \mathrm{Tr}[H_{u_\theta}^2] \right) \left( ([\mathcal{L}[u_\theta] - \phi_{u_\theta}](t_n, x))^2 + \frac{1}{2 M_2} \mathrm{Tr}[H_{u_\theta}^2] \right) \\
    & -([\mathcal{L}[u_\theta] - \phi_{u_\theta}](t_n, x))^4 + O(\Delta t^{1/2}) \\
    =~& \left( \frac{1}{2M_1} + \frac{1}{2M_2} \right) ([\mathcal{L}[u_\theta] - \phi_{u_\theta}](t_n, x))^2 \mathrm{Tr}[H_{u_\theta}^2] + \frac{1}{4M_1 M_2}(\mathrm{Tr}[H_{u_\theta}^2])^2 + O(\Delta t^{1/2}).
    \label{Un-EM-BSDE_variance}
\end{align}

\newpage
Comparing the variance expansions above, we first observe that, as $\Delta t\to0$ and $\tau\to0$,

\begin{equation}
    \mathbb{V}\left[ \hat{\ell}_{\mathrm{SG}}^{M} \right] = \mathbb{V}\left[ \hat{\ell}^{M}_{\mathrm{SEM}} \right]
\end{equation}

Therefore, it remains only to prove
\begin{equation}
    \mathbb{V}\left[ \hat{\ell}^{M}_{\mathrm{SEM}} \right] \le \mathbb{V}\left[ \hat{\ell}_{\mathrm{EM}} \right] \quad \text{and} \quad \mathbb{V}\left[ \hat{\ell}^{M_1, M_2}_{\mathrm{UEM}} \right] \le \mathbb{V}\left[ \hat{\ell}^{M}_{\mathrm{SG}} \right].
\end{equation}

Let $y := [\mathcal{L}[u_\theta] - \phi_{u_\theta}](t_n, x)$, $H := H_{u_\theta}(t_n, x)$, and $T_k := \mathrm{Tr}[H^k]$. Under the variance expansions derived above, the two desired inequalities are equivalent to the nonnegativity of the following two quadratic polynomials in $y$:
\begin{align}
    Q_1(y) &:= 2\left(1 - \frac{1}{M}\right) y^2 T_2 + 4\left(1 - \frac{1}{M^2} \right)y T_3 + 3\left(1 - \frac{1}{M^3}\right)T_4 + \frac{1}{2}\left(1 - \frac{1}{M^2} \right) T_2^2 \geq 0, \\
    Q_2(y) &:= \alpha y^2 T_2 + \frac{4}{M^2} y T_3 + \frac{3}{M^3}T_4 + \beta T_2^2 \geq 0,
\end{align}
Here, $Q_1(y) \geq 0$ corresponds to $\mathbb{V}[\hat{\ell}_{\mathrm{SEM}}^M]\le
\mathbb{V}[\hat{\ell}_{\mathrm{EM}}]$, whereas $Q_2(y) \geq 0$ corresponds to $\mathbb{V}[\hat{\ell}_{\mathrm{UEM}}^{M_1,M_2}]\le
\mathbb{V}[\hat{\ell}_{\mathrm{SG}}^M]$.  

For $Q_1$, the case $M=1$ gives $Q_1 \equiv 0$. For $M > 1$, the leading coefficient is nonnegative, and it suffices to show that its discriminant is nonpositive. The discriminant is
\begin{equation}
    D_1 = 16\left( 1 - \frac{1}{M^2} \right)^2 T_3^2 - 8\left(1 - \frac{1}{M}\right) T_2 \left[ 3\left(1 - \frac{1}{M^3}\right) T_4 + \frac{1}{2}\left(1 - \frac{1}{M^2}\right) T_2^2 \right]
\end{equation}
For $Q_2$, since $\beta > 0$ and $\alpha \geq 4/(3M + \beta M^4) > 0$, its leading coefficient is nonnegative. Its discriminant is
\begin{equation}
    D_2 = \left( \frac{4}{M^2} T_3 \right)^2 - 4\alpha T_2 \left(\frac{3}{M^3}T_4 + \beta T_2^2 \right).
\end{equation}

Since $H$ is symmetric, let $(\lambda_1, ..., \lambda_d)$ be its real eigenvalues. Then
\begin{align}
    T_2^2 &= \left( \sum_{i=1}^{d} \lambda_i^2 \right)^2 = \sum_{i=1}^{d} \lambda_i^4 + 2 \sum_{1\le i < j \le d}\lambda_i^2 \lambda_j^2 \geq \sum_{i=1}^{d}\lambda_i^4 = T_4, \\
    T_3^2 &= \left( \sum_{i=1}^{d} (\lambda_i)(\lambda_i)^2 \right)^2 \le \left( \sum_{i=1}^{d} \lambda_i^2\right) \left( \sum_{i=1}^{d} (\lambda_i^2)^2 \right) = T_2 T_4 & [~\because~ \text{Cauchy-Schwarz}~].
\end{align}

Using these inequalities and $T_2 \geq 0$, we obtain
\begin{align}
    D_1 &\le 16\left(1 - \frac{1}{M^2}\right)^2 T_2 T_4 - 24\left(1 - \frac{1}{M} \right)\left(1 - \frac{1}{M^3} \right) T_2 T_4 \\
    &=-8\left(1 - \frac{1}{M}\right)\left(1 + \frac{1}{M} + \frac{5}{M^2} + \frac{2}{M^3}\right) T_2 T_4 \le 0. \\
    D_2 &\le \frac{16}{M^4}T_2 T_4 - 4\alpha\left(\frac{3}{M^3} + \beta \right)T_2 T_4 \le 0
\end{align}
where the last inequality follows from $\alpha \geq 4/(3M + \beta M^4) = (4/M^4)/(3/M^3 + \beta)$. Therefore $Q_1(y) \geq 0$ and $Q_2(y) \geq 0$ for all $y$, which proves $\mathbb{V}\left[ \hat{\ell}^{M}_{\mathrm{SEM}} \right] \le \mathbb{V}\left[ \hat{\ell}_{\mathrm{EM}} \right]$ and $\mathbb{V}\left[ \hat{\ell}^{M_1, M_2}_{\mathrm{UEM}} \right] \le \mathbb{V}\left[ \hat{\ell}^{M}_{\mathrm{SG}} \right]$. This concludes the proof.

\end{proof}

\section{Computation of $\mathbb{E}\left[ \left(\xi^T H \xi - \mathrm{Tr}[H] \right)^n \right]$}

Suppose $\xi\sim \mathcal{N}(0, I_d)$ and $H \in \mathbb{R}^{d\times d}$ is a symmetric matrix. Then, by the spectral theorem, we can write $H = U^T \Lambda U$ where $U$ is an orthogonal matrix and $\Lambda = \mathrm{diag}(\lambda_1, ..., \lambda_d)$. If we define $Z:= U\xi$, then $Z \sim \mathcal{N}(0, I_d)$, because of rotation invariance. Hence we obtain
\begin{equation}
    \xi^T H \xi = (U\xi)^T \Lambda (U\xi) = \sum_{i=1}^{d} \lambda_i Z_i^2.
\end{equation}
Thus, we can denote
\begin{equation}
    X := \xi^T H \xi - \mathrm{Tr}[H] = \sum_{i=1}^{d} \lambda_i(Z_i^2 - 1).
\end{equation}
Note that the moment generating function of $X$ is $M_X(t) := \mathbb{E}[e^{tX}]$, and the cumulant generating function is
\begin{equation}
    K_X(t) := \log M_X(t) = \sum_{r\geq 1}\frac{\kappa_r t^r}{r!}.
    \label{CGF}
\end{equation}
where the coefficients $\kappa_r$ are the cumulants. We can compute $\mathbb{E}[X^n]$ using the following:
\begin{align}
    \mathbb{E}[X^n] 
    &= \left.\frac{d^n}{dt^n} M_X(t)\right|_{t = 0} = \left.\frac{d^n}{dt^n} e^{K_X(t)}\right|_{t=0} \\ 
    &= e^{K_X(0)} \sum_{\pi \in \Pi_n} \prod_{B \in \pi} K_X^{(|B|)}(0) & [~\because \text{Faà di Bruno's formula}~] \\
    &= \sum_{\pi \in \Pi_n} \prod_{B \in \pi} \kappa_{|B|},
    \label{E[X^n]}
\end{align}
where $\Pi_n$ denotes the set of all partitions of the set $\{1, ..., n\}$. \\

Since $Z_i^2 \sim \chi_1^2$ so that $\log \mathbb{E}[e^{tZ_i^2}] = -\frac{1}{2} \log(1 - 2t)$, we obtain
\begin{align}
    K_X(t)
    &= \log \mathbb{E}\left[ \exp\left( t \sum_{i=1}^{d} \lambda_i(Z_i^2 - 1)\right) \right] \\
    &= \log \prod_{i=1}^{d} e^{-t\lambda_i} \mathbb{E}\left[ e^{t\lambda_i Z_i^2} \right] & [~\because \text{independence}~] \\
    &= \sum_{i=1}^{d}\left( - t \lambda_i - \frac{1}{2} \log(1 - 2t\lambda_i) \right).
\end{align}

Since $\log(1 - 2t \lambda_i) = - \sum_{r \geq 1} \frac{(2t \lambda_i)^r}{r}$ for $|2t\lambda| < 1$, we obtain
\begin{equation}
    K_X(t) = \sum_{i=1}^{d} \sum_{r \geq 2} \frac{(2t\lambda_i)^r}{2r} = \sum_{r\geq 2} \frac{2^{r-1}}{r}t^r \sum_{i=1}^{d}\lambda_i^r.
\end{equation}

Matching coefficients with \eqref{CGF} yields
\begin{equation}
    \kappa_1 = 0,~~~~ \kappa_r = 2^{r-1}(r-1)! \mathrm{Tr}[H^r] ~~\text{for}~r\geq 2.
\end{equation}

Therefore we can compute $\mathbb{E}[X^n]$ using \eqref{E[X^n]}, such as
\begin{align}
    \label{E[X]}
    \mathbb{E}[X] &= \kappa_1 = 0. \\
    \label{E[X^2]}
    \mathbb{E}[X^2] &= \kappa_1^2 + \kappa_2 = 2\mathrm{Tr}[H^2]. \\
    \label{E[X^3]}
    \mathbb{E}[X^3] &= \kappa_1^3 + 3\kappa_1 \kappa_2 + \kappa_3 = 8\mathrm{Tr}[H^3]. \\ 
    \label{E[X^4]}
    \mathbb{E}[X^4] &= \kappa_1^4 + 6\kappa_1^2 \kappa_2 + 3\kappa_2^2 + 4\kappa_1 \kappa_3 + \kappa_4 = 48\mathrm{Tr}[H^4] + 12(\mathrm{Tr}[H^2])^2.
\end{align}

\section{Experimental Details}

We employ deep neural networks to approximate the solution for all benchmark problems. Following the experimental setup in \citet{ye2024fbsjnn}, the model for the partial integro-differential equation consists of two hidden layers with 256 neurons each and utilizes the Leaky-ReLU activation function. It is trained for 10,000 iterations using the Adam optimizer with a piecewise-constant scheduler. For the remaining benchmarks, we adopt a deeper configuration. These models comprise four hidden layers with 512 neurons each and the Mish activation function. Training is conducted for 100,000 iterations using the Adam optimizer with a cosine decay scheduler and an initial learning rate of $10^{-3}$. We conduct experiments under two different settings for terminal conditions, namely hard and soft constraints. For the stochastic simulations, we use a trajectory batch size of 64, translating to 64 realizations of the underlying Brownian motions. All simulations are implemented using the JAX library with 32-bit floating-point precision (float32) and performed on an NVIDIA RTX 6000 Ada Generation GPU, except for the experiment in Section \ref{debiased_one_step_losses}, which uses 64-bit floating-point precision (float64) on an NVIDIA A100 GPU due to the increased numerical sensitivity of replacing Shotgun's one-step loss with the debiased loss.

We fix $N=100$ for all experiments to maintain consistent input dimensions, except for the Shotgun method. For Shotgun, we strictly follow the original literature by setting $\tau = 4^{-5}$ and $N=10$. Despite the shorter trajectory, the inherent $\delta$ parameter provides sufficient regularization to compensate for the reduced discretization, thereby preserving both gradient flow and accuracy.

\subsection{Relative $L_{2}$ error metric}
To evaluate performance, test trajectories are generated using the exact solution when available; otherwise, they are constructed via a Brownian Bridge to ensure high-fidelity forward trajectories. The reported accuracy is the average relative $L_{2}$ error (RL2) calculated across 256 test trajectories, defined as:

\begin{equation}
    \text{RL2} = \sqrt{\frac{\sum_{n=0}^{N}|u(t_n, X_n)-u_\theta(t_n, X_n)|^2}{\sum_{n=0}^{N}|u(t_n, X_n)|^2}}
\end{equation}

\subsection{Benchmark equations}
\label{Benchmark_Equations}
In this section, we focus on benchmark PDEs within the general class given in \eqref{PDE}.

\textbf{Hamilton--Jacobi--Bellman Equation (HJB).} We consider the $d$-dimensional quadratic HJB equation. For $t \in [0, T_{\text{end}}]$ and $x \in \mathbb{R}^{d}$, the PDE is defined as:
\begin{equation}
    \partial_t u(t, x) + \mathrm{Tr}[\nabla^2 u(t, x)] = \|\nabla u(t, x)\|^{2}.
\end{equation}
with the terminal condition $u(T_{\text{end}}, x) = g(x)$. By \eqref{FBSDE}, the above PDE admits the following FBSDE formulation:
\begin{align}
dX_{t} &= \sqrt{2} dW_{t}.  \\
dY_{t} &= \|Z_{t}\|^{2} dt + \sqrt{2} Z_{t}^T dW_{t}.
\end{align}
where $X_0 = \xi \in \mathbb{R}^d$ and $Y_{T_{\text{end}}} = g(X_{T_{\text{end}}})$. For our experiments, we set $d=100$, $T_{\text{end}}=1$, $\xi = (0,0,...,0) \in \mathbb{R}^{100}$, and $g(x)=\ln(0.5(1+\|\mathbf{x}\|^{2}))$. The exact solution to this equation is given by 
\begin{equation}
    u(t,x)=-\ln \left (\mathbb{E}\left [\exp \left (-g(x+\sqrt{2}W_{T_{\text{end}}-t}\right )\right ]\right ).
\end{equation}

\textbf{Black--Scholes--Barenblatt Equation (BSB).} We consider the $d$-dimensional BSB equation:
\begin{equation}
    \partial_t u + \frac{1}{2}\mathrm{Tr[\alpha^2 \operatorname{diag}(x^2) \nabla^2 u(t, x)]} = r(u(t, x) - \nabla u(t, x)^T x).
\end{equation}

where $t \in [0,T_{\text{end}}]$ and $x \in \mathbb{R}^{d}$, with the terminal condition $u(T_{\text{end}}, x) = g(x)$. Applying \eqref{FBSDE} yields the following FBSDE formulation:
\begin{align}
    dX_{t} &= \alpha \operatorname{diag}(X_{t}) dW_{t} \\
    dY_{t} &= r(Y_{t} - Z_{t}^T X_{t}) dt + \alpha Z_{t}^T \operatorname{diag}(X_{t}) dW_{t},
\end{align}
where $X_0 = \xi \in \mathbb{R}^d$ and $Y_{T_{\text{end}}} = g(X_{T_{\text{end}}})$. For our experiments, we set $d = 100$, $T_{\text{end}} = 1$, $\xi = (1, 0.5, 1, 0.5, ..., 1, 0.5)\in\mathbb{R}^{d}$, $\alpha = 0.4$, $r = 0.05$, and $g(x) = \|x\|^2$. The exact solution to this equation is given by
\begin{equation}
    u(t,x) = \exp \left( (r + \alpha^2)(T_{\text{end}}-t) \right)g(x).
\end{equation}

\textbf{Allen--Cahn Equation (AC)}.
We consider the $d$-dimensional AC equation, which is a semilinear parabolic PDE. For $t \in [0,T_{\text{end}}]$ and $x \in \mathbb{R}^{d}$, the equation is defined as:
\begin{equation}
    \frac{\partial u}{\partial t} + \frac{1}{2} \Delta u = u^3 - u,
\end{equation}
with the terminal condition $u(T_{\text{end}},x)=g(x)$. From \eqref{FBSDE}, we have:
\begin{align}
    dX_{t} &= dW_{t}, \\
    dY_{t} &= (Y^3_{t} - Y_{t})dt + Z_{t}^T dW_{t},
\end{align}
where $X_0 = \xi \in \mathbb{R}^d$ and $Y_{T_{\text{end}}} = g(X_{T_{\text{end}}})$. For our experiments, we set $d = 20$, $T_{\text{end}} = 0.3$, $\xi = (0,0,...,0) \in \mathbb{R}^{20}$, and $g(x) = (2+0.4\|x\|^{2})^{-1}$.

Due to the absence of an exact solution for the AC equation, the relative error in Table~\ref{RL2_RLT0} was calculated at $t=0$ by comparing the model's output with a reference value of 0.30879. This benchmark value was obtained via the Branching Diffusion Method \citep{MR3993178}, which is considered an unbiased approach for this problem.

\subsection{Fully-Coupled FBSDE (BZ)}
We consider the following non-linear PDE:
\begin{equation}
    \partial_{t}u(t, x) + \frac{1}{2} \mathrm{Tr} [\sigma (t,x,u)^{T} \nabla^{2}u(t, x) \sigma (t,x,u)] = \phi(t,x,u),
\end{equation}
with the terminal condition $u(T_{\text{end}}, x) = g(x)$, where the respective terms are given by:
\begin{align}
    \sigma (t,x,u) &= \alpha u(t,x) I_{d}, \\
    \phi(t,x,u) &= ru - \frac{1}{2}e^{-3r(T_{\text{end}}-t)} \alpha^{2} \left( D \sum_{j=1}^{d} \sin{(x_{j})} \right)^{3}.
\end{align}
The PDE can be reformulated as the following fully-coupled FBSDE:
\label{BZ_definition}
\begin{align}
    dX_{t} &=\alpha Y_{t}dW_{t}, \\
    dY_{t} &=  \left[r Y_{t} - \frac{1}{2} e^{-3r(T_{\text{end}}-t)} \alpha^{2} \left( D \sum_{j=1}^{d} \sin(X_{j,t}) \right)^3 \right] dt + Z_t^T \sigma(t, X_t, Y_t) d W_{t},
\end{align}
where $X_0 = \xi \in \mathbb{R}^d$ and $Y_{T_{\text{end}}} = g(X_{T_{\text{end}}})$. For our experiments, we set $d=100$, $T_{\text{end}}=1$, $r=0.1$, $\alpha=0.3$, $D=0.1$, $\xi = (\pi/2, \pi/2, ..., \pi/2)$, and $g(x) = D\sum_{j=1}^{d} \sin(x_j)$. The above FBSDE has the exact solution: 
\begin{equation}
    u(t,x) = e^{-r(T_{\text{end}}-t)} D \sum_{j=1}^{d} \sin (x_j). 
\end{equation}

The BZ is introduced to evaluate the algorithm's performance on a fully-coupled system, where the forward and backward dynamics are interdependent. This formulation is adapted from \citet{bender2008time} and has been utilized as a benchmark for high-dimensional BSDE solvers, as mentioned in \citet{park2025integration}.

\subsection{Partial integro-differential equation (PIDE)}
\label{PIDE_definition}
We consider a high-dimensional PIDE that incorporates jump components. For $t\in [0, T_{\text{end}}]$ and $x \in \mathbb{R}^d$, the governing equation is defined as follows:
\begin{equation}
\begin{cases}
    \dfrac{\partial u}{\partial t}(t, x) + \dfrac{1}{2} \operatorname{Tr}(\tau^2 \nabla_x^2 u(t, x)) + \left\langle \dfrac{1}{2} \epsilon x, \nabla_x u(t, x) \right\rangle \\
    \quad + \displaystyle \int_{\mathbb{R}} (u(t, x+e) - u(t, x) - \langle e, \nabla_x u(t, x) \rangle) \nu(de)  = \lambda(\mu_\phi^2 + \sigma_\phi^2) + \tau^2 + \dfrac{\epsilon}{d} \|x\|^2, \\
    u(T_{\text{end}}, x) = \dfrac{1}{d} \|x\|^2,
\end{cases}
\end{equation}
where $\nu(de)$ denotes the Lévy measure. By applying the non-linear Feynman-Kac formula for jump-diffusion processes, this PIDE can be transformed into a system of FBSDEs:
\begin{align}
    dX_t &= \frac{1}{2} \epsilon X_t dt + \tau dW_t + \int_{\mathbb{R}} e \tilde{\mu}(dt, de),\\
    dY_t &= \left( \lambda (\mu_\phi^2 + \sigma_\phi^2) + \tau^2 + \frac{\epsilon}{d}\|X_t\|^2 \right) dt + \tau Z_t^T dW_t + \int_{\mathbb{R}} \hat{u}_t(e) \tilde{\mu}(dt, de),
\end{align}
where $\tilde{\mu}(dt,de)$ is the compensated Poisson random measure.
Following the Euler-Maruyama discretization, the corresponding FBSDE is given by:
\begin{equation}
\begin{cases}
    \displaystyle X_{n+1} = X_{n} + \dfrac{1}{2} \epsilon X_{n} \Delta t + \tau \Delta W_{n} + \sum_{i=1}^{\mu_{n}}z_{i} - \lambda \mu_{\phi} \Delta t, \\
    \displaystyle Y_{n+1} = Y_{n} + \left( \lambda(\mu_{\phi}^2 + \sigma_{\phi}^2) + \tau^2 + \frac{\epsilon}{d}\|X_n\|^2 \right)\Delta t + \tau Z_n^T \Delta W_n + \sum_{i=1}^{\mu_n}\hat{u}_n(z_i) - \lambda \mu_{\phi} Z_n^T \mathbf{1} \Delta t
\end{cases}
\end{equation}
where $\mu_{n}$ represents the number of jump occurrences within the interval $[t_{n},t_{n+1})$, which is a Poisson-distributed random variable with intensity $\lambda \Delta t$. For each jump $i=1, ..., \mu_{n}$, the jump size $z_{i}$ is an independent random variable drawn from the probability distribution $\phi(e)$, modeled as a normal distribution $\mathcal{N}(\mu_{\phi}, \sigma_{\phi}^{2})$. For our experiments, we set $d = 100$, $T_{\text{end}} = 1$, $\epsilon = 0.1$, $\tau = 0.1$, $\lambda = 0.01$, $\mu_{\phi} = 0.01$, and $\sigma_{\phi} = 0.01$. The true solution to this equation is $u(t,x) = \|x\|^{2}/d$.

We include the PIDE to test how the algorithm handles jump-diffusion processes, which are not present in standard PDEs. Although this specific case was not the primary focus during our scheme's development, we use it as a benchmark to ensure the method remains effective across various mathematical structures, including non-local operators.

\section{Discussion on Efficient Higher-Order Differentiation}
\label{app:efficient-higher-order-differentiation}

Throughout the main experiments, all derivatives are computed using reverse-mode automatic differentiation. This provides a simple and uniform implementation across all methods, but it does not necessarily represent the most efficient way to compute higher-order derivatives. In parallel, several recent works have proposed more efficient automatic-differentiation procedures for higher-order differential operators \citep{shi2024stochastic, dangel2025collapsing}.

Among these directions, Forward Laplacian provides an efficient exact procedure for computing neural-network Laplacians \citep{li2024computational}. To examine the effect of such implementation-level acceleration, we additionally apply Forward Laplacian to the weighted-Laplacian computations required by FS-PINNs and Heun-BSDE. Compared with the reverse-mode automatic-differentiation implementation used in the main experiments, Forward Laplacian accelerates FS-PINNs and Heun-BSDE by approximately $3\times$. Nevertheless, both methods remain clearly slower than Un-EM-BSDE, indicating that specialized higher-order differentiation reduces but does not eliminate the computational overhead of second-order baselines.

\section{Further Experiments}

\subsection{Ablation on $M_1$ and $M_2$}
\label{ablation}
Our variance expression in $\eqref{Un-EM-BSDE_variance}$ consists of two nonnegative terms weighted by $1/M_1 + 1/M_2$ and $1/(M_1M_2)$. Fixing the total sampling budget $M_1 + M_2 = K$, we have $1/M_1 + 1/M_2 = K/(M_1M_2)$; thus the variance is minimized by maximizing $M_1M_2$, attained at $M_1 = M_2$ by AM--GM.

\begin{table*}[t]
\centering
    \captionsetup{justification=centering}
    \caption{Performance comparison for $M_1 = M_2 \in \{1,2,5,10,20,50\}$ under soft and hard constraints.
Values show mean RL2 error $(\times 10^{-2})$ with standard deviation $(\times 10^{-2})$ over three random seeds.} 
\label{ablation_table}
\centering
\small
\begin{tabular}{lcccccc}
\toprule
Cases & $M_1 = M_2 = 1$ & $M_1 = M_2 = 2$ & $M_1 = M_2 = 5$ & $M_1 = M_2 = 10$ & $M_1 = M_2 = 20$ & $M_1 = M_2 = 50$ \\
\midrule

\multicolumn{7}{c}{\textbf{Soft constraint}} \\
\midrule
HJB & $0.1464$\tiny{$\pm 0.0105$} & $0.1374$\tiny{$\pm 0.0032$} & $0.1348$\tiny{$\pm0.0153$} & $0.1406$\tiny{$\pm0.0100$} & $0.1433$\tiny{$\pm0.0167$} & $0.1316$\tiny{$\pm0.0131$} \\
BSB & $0.0805$\tiny{$\pm0.0049$} & $0.0829$\tiny{$\pm 0.0023$} & $0.0814$\tiny{$\pm 0.0026$} & $0.0842$\tiny{$\pm0.0063$} & $0.0869$\tiny{$\pm0.0023$} & $0.0821$\tiny{$\pm0.0008$}\\

\midrule
\multicolumn{7}{c}{\textbf{Hard constraint}} \\
\midrule

HJB & $0.3741$\tiny{$\pm0.0882$} & $0.1526$\tiny{$\pm0.0038$} & $0.1444$\tiny{$\pm0.0275$} & $0.1378$\tiny{$\pm0.0056$} & $0.1144$\tiny{$\pm0.0224$} & $0.1177$\tiny{$\pm0.0085$} \\
BSB & $0.0205$\tiny{$\pm0.0001$} & $0.0154$\tiny{$\pm0.0003$} & $0.0120$\tiny{$\pm0.0004$} & $0.0131$\tiny{$\pm0.0017$} & $0.0118$\tiny{$\pm0.0015$} & $0.0110$\tiny{$\pm0.0014$} \\

\bottomrule
\end{tabular}
\end{table*}

We therefore focus on balanced configurations and report results for $M_1 = M_2 \in \{1, 2, 5, 10, 20, 50\}$ in Table~\ref{ablation_table}. The performance is largely stable across these values, but $M_1 = M_2 = 5$ gives the most favorable accuracy--time balance overall. In particular, while increasing $M_1=M_2$ monotonically increases the per-iteration sampling cost, the wall-clock training time scales moderately with $M_1=M_2$: $M_1=M_2=1$ requires $1.52\times$ the training time of EM-BSDE,
$M_1=M_2=2$ requires $1.55\times$, $M_1=M_2=5$ requires $1.79\times$,
$M_1=M_2=10$ requires $2.22\times$, $M_1=M_2=20$ requires $3.15\times$, and $M_1=M_2=50$ requires $7.51\times$. Given that larger values provide limited additional gains in RL2 relative to their increased cost, we adopt $M_1 = M_2 = 5$ as the default setting throughout the main paper.

\subsection{Additional visualizations}
To further demonstrate the superior performance of our method, we provide additional visualizations of the learned trajectories for the HJB and BSB equations.
The visualizations for the HJB equation under soft and hard constraint settings are presented in Figures~\ref{fig:hjb_soft} and~\ref{fig:hjb_hard}, respectively. Similarly, the corresponding results for the BSB equation are illustrated in Figures~\ref{fig:bsb_soft} and~\ref{fig:bsb_hard}. In Figures~\ref{fig:bsb_soft} (BSB soft constraint), Deep Shotgun has been excluded from the visualization, as the model failed to converge and was unable to produce accurate predictions.
\begin{figure}
  \centering
  \includegraphics[width=0.9\columnwidth]{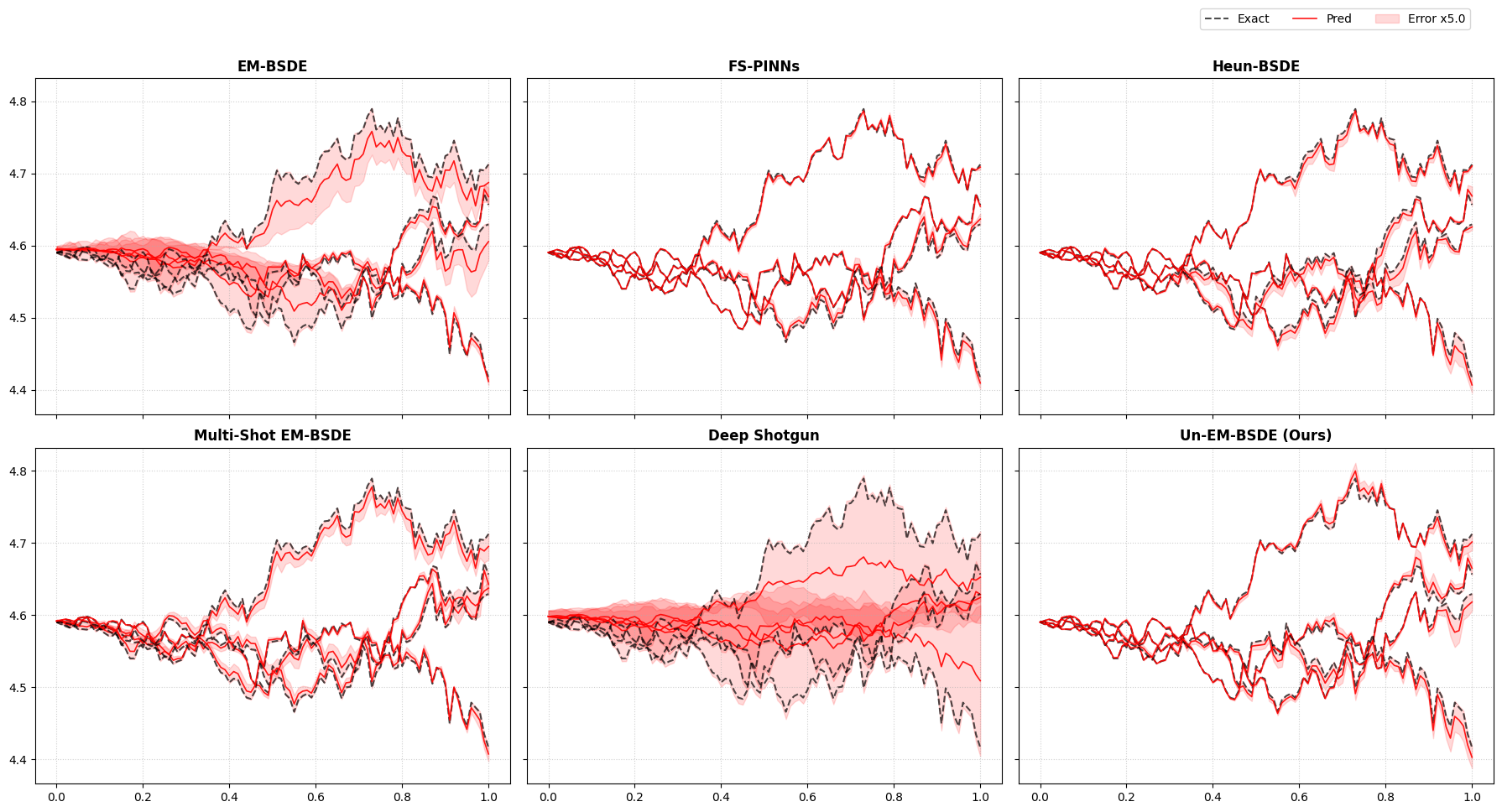}
  \caption{HJB($d=100$) path prediction with error band in the soft-constraint setting.}
  \label{fig:hjb_soft}
\end{figure}
\begin{figure}
  \centering
  \includegraphics[width=0.9\columnwidth]{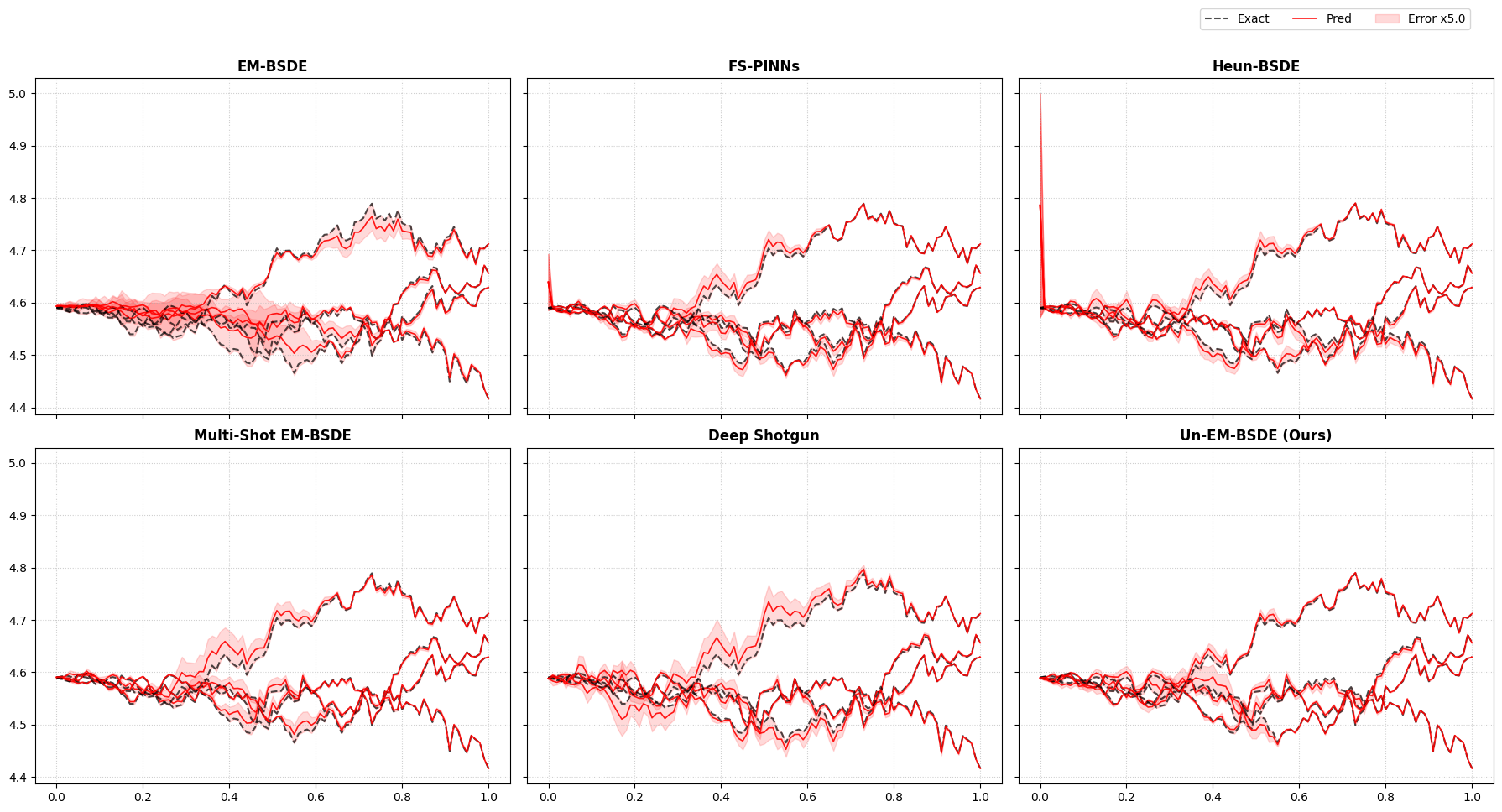}
  \caption{HJB($d=100$) path prediction with error band in the hard-constraint setting.}
  \label{fig:hjb_hard}
\end{figure}
\begin{figure}
  \centering
  \includegraphics[width=0.9\columnwidth]{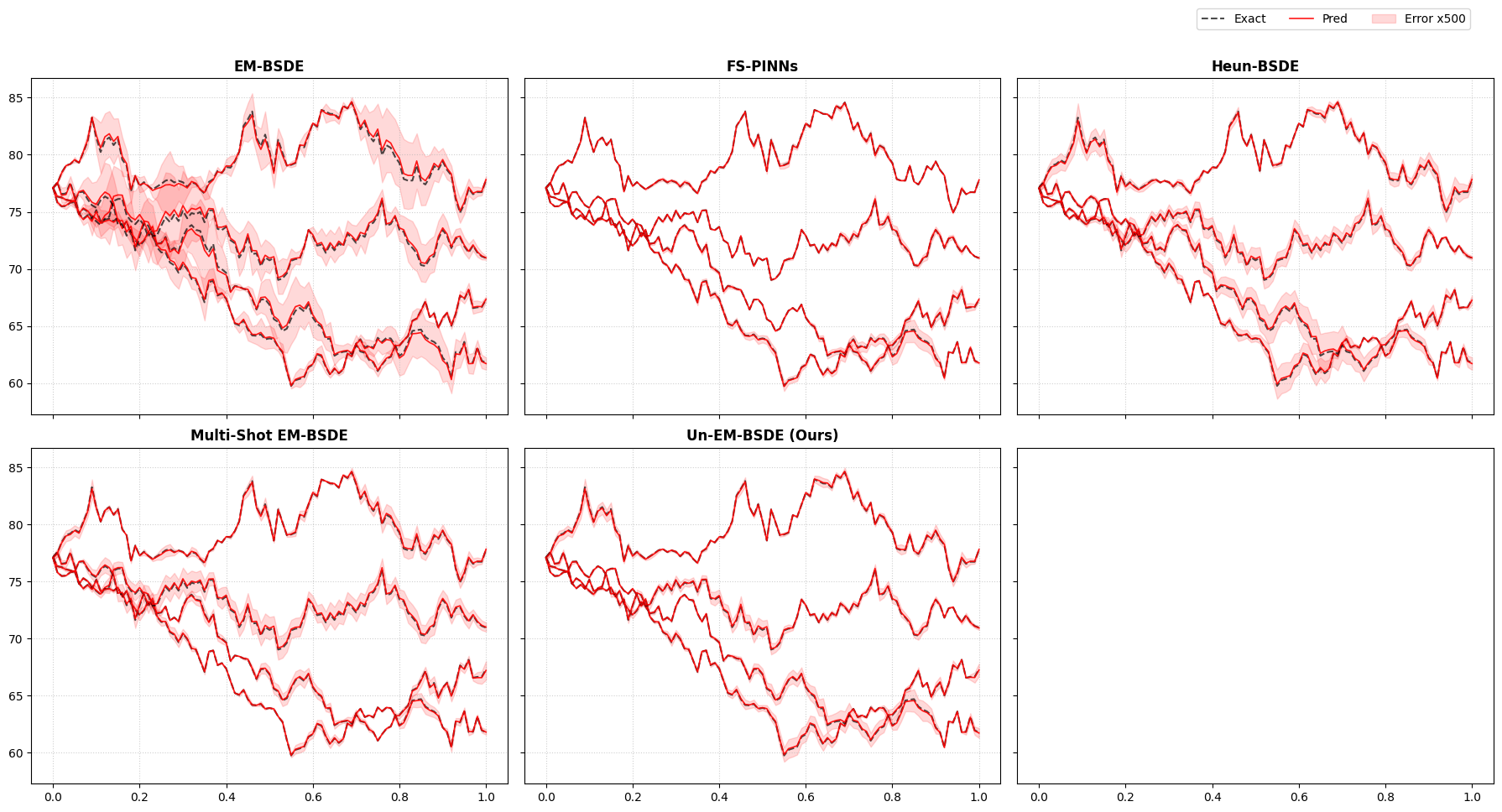}
  \caption{BSB($d=100$) path prediction with error band in the soft-constraint setting.}
  \label{fig:bsb_soft}
\end{figure}
\begin{figure}
  \centering
  \includegraphics[width=0.9\columnwidth]{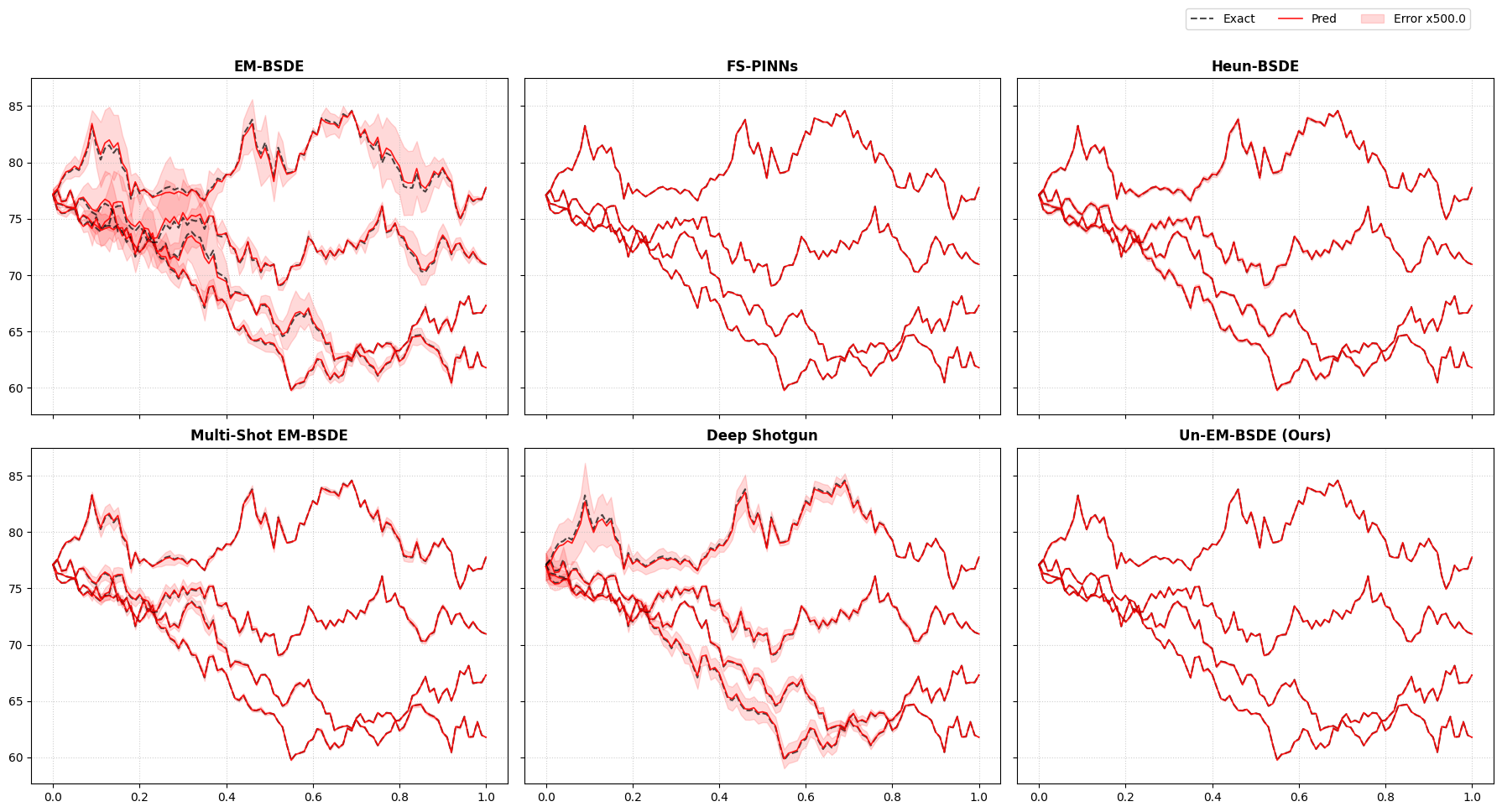}
  \caption{BSB($d=100$) path prediction with error band in the hard-constraint setting.}
  \label{fig:bsb_hard}
\end{figure}


\end{document}